\theoremstyle{plain}
\newtheorem{theorem}{Theorem}
\newtheorem{proposition}{Proposition}
\newtheorem{lemma}{Lemma}
\newtheorem{corollary}{Corollary}
\theoremstyle{definition}
\newtheorem{definition}{Definition}
\newtheorem{assumption}{Assumption}
\Crefname{assumption}{Assumption}{Assumptions}
\newtheorem{remark}{Remark}
\newcommand{\red}[1]{\textcolor{red}{#1}}
\newcommand{\blue}[1]{\textcolor{blue}{\textbf{#1}}}
\crefname{equation}{}{}
\crefname{figure}{Figure}{}
\newcommand{\algo}{\textsf{\small PLS}}
\newcommand{\Cbar}{\bar{C}}
\newcommand{\fhat}{\hat{f}}
\newcommand{\qhat}{\hat{q}}
\newcommand{\rhat}{\hat{r}}
\newcommand{\shat}{\hat{s}}
\newcommand{\Phat}{\hat{P}}
\newcommand{\Rhat}{\widehat{R}}
\newcommand{\Vhat}{\hat{V}}
\newcommand{\thetahat}{\hat{\theta}}
\newcommand{\doubleE}{\mathbb{E}}
\newcommand{\doubleI}{\mathbb{I}}
\newcommand{\doubleP}{\mathbb{P}}
\newcommand{\doubleR}{\mathbb{R}}
\newcommand{\doubleZ}{\mathbb{Z}}
\newcommand{\doubleEhat}{\hat{\doubleE}}
\newcommand{\doublePhat}{\hat{\doubleP}}
\newcommand{\Acal}{\mathcal{A}}
\newcommand{\Bcal}{\mathcal{B}}
\newcommand{\Dcal}{\mathcal{D}}
\newcommand{\Ecal}{\mathcal{E}}
\newcommand{\Ical}{\mathcal{I}}
\newcommand{\Mcal}{\mathcal{M}}
\newcommand{\Ncal}{\mathcal{N}}
\newcommand{\Pcal}{\mathcal{P}}
\newcommand{\Rcal}{\mathcal{R}}
\newcommand{\Scal}{\mathcal{S}}
\newcommand{\Vcal}{\mathcal{V}}
\newcommand{\Xcal}{\mathcal{X}}
\newcommand{\Ycal}{\mathcal{Y}}
\newcommand{\Zcal}{\mathcal{Z}}
\newcommand{\Tcalhat}{\hat{\mathcal{T}}}
\newcommand{\rbr}[1]{\left( #1 \right)}
\newcommand{\cbr}[1]{\left\{ #1 \right\}}
\newcommand{\cbrinline}[1]{\{ #1 \}}
\newcommand{\sbr}[1]{\left[ #1 \right]}
\newcommand{\sbrinline}[1]{[ #1 ]}
\newcommand{\abr}[1]{\left\langle #1 \right\rangle}
\newcommand{\norm}[1]{\left\| #1 \right\|}
\newcommand{\norminline}[1]{\| #1 \|}
\newcommand{\abs}[1]{\left| #1 \right|}
\newcommand{\mysetinline}[2]{\cbrinline{ #1 \,:\, #2 }}
\newcommand{\middlebar}{\,\middle|\,}
\newcommand{\argmin}{\mathop{\mathrm{argmin}}}
\newcommand{\argmax}{\mathop{\mathrm{argmax}}}
\newcommand{\drm}{\mathrm{d}}
\newcommand{\BlackBox}{\rule{1.5ex}{1.5ex}}  
\def\QED{~\rule[-1pt]{5pt}{5pt}\par\medskip}
\newenvironment{proof}{\par\noindent{\bf Proof\ }}{\hfill\BlackBox}
\newtheorem{theorem}{Theorem}
\numberwithin{theorem}{section}
\newtheorem{lemma}{Lemma}
\numberwithin{lemma}{section}
\numberwithin{proposition}{section}
  \newtheorem{corollary}{Corollary}
\numberwithin{corollary}{section}
\newtheorem{remark}{Remark}
\numberwithin{remark}{section}
\numberwithin{definition}{section}
\newtheorem{assumption}{Assumption}
\numberwithin{assumption}{section}
\Crefname{assumption}{Assumption}{Assumptions}
\numberwithin{problem}{section}
\numberwithin{question}{section}
\numberwithin{example}{section}
\numberwithin{property}{section}
\numberwithin{observation}{section}
\newcommand{\kld}[2]{D_{\mathrm{KL}}( #1 \| #2 )}
\newcommand{\elldot}{\dot{\ell}}%
\title{A Provable Approach for End-to-End \\Safe Reinforcement Learning}
\author{%
  Akifumi Wachi\footnotemark[1]
  \And
  Kohei Miyaguchi\footnotemark[1] \And
  Takumi Tanabe\footnotemark[1] \And
  Rei Sato\footnotemark[1]
  \And Youhei Akimoto\footnotemark[2]\,\,\footnotemark[3]
  \AND \textnormal{\footnotemark[1] \ LY Corporation \space \footnotemark[2] \ University of Tsukuba \space \footnotemark[3] \ RIKEN AIP} \\
  \texttt{\{akifumi.wachi, kmiyaguc, takumi.tanabe, sato.rei\}@lycorp.co.jp} \\
  \texttt{akimoto@cs.tsukuba.ac.jp}
}
\begin{document}

\maketitle

\begin{abstract}
    A longstanding goal in safe reinforcement learning (RL) is a method to ensure the safety of a policy throughout the entire process, from learning to operation.
    However, existing safe RL paradigms inherently struggle to achieve this objective.
    We propose a method, called Provably Lifetime Safe RL (\algo), that integrates offline safe RL with safe policy deployment to address this challenge.
    Our proposed method learns a policy offline using return-conditioned supervised learning and then deploys the resulting policy while cautiously optimizing a limited set of parameters, known as target returns, using Gaussian processes (GPs).
    Theoretically, we justify the use of GPs by analyzing the mathematical relationship between target and actual returns.
    We then prove that \algo~finds near-optimal target returns while guaranteeing safety with high probability.
    Empirically, we demonstrate that \algo~outperforms baselines both in safety and reward performance, thereby achieving the longstanding goal to obtain high rewards while ensuring the safety of a policy throughout the lifetime from learning to operation.
\end{abstract}

\section{Introduction}
\label{sec:introduction}

Reinforcement learning (RL) has exhibited remarkable capabilities in a wide range of real problems, including robotics~\cite{levine2016end}, data center cooling~\cite{li2019transforming}, finance~\cite{hambly2023recent}, and healthcare~\cite{yu2021reinforcement}.
RL has attracted significant attention through its successful deployment in language models~\cite{ouyang2022training,guo2025deepseek} or diffusion models~\cite{black2024training}.
As RL becomes a core component of advanced AI systems that affect our daily lives, ensuring the safety of these systems has emerged as a critical concern.
Hence, while harnessing the immense potential of RL, we must simultaneously address and mitigate safety concerns~\cite{amodei2016concrete}.

Safe RL~\cite{garcia2015comprehensive,gu2024review} is a fundamental and powerful paradigm for incorporating explicit safety considerations into RL.
Given its wide range of promising real-world applications, safe RL naturally spans a broad scope and involves several critical considerations in its formulation.
For example, design choices must be made regarding the desired level of safety (e.g., safety guarantees are required in expectation or with high probability), the phase in which safety constraints are enforced (e.g., post-convergence or even during training), and other related aspects~\cite{wachi2024survey,krasowski2022provably}.

A longstanding goal in safe RL is to develop a methodology with a safety guarantee throughout the entire process, from learning to operation. 
However, existing safe RL paradigms inherently struggle to achieve this goal.
In online safe RL, where an agent learns its policy while interacting with the environment, ensuring safety is especially challenging during the initial phases of policy learning.
While safe exploration~\cite{turchetta2016safe}, sim-to-real safe RL~\cite{hsu2023sim}, or end-to-end safe RL~\cite{cheng2019end} have been actively studied, they typically rely on strong assumptions, such as (partially) known state transitions.
Also, in offline safe RL, where a policy is learned from a pre-collected dataset, it remains difficult to deploy a safe policy in a real environment due to distribution mismatch issues between the offline data and the actual environment, even though training can proceed without incurring any immediate safety risks.

\textbf{Our contributions.\space}
We propose Provably Lifetime Safe RL (\algo), an algorithm designed to address the longstanding goal in safe RL.
\algo~integrates \textit{offline} policy learning with \textit{online} policy evaluation and adaptation with high probability safety guarantee, as illustrated in \cref{fig:concept}.
Specifically, \algo~begins by training a policy using an offline safe RL algorithm based on return-conditioned supervised learning (RCSL).
Given this resulting return-conditioned policy, \algo~then seeks to optimize a set of target returns by maximizing the reward return subject to a safety constraint during actual environmental interaction.
Through rigorous analysis, we demonstrate that leveraging Gaussian processes (GPs) for this optimization is theoretically sound, which enables \algo~to optimize target returns in a Bayesian optimization framework.
We further prove that, with high probability,
the resulting target returns are near-optimal while guaranteeing safety.
Finally, empirical results demonstrate that 1) \algo~outperforms baselines in both safety and task performance, and 2) \algo~learns a policy that achieves high rewards while ensuring safety throughout the entire process from learning to operation.

\begin{figure}[t]
    \centering
    \includegraphics[width=0.95\linewidth]{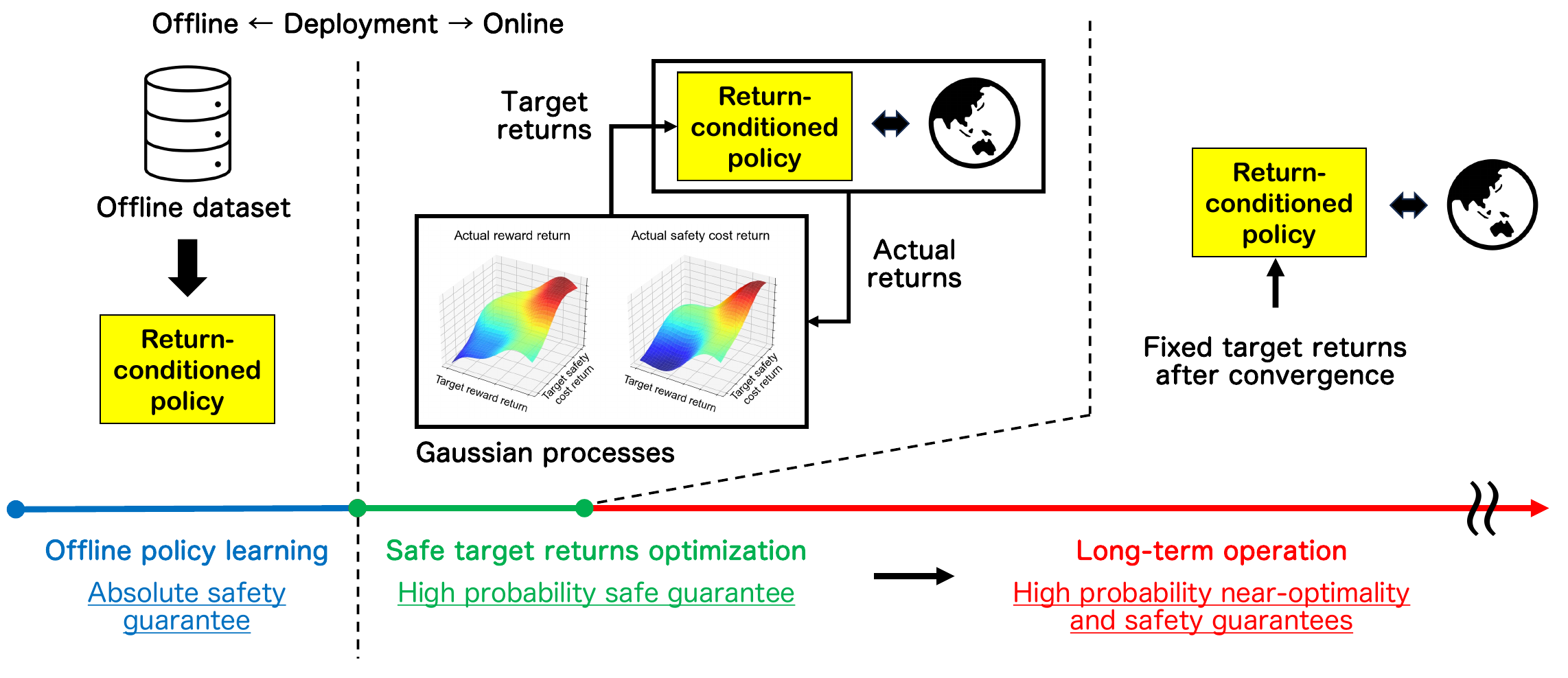}
    \caption{A conceptual illustration of \algo. After learning a return-conditioned policy using offline safe RL, \algo~optimizes target returns through safe online policy evaluation via Gaussian processes. A key advantage of \algo~is that safety is guaranteed at least with high probability in the entire process.}
    \label{fig:concept}
\end{figure}

\section{Related Work}
\label{sec:related}

Safe RL~\citep{garcia2015comprehensive} is a promising approach to bridge the gap between RL and critical decision-making problems related to safety.
A constrained Markov decision process (CMDP, \cite{altman1999constrained}) is a popular model for formulating a safe RL problem.
In this problem, an agent must maximize the expected cumulative reward while guaranteeing that the expected cumulative safety cost is less than a fixed threshold.

\textbf{Online safe RL.}
Although safe RL in CMDP settings has been substantially investigated, most of the existing literature has considered ``online'' settings, where the agent learns while interacting with the environment~\cite{wachi2024survey}.
Prominent algorithms fall into this category, as represented by constrained policy optimization (CPO, \cite{achiam2017constrained}), Lagrangian-based actor-critic~\citep{bhatnagar2012online,borkar2005actor}, and primal-dual policy optimization~\citep{paternain2019safe,pmlr-v119-yang20h}.
In online safe RL, satisfaction of safety constraints is not usually guaranteed during learning, and many unsafe actions may be executed before converging.
To mitigate this issue, researchers have investigated
safe exploration~\citep{turchetta2016safe,berkenkamp2017safe, wachi_sui_snomdp_icml2020}, formal methods~\cite{alshiekh2018safe,fulton2018safe}, or end-to-end safe RL~\citep{cheng2019end,hunt2021verifiably}.
These techniques, however, typically rely on strong assumptions (e.g., known state transitions), and excessively conservative policies tend to result in unsatisfactory performance or inapplicability to complex systems.
Therefore, simultaneously achieving both reward performance and guaranteed safety within the online safe RL paradigm is inherently difficult.

\textbf{Offline safe RL.\space}
Offline reinforcement learning (RL) \citep{levine2020offline, prudencio2023survey} trains an agent exclusively on a fixed dataset of previously collected experiences.
Since the agent does not interact with the environment during training, no potentially unsafe actions are executed during learning.
Extending this setup to incorporate explicit safety requirements has led to the area of offline safe RL ~\citep{le2019batch,lee2021coptidice,wu2021offline,NEURIPS2021_0f65caf0,liu2023constrained}.
In this context, the objective is to maximize expected cumulative reward while satisfying pre-specified safety constraints, all from a static dataset.
Because the policy is never deployed during training, offline safe RL is especially appealing for safety-critical domains.
Le et al.\ \citep{le2019batch} pioneered this direction with an algorithm that optimizes return under safety constraints using only offline data. 
\citet{liu2023constrained} proposed a constrained decision transformer (CDT) that solves safe RL problems by sequence modeling by extending decision transformer \cite{chen2021decision} architectures from unconstrained to constrained RL settings.
Despite such progress, offline safe RL still suffers from a central difficulty: learned policies often become either unsafe or overly conservative, largely due to the intrinsic challenges of off-policy evaluation (OPE) in stateful environments \citep{fubenchmarks}.

\textbf{Versatile safe RL.\space}
Our \algo~is also related to \textit{versatile} safe RL, where an agent needs to incorporate a set of thresholds rather than a single predefined value.
For example, in online safe RL settings, \citet{yao2023constraint} proposes a framework called constraint-conditioned policy optimization (CCPO) that consists of versatile value estimation for approximating value functions under unseen threshold conditions and conditioned variational inference for encoding arbitrary constraint thresholds during policy optimization.
Also, \citet{lin2023safe} proposes an algorithm to address offline safe RL problems with real-time budget constraints.
Finally, \citet{guo2025constraintconditioned} proposes an algorithm called constraint-conditioned actor-critic (CCAC) that models the relations between state-action distributions and safety constraints and then handles out-of-distribution data and adapts to varying constraint thresholds.

\section{Problem Statement}
\label{sec:problem_statement}

We consider a sequential decision-making problem in a finite-horizon constrained Markov decision process (CMDP, \cite{altman1999constrained}) defined as a tuple $\Mcal \coloneqq \abr{\Scal, \Acal, P, H, s_1, r, g}$,
where $\Scal$ is a state space, $\Acal$ is an action space, and $P: \Scal \times \Acal \rightarrow \Delta(\Scal)$ is the state transition probability, where $\Delta(X)$ denotes the probability simplex over the set $X$.
For ease of notation, we define a transition kernel $P_T: \Scal \times \Acal \rightarrow \Delta(\doubleR^2 \times \Scal)$ associated with $\abr{P, r, g}$.
Additionally, $H \in \doubleZ_{+}$ is the fixed finite length of each episode, $s_1 \in \Scal$ is the initial state, $r: \Scal \times \Acal \rightarrow [0, 1]$ is the normalized reward function bounded in $[0, 1]$.
While we assume that the initial state is fixed to $s_1$, our key ideas can be easily extended to the case of initial state distribution $\Delta(\Scal)$.
A key difference from a standard (unconstrained) MDP lies in the (bounded) safety cost function $g: \Scal \times \Acal \rightarrow [0, 1]$.
For succinct notation, we use $s_t$ and $a_t$ to denote the state and action at time $t$, and then define $\xi_t \coloneqq (s_t, a_t, r_t, g_t)$ for all $t \in [H]$, where $r_t = r(s_t, a_t)$ and $g_t = g(s_t, a_t)$.

Episodes are defined as sequences of states, actions, rewards, and safety costs $\Xi\coloneqq \cbr{\xi_t}_{t=1}^H \in (\Scal\times \Acal \times \doubleR^2)^H$, where $s_{t+1} \sim P(\cdot \mid s_t, a_t)$ for all $t \in [H]$.
The $t$-th context $x_t$ of an episode refers to the partial history $x_t\coloneqq (\xi_1, \xi_2, \ldots, \xi_{t-1}, s_t)$
for $1\le t \le H+1$,
where we let $s_{H+1}=\bot$ be a dummy state.
Let
$\Xcal_t\coloneqq(\Scal\times \Acal\times \doubleR^2)^{t-1}\times \Scal$ be the set of all $t$-th contexts
and
$\Xcal \coloneqq\bigcup_{t=1}^H\Xcal_t$ be the sets of all contexts at time steps $1\le t\le H$.

We consider a context-dependent policy $\pi: \Xcal \rightarrow \Delta(\Acal)$ to map a context to an action distribution, subsequently identifying a joint probability distribution $\doubleP^\pi$ on $\Xi$ such that $a_t \sim \pi(x_t)$ and $(r_t, g_t, s_{t+1}) \sim P_T(s_t, a_t)$ for all $t \in [H]$.\footnote{In this paper, we focus on \textit{context-dependent} policies, a broader class than the state-dependent policies that dominate most prior RL work.}
Given a trajectory $\tau = (\xi_1, \xi_2, \ldots, \xi_H)$, returns are given by $\widehat{R}(\tau) \coloneqq \sum_{t=1}^H r(s_t, a_t)$ for reward and $\widehat{G}(\tau) \coloneqq \sum_{t=1}^H g(s_t, a_t)$ for safety cost, respectively.
We now define the following two metrics that are respectively called reward and safety cost returns, where the expectation is taken over trajectories $\tau$ induced by a policy $\pi$ and the transition kernel $P_T$:
\begin{align*}
    J_r\left(\pi\right) = \doubleE_{\tau \sim \pi, P_T} \left[\widehat{R}(\tau) \right] \quad \text{and} \quad
    J_g\left(\pi\right) = \doubleE_{\tau \sim \pi, P_T} \left[\widehat{G}(\tau) \right].
\end{align*}
\textbf{Dataset.}
We assume access to an offline dataset $\Dcal \coloneqq \cbrinline{\Xi^{(i)}}_{i=1}^n$, where $n \in \doubleZ_+$ is a positive integer.
Let $\beta:\Xcal\to\Delta(\Acal)$ denote a behavior policy.
The dataset $\Dcal$ comprises $n$ independent episodes generated by $\beta$; that is, $\Dcal\sim (\doubleP^\beta)^n$.
We also assume that, for any $x_t \in \Xcal$, the behavior action distribution $\beta(x_t)$ is conditionally independent of past rewards $\cbrinline{r_h}_{h=1}^{t-1}$ and safety costs $\cbrinline{g_h}_{h=1}^{t-1}$ given past states and actions $x_t\setminus \cbrinline{r_h, g_h}_{h=1}^{t-1}$.

\textbf{Goal.}
We solve a \textit{versatile} safe RL problem in the CMDP, where the safety threshold $b$ is chosen within a set of candidate thresholds $\Bcal \coloneqq [0, H]$. 
Specifically, our goal is to optimize a single policy $\pi$ that maximizes $J_r(\pi)$ while ensuring that $J_g(\pi)$ is less than a threshold $b \in \Bcal$:
\begin{align}
    \label{eq:saferl}
    \max_{\pi} \ J_r(\pi) \quad \text{subject to}\quad  J_g(\pi) \le b, \quad \forall b \in \Bcal. 
\end{align}
In contrast to the standard safe RL problems, we additionally address two fundamental and important challenges.
First, our goal is to learn, deploy, and operate a policy for solving \eqref{eq:saferl} while guaranteeing safety throughout the entire safe RL process from learning to operation, at least with high probability.
Second, we aim to train a single policy that can adapt to diverse safety thresholds $b \in \Bcal$.

\section{Preliminaries}
\label{sec:preliminaries}

\subsection{Return-Conditioned Supervised Learning}

Return-conditioned supervised learning~(RCSL)
is a methodology to learn the return-conditional distribution of actions in each state and then define a policy by sampling from the action distribution with high returns.
RCSL was first proposed in online RL settings~\citep{schmidhuber2019reinforcement,srivastava2019training,kumar2019reward} and was then extended to offline RL settings~\citep{chen2021decision,emmonsrvs}.
In offline RL settings, RCSL aims at estimating the return-conditioned behavior~(RCB) policy $\beta_R(a \mid x)\coloneqq \doubleP^\beta(a_t=a \mid x_t=x,\Rhat=R)$; that is, 
the action distribution conditioned on the return $\Rhat=R \in [0,H]$ and the context $x_t=x\in\Xcal$.
According to the Bayes' rule, the RCB policy $\beta_R: \Xcal \to \Delta(\Acal)$
is written as the importance-weighted behavior policy
\begin{align}
    \drm \beta_R(a \mid x)
    = f(R \mid x,a) / f(R \mid x) \cdot \drm \beta(a \mid x),
    \label{eq:rcb_bayes_rule_main}
\end{align}
where
$f(R \mid x)\coloneqq \frac{\drm}{\drm R} \doubleP^\beta(\Rhat\le R \mid x_t=x)$
and $f(R \mid x,a)\coloneqq \frac{\drm}{\drm R} \doubleP^\beta(\Rhat\le R \mid x_t=x,a_t=a)$
respectively denote the conditional probability density functions of the behavior return.\footnote{
    Strictly speaking, the right-hand side of \cref{eq:rcb_bayes_rule_main} can be ill-defined for certain $x\in\Xcal$ and $a\in\Acal$
    if either $f(R \mid x)$ or $f(R \mid x,a)$ are ill-defined, or if $f(R \mid x)=0$.
    For our analysis, however, it suffices to impose \cref{eq:rcb_bayes_rule_main} on $\beta_R$ only when the right-hand side is well-defined.
}

\subsection{Decision Transformer}

Decision transformer~(DT, \cite{chen2021decision}) is a representative instance of the RCSL.
In DT, trajectories are modeled as sequences of states, actions, and returns (i.e., reward-to-go).
DT policies are typically learned using the GPT architecture \citep{radford2018improving} with a causal self-attention mask; thus, action sequences are generated in an autoregressive manner.
The pre-training of DT can be seen as a regularized maximum likelihood estimate (MLE) of the neural network parameters
\begin{align}
    \thetahat=\argmin_{\theta\in\Theta} \cbr{
        -\frac{1}{nH}\sum_{i=1}^n \sum_{t=1}^H\ln p_\theta(a_t^{(i)} \mid x_t^{(i)},\Rhat^{(i)}) + \Phi(\theta)
    },
    \label{eq:dt_policy_main}
\end{align}
where $\Pcal\coloneqq \cbrinline{p_\theta(a \mid x,R)}_{\theta\in\Theta}$ is a parametric model of conditional probability densities, and $\Phi(\theta)\ge 0$ is a penalty term
representing inductive biases in parameter optimization.
The output of DT is then given by $\pi_{\thetahat,R}$,
where 
$\pi_{\theta,R}$ denotes the policy associated with $p_\theta(\cdot \mid \cdot,R)$.

\subsection{Constrained Decision Transformer}

Constrained decision transformer (CDT, \cite{liu2023constrained}) is a promising paradigm that extends the DT to constrained reinforcement learning by conditioning the policy on both reward and safety‐cost returns. 
Specifically, CDT parameterizes a policy to take states, actions, reward returns, and safety cost returns as input tokens, and then generates the same length of predicted actions as output.
Although practical implementations often truncate the input to a fixed context length, we simplify the analysis by assuming that the entire history $\bm{x}_t$ is provided to the model.

In the inference phase, a user specifies a target reward return $R$ and target safety cost return $G$ at the beginning of the episode and iteratively update the target returns for the next time step by $R_{t+1} = R_t - r_t$ and $G_{t+1} = G_t - g_t$ with $R_1 = R$ and $G_1 = G$.
Since the target returns play critical roles in the CDT framework, we explicitly add them in the notations of $\pi$ to emphasize the dependence on the pair of target returns $\bm{z} \coloneqq (R, G)$; that is, let us denote
$\pi_{\thetahat, \bm{z}}(a \mid x)$ and define $\Zcal$ to be the set of all $\bm{z}$ that are feasible.
Crucially, since CDT is a variant of RCSL that extends DT to constrained RL settings, the mathematical discussions are also true with CDT by replacing $R$ with $\bm{z}$, by defining $f(\bm{z} \mid x)$ in \eqref{eq:rcb_bayes_rule_main} or $p_\theta(\cdot \mid \cdot, R, G)$ in \eqref{eq:dt_policy_main}, for example.

\textbf{Safety issues of CDT policies.\space}
Ideally, we desire to align actual returns with target returns; that is, $J_r(\pi_{\thetahat, \bm{z}}) \approx R$ and $J_g (\pi_{\thetahat, \bm{z}}) \approx G$ for $\bm{z} = (R, G)$.
This is why the target reward return $R$ is typically set to be the maximum return included in the offline dataset, while the target safety cost return $G$ is set to be the safety threshold.
Unfortunately, however, the actual returns are \textit{not} necessarily aligned with the correct target returns.
As evidence, \cref{fig:cdt_actual_target} shows the empirical relations between target returns and actual returns of CDT policies.
Specifically, actual returns may differ from corresponding target returns, and their differences vary depending on the tasks or pre-trained CDT models.

\begin{figure*}[t]
    \centering
    \begin{subfigure}[b]{0.31\textwidth}
        \centering
        \includegraphics[width=\textwidth, clip, trim=20 10 20 10]{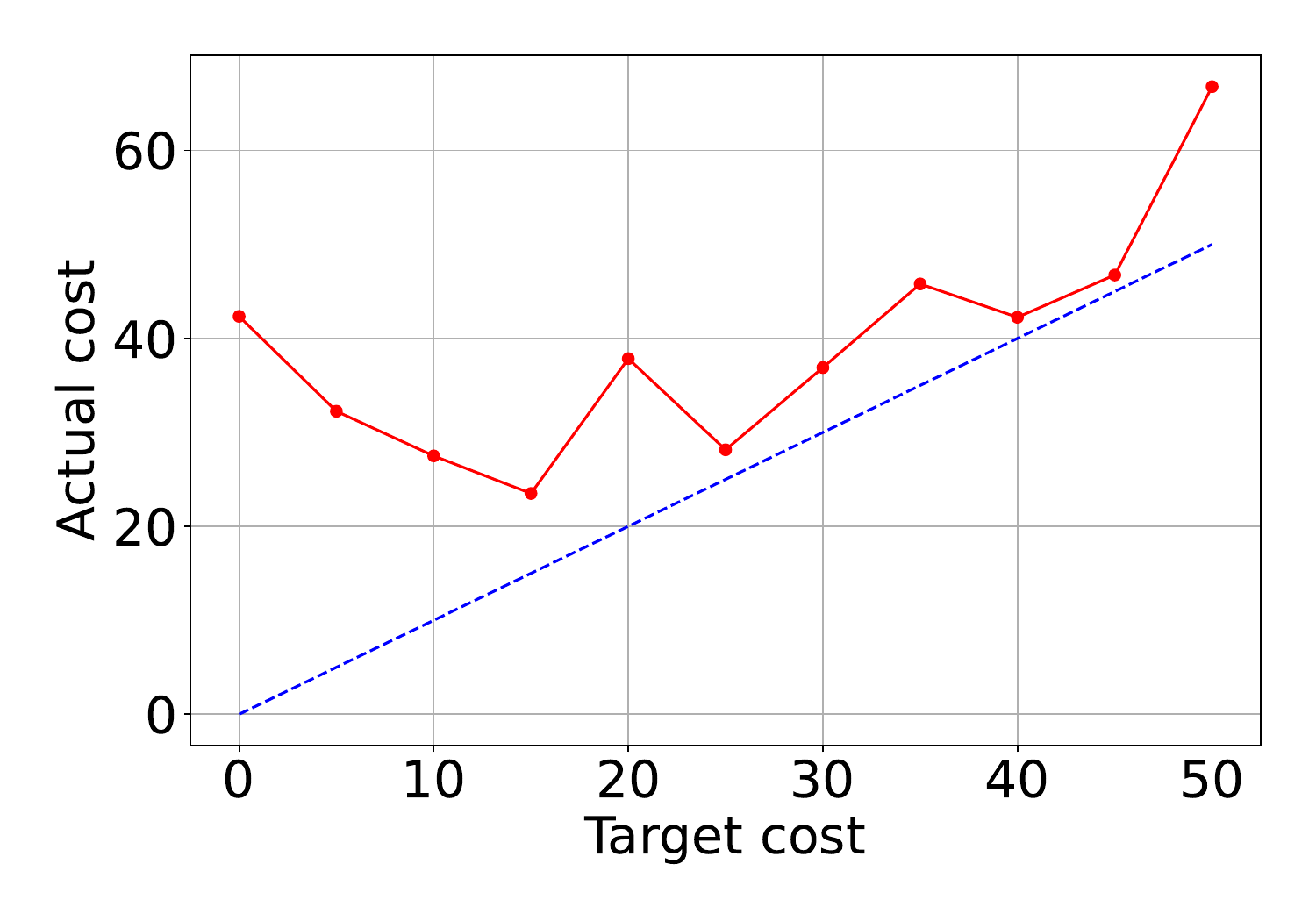}
        \caption{AntCircle}
        \label{fig:antcircle}
    \end{subfigure}
    \hfill
    \begin{subfigure}[b]{0.31\textwidth}
        \centering
        \includegraphics[width=\textwidth, clip, trim=20 10 20 10]{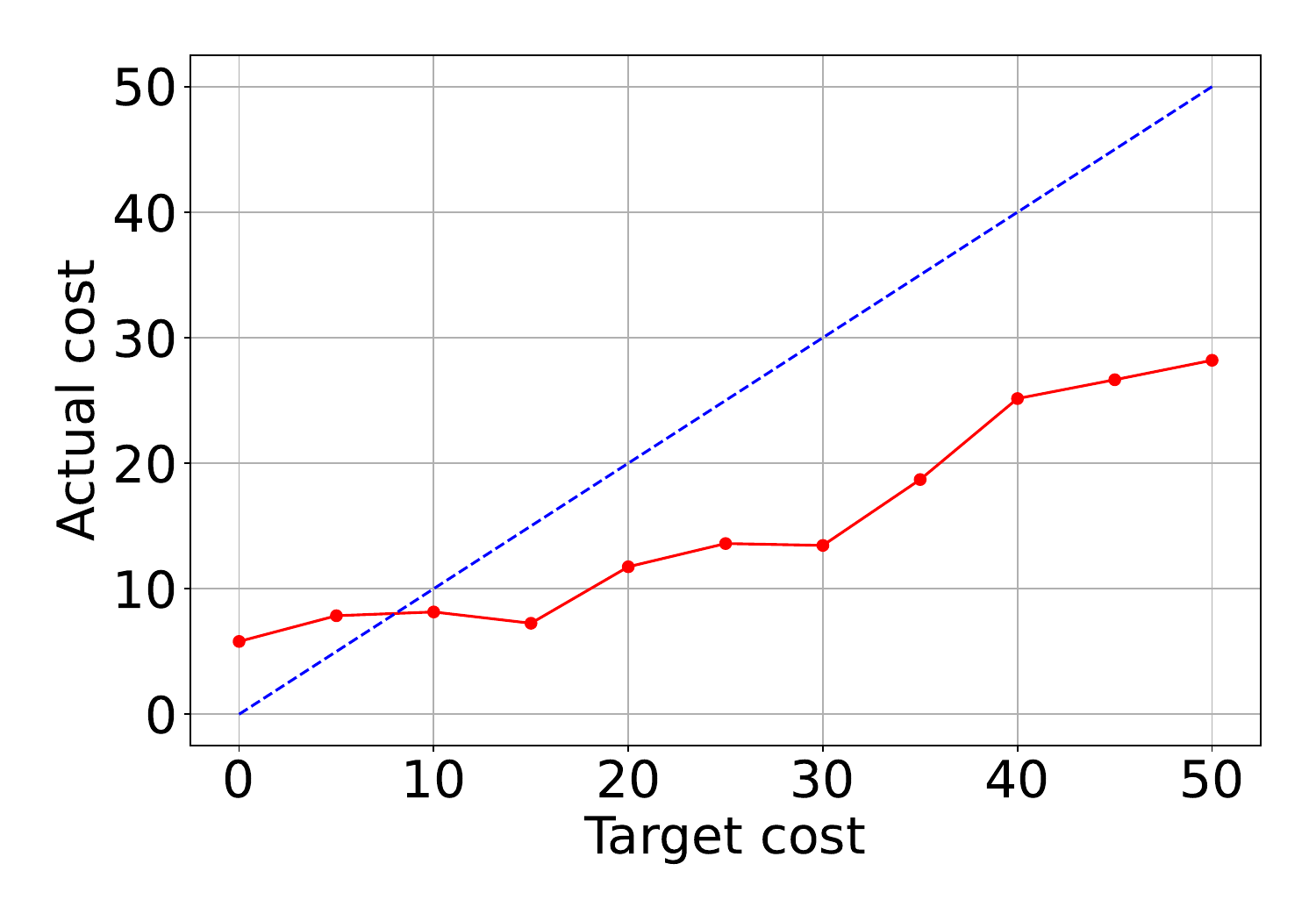}
        \caption{HopperVelocity}
        \label{fig:hopper}
    \end{subfigure}
    \hfill
    \begin{subfigure}[b]{0.31\textwidth}
        \centering
        \includegraphics[width=\textwidth, clip, trim=20 10 20 10]{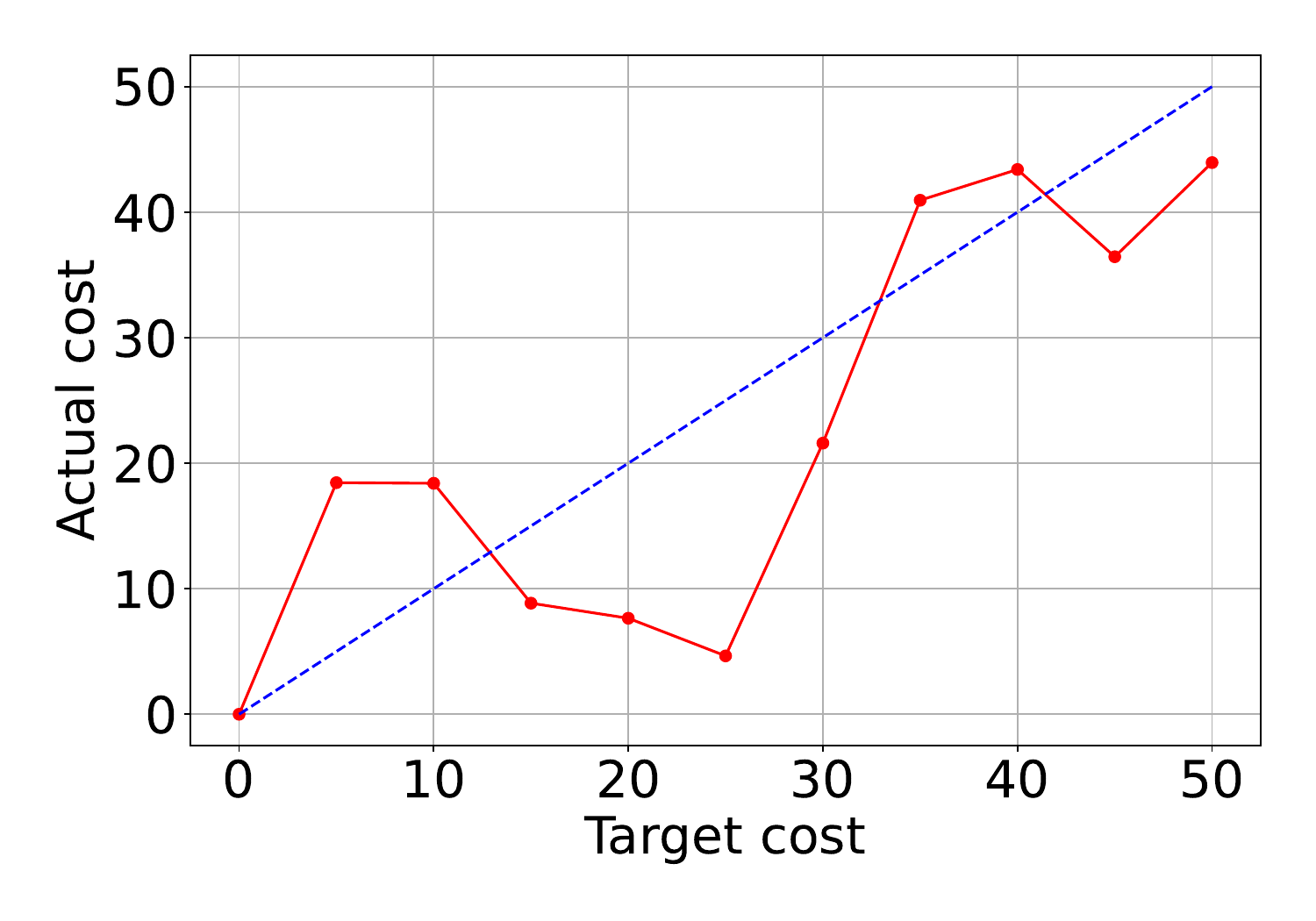}
        \caption{DroneRun}
        \label{fig:dronerun}
    \end{subfigure}
    \caption{Relations between target safety cost return $G$ and actual safety cost return $J_g(\pi)$ of pretrained CDT policies (red lines). Blue dotted lines represent $y = x$. Target reward returns are fixed with the reward returns of the best trajectories included in the offline dataset. Observe that CDT policies suffer from unsuccessful misalignment between actual returns and target returns: (a) constraint violation, (b) excessively conservative behavior, and (c) both.}
    \label{fig:cdt_actual_target}
\end{figure*}

\section{Theoretical Relations Between Target and Actual Returns}

In \cref{fig:cdt_actual_target}, while we observe discrepancies between the target and actual returns, there seem to be some relations that can be captured using data.
Our goal here is to theoretically understand when and how closely the CDT policy $\pi_{\thetahat, \bm{z}}$ achieves the target returns, $\bm{z}$.
Unfortunately, however, given the architecture and learning complexity of CDTs, it is almost impossible to conduct such theoretical analyses without any assumptions; hence, we first list several necessary assumptions.

\begin{assumption}[Near-deterministic transition]
    \label{asm:determinism_main}
    Let $\bm{q} \coloneqq (r, g)$ denote a pair of reward and safety cost.
    Also, let 
    $p_q(\bm{q}' \mid s,a) \coloneqq \frac{\drm}{\drm \bm{q}'}P_T\cbrinline{r \le r', g \le g' \mid s,a}$ be the corresponding density function.
    There exist 
    deterministic maps $\bm{\qhat}(\cdot,\cdot)$, $\shat'(\cdot,\cdot)$, and small constants $\epsilon_q,\epsilon_s,\delta\ge 0$ such that
    $p_q (\bm{q} \mid s,a)\le \epsilon_q$ for all $\norm{\bm{q} -\bm{\qhat}(s,a)}_\infty>\delta$
    and
    $P\cbrinline{s'\neq s'(s,a) \mid s,a}\le \epsilon_s$
    for all $s\in\Scal$ and $a\in\Acal$.
\end{assumption}
Assumption~\ref{asm:determinism_main} is more general than that used in \citet{brandfonbrener2022does} because 1) ours is for multiobjective settings and 2) we consider $\delta$-neighborhood rather than exact equality (i.e., $\delta=0$). 
Especially, the second extension is beneficial since we can analyze theoretical properties of CDT policies optimized based on continuous reward and safety cost, whereas \citet{brandfonbrener2022does} effectively limits the scope of application to the problems with discrete rewards.
This is a significant extension because safe RL problems typically require the agent to deal with safety constraints with continuous safety cost functions and thresholds.

We then make assumptions about the conditional probability density function of the behavior return; that is, $f$ defined in \eqref{eq:rcb_bayes_rule_main}.
With a slight extension from $R$ to $\bm{z}$, we assume the following three conditions on $f(\bm{z} \mid x)$,
with $\bm{z}$ fixed to a value of interest.

\begin{assumption}[Initial coverage]
    \label{asm:coverage_main}
    $\eta_{\bm{z}} \coloneqq f(\bm{z} \mid s_1)>0$.
\end{assumption}

\begin{assumption}[Boundedness]
    \label{asm:boundedness_main}
    $C_{\bm{z}} \coloneqq \sup_{x\in\Xcal}f(\bm{z} \mid x)<\infty$.
\end{assumption}

\begin{assumption}[Continuity]
    \label{asm:continuity_main}
    $c_{\bm{z}}(\delta)\coloneqq \sup_{\bm{z}':\norminline{\bm{z}'-\bm{z}}_\infty\le 2\delta,\,x\in\Xcal}\abs{f(\bm{z}' \mid x)-f(\bm{z} \mid x)}<\infty$ is small.
\end{assumption}

Finally, we assume the expressiveness and regularity of the regularized model $(\Pcal,\Phi)$ in \cref{eq:dt_policy_main} to
control the behavior of the MLE, $\thetahat$.
The following assumptions are fairly standard and borrowed from \citet{van2000asymptotic}; therefore, for ease of understanding, we will make informal assumptions below.
See \cref{app:dt} for the formal presentations of these assumptions.

\begin{assumption}[Soft realizability, \textit{informal}]
    \label{asm:soft_realizability_main}
    There exists $\theta^*\in\Theta$ such that
    $\beta_R$ and $\pi_{\theta^*,R}$ are close to each other
    regarding the KL divergence
    and $\Phi(\theta^*)$ is small.
    See \cref{asm:soft_realizability_app} for a formal version.
\end{assumption}

\begin{assumption}[Regularity, \textit{informal}]
    \label{asm:parametric_regularity_main}
    $\Pcal$ and $\Phi$ are `regular' enough for $\thetahat$ to be asymptotically normal.
    See \cref{asm:parametric_regularity_app} for a formal version.
\end{assumption}

Finally, we present a theorem that characterizes the relation between target and actual returns.

\begin{theorem}[Relation between target and actual returns]
    \label{thm:main_main}
    For any policy $\pi$, let us define $\bm{J}(\pi) \coloneqq (J_r(\pi), J_g(\pi))$.
    Also, let $\pi_{\thetahat,\bm{z}}$ denote the policy obtained by the algorithm, which is characterized by a set of target returns $\bm{z} = (R, G)$. Recall that $n$ is the number of trajectories contained in the offline dataset.  
    Then, under Assumptions~\ref{asm:determinism_main} - \ref{asm:parametric_regularity_main},
    we have
    \begin{align}
        \norm{\bm{J}(\pi_{\thetahat,\bm{z}})- \bm{z}-\frac{H^2}{\sqrt{n}} \bm{\mathcal{F}}(\bm{z})}_\infty \le\varepsilon(\bm{z})+o_P\rbr{\frac1{\sqrt{n}}},
    \end{align}
    where
    $\varepsilon(\bm{z})$ is a small bias function and
    $\bm{\mathcal{F}}:[0, H]^2\to \doubleR^2$ is a sample path of a Gaussian process $\textsf{GP}(0, \bm{k})$,
    whose precise definitions are given in \cref{thm:main_app,thm:variance_mle_app}, respectively.
    Here, $o_P(\cdot)$ is the probabilistic small-o notation,
    i.e., $b_n=o_P(a_n)$ implies $\lim_{n\to\infty} \doubleP\cbr{\abs{b_n/a_n}> \epsilon}=0, \forall \epsilon>0$.
\end{theorem}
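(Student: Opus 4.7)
The plan is to decompose the deviation $\bm{J}(\pi_{\thetahat,\bm{z}})-\bm{z}$ into a \emph{bias} piece that is controlled by \cref{asm:determinism_main,asm:coverage_main,asm:boundedness_main,asm:continuity_main,asm:soft_realizability_main} and absorbed into $\varepsilon(\bm{z})$, and a \emph{fluctuation} piece of order $1/\sqrt{n}$ whose law, as a process indexed by $\bm{z}\in[0,H]^2$, is asymptotically a centered Gaussian process. Concretely I would write the chain
\begin{align*}
\bm{J}(\pi_{\thetahat,\bm{z}})-\bm{z}
&= \underbrace{\bm{J}(\beta_{\bm{z}})-\bm{z}}_{\text{(I) RCB approximation}}
+ \underbrace{\bm{J}(\pi_{\theta^{*},\bm{z}})-\bm{J}(\beta_{\bm{z}})}_{\text{(II) realizability gap}}
+ \underbrace{\bm{J}(\pi_{\thetahat,\bm{z}})-\bm{J}(\pi_{\theta^{*},\bm{z}})}_{\text{(III) MLE fluctuation}},
\end{align*}
and handle the three terms in that order.

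For term~(I), I would use the Bayes rule identity \eqref{eq:rcb_bayes_rule_main} together with near-determinism (\cref{asm:determinism_main}) to show that rolling out $\beta_{\bm{z}}$ produces trajectories whose cumulative $(r,g)$ returns concentrate on $\bm{z}$ up to $O(H\delta + H\epsilon_s + H^2 \epsilon_q)$ plus a term depending on $c_{\bm{z}}(\delta)/\eta_{\bm{z}}$ from \cref{asm:boundedness_main,asm:continuity_main,asm:coverage_main}: intuitively, conditioning on $\widehat{R}=\bm{z}$ forces per-step increments to match a deterministic schedule up to $\delta$. For term~(II), I would bound the return difference by a telescoping argument in the performance difference lemma and then by total variation (Pinsker), so that the KL smallness of \cref{asm:soft_realizability_main} and the $\Phi$-penalty contribution give a controlled contribution that fits inside $\varepsilon(\bm{z})$.

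The substantive work is term~(III), which is where the $H^2/\sqrt{n}\cdot\bm{\mathcal{F}}(\bm{z})$ term is born. Here I would apply a functional Delta method: by \cref{asm:parametric_regularity_main}, $\sqrt{n}(\thetahat-\theta^{*})$ is asymptotically normal with covariance given by the sandwiched Fisher/Hessian expression standard in \citet{van2000asymptotic}; the map $\theta\mapsto \bm{J}(\pi_{\theta,\bm{z}})$ is differentiable with gradient given by the policy-gradient formula, and along a Bellman unrolling of $H$ time steps each gradient picks up at most a factor of $H$, producing the $H^2$ prefactor. Pointwise this immediately yields $\sqrt{n}(\bm{J}(\pi_{\thetahat,\bm{z}})-\bm{J}(\pi_{\theta^{*},\bm{z}}))\Rightarrow \bm{\mathcal{F}}(\bm{z})$ centered Gaussian with covariance $\bm{k}(\bm{z},\bm{z}')$ inherited from the Fisher information and the two policy-gradient functionals. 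The final step is to upgrade pointwise Gaussianity to process-level convergence on $[0,H]^2$: I would verify finite-dimensional convergence by the Cramér--Wold device applied to the (jointly asymptotically normal) MLE, and tightness by checking a Kolmogorov-type modulus bound using continuity of $\bm{z}\mapsto p_\theta(\cdot\mid\cdot,\bm{z})$ inherited from the smoothness assumptions and from \cref{asm:continuity_main}.

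The main obstacle will be this last step: showing that the $1/\sqrt{n}$ fluctuation, viewed as a random function of $\bm{z}$, converges to a \emph{sample path} of a GP rather than only being marginally Gaussian at each $\bm{z}$. In particular, I expect the delicate part to be controlling the $o_P(1/\sqrt{n})$ remainder \emph{uniformly} in $\bm{z}$, which will require equicontinuity of the score functions $\nabla_\theta \ln p_\theta(a\mid x,\bm{z})$ in $\bm{z}$ and a Donsker-type argument for the empirical process underlying the MLE. Once uniform tightness is in hand, Prokhorov plus fidi convergence identifies the limit as $\textsf{GP}(0,\bm{k})$, and combining (I)--(III) yields the stated $\ell_\infty$ bound with $\varepsilon(\bm{z})$ collecting the deterministic bias pieces.
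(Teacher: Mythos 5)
Your proposal follows essentially the same route as the paper: the identical three-term decomposition into RCSL bias, MLE bias, and MLE fluctuation, handled with the same tools for each piece (the Bayes-rule identity and near-determinism with the return-density conditions for term (I), cf.\ \cref{thm:bias_rcsl_app}; the performance-difference lemma plus Pinsker and Jensen under soft realizability for term (II), cf.\ \cref{thm:bias_mle_app}; and $M$-estimator asymptotic normality plus the delta method and the policy-gradient identity, which supplies the $H^2$ factor, for term (III), cf.\ \cref{thm:variance_mle_app}). The only place you overestimate the difficulty is the process-level convergence in term (III): the paper obtains it directly from the functional delta method applied to $\theta\mapsto\{J_j(\pi_{\theta,\bm{z}})\}_{j,\bm{z}}$, so the limit process is the deterministic map $\bm{z}\mapsto U_{\theta^*}(\bm{z})\Ical_{\theta^*}^{-1}$ applied to the single $d$-dimensional Gaussian limit of $\sqrt{n}(\thetahat-\theta^*)$, and no separate Donsker or uniform-tightness argument is required.
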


See \cref{appendix:error_analysis} for its formal statement and complete proof. 
Intuitively, the difference between the target and actual returns
is decomposed into
an unbiased Gaussian process term $H^2\bm{\mathcal{F}}(\bm{z})/\sqrt{n}$, a small bias term $\varepsilon(\bm{z})$, and an asymptotically negligible term $o_P(1/\sqrt{n})$.

\begin{remark}[Smoothness]
    Examining the explicit form of the covariance function $\bm{k}(\cdot,\cdot)$ reveals that $\bm{\mathcal{F}}(\cdot)$ is smooth~(under suitable conditions).
    Specifically, the smoothness of $\bm{\mathcal{F}}(\cdot)$ is known to be closely matches that of $\bm{k}$~(Corollary 1 in \cite{da2023sample}).
    For more details, see \cref{remark:smothness_app}.
\end{remark}

\section{Provably Lifetime Safe Reinforcement Learning}

We finally present Provably Lifetime Safe Reinforcement Learning (\algo), a simple yet powerful approach that advances safe RL toward the longstanding goal of end-to-end safety.

As illustrated in \cref{fig:concept}, \algo~begins with offline policy learning from a pre-collected dataset.
Since RL agents are most prone to violating safety constraints during the early phases of learning, this offline learning step is particularly beneficial for ensuring lifetime safety.
Also, a key idea behind \algo~is the use of a constrained RCSL (e.g., CDT) for this offline policy learning step.
This approach yields a return-conditioned policy that enables control over both reward and safety performance through a few significant parameters.
In the case of a single safety constraint, all we have to do is optimize a two-dimensional target return vector. 
Therefore, this method offers several advantages, including computational efficiency and enhanced controllability of policy behavior.

Hereinafter, we suppose there is a pre-trained policy obtained by constrained RCSL.
For simplicity, we denote such a return-conditioned policy as $\pi_{\bm{z}}$ characterized by target reward and safety cost returns $\bm{z} = (R, G)$ while omitting the neural network parameters $\thetahat$. 

\subsection{Characterizing Reward and Safety Cost Returns via Gaussian Processes}

Guided by \cref{thm:main_main}, we employ GPs to model the mapping from a target return vector $\bm{z} = (R, G)$ to the actual returns $\bm{J}(\pi_{\bm{z}}) \coloneqq (J_r(\pi_{\bm{z}}), J_g(\pi_{\bm{z}}))$.
We formulate this as a supervised learning problem with the dataset $\{(\bm{z}_j, \bm{J}(\pi_{\bm{z}_j}))\}_{j=1}^N$, where $\bm{z}_1, \bm{z}_2, \ldots, \bm{z}_N \in \Zcal$ is a sequence of target returns.
For tractability, we discretize the search space, yielding a finite candidate set $\Zcal$ with cardinality $|\Zcal|$.
While collecting such data, we sequentially choose the next target returns $\bm{z} \in \Zcal$ that maximize the actual reward return $J_r(\pi_{\bm{z}})$ subject to the safety constraint (i.e., $J_g(\pi_{\bm{z}}) \le b)$.
The measured returns are assumed to be perturbed by i.i.d.\ Gaussian noise for sampled inputs $Z_N \coloneqq [\bm{z}_1, \ldots, \bm{z}_N]^\top \subseteq \Zcal$.
Thus, for $\diamond \in \{r, g\}$ (the symbol $\diamond$ is used as a wildcard), we model the noise-perturbed observations by $y_{\diamond, j} = J_\diamond(\pi_{\bm{z}_j}) + w_{\diamond, j}$ with $w_{\diamond, j} \sim \mathcal{N}(0, \nu_\diamond^2)$, for all $j \in [N]$.

A GP is a stochastic process that is fully specified by a mean function and a kernel.
We model the reward and safety cost returns with separate GPs:
\begin{alignat*}{2}
    J_r(\pi_{\bm{z}}) \sim \textsf{GP}(\mu_r(\bm{z}), k_r(\bm{z}, \tilde{\bm{z}})) \quad \text{and} \quad
    J_g(\pi_{\bm{z}}) \sim \textsf{GP}(\mu_g(\bm{z}), k_g(\bm{z}, \tilde{\bm{z}})),
\end{alignat*}
where $\mu_\diamond(\bm{z})$ is a mean function and $k_\diamond(\bm{z}, \tilde{\bm{z}})$ is a covariance function for $\diamond \in \{r, g\}$. 
In principle, 
$J_r(\pi_{\bm{z}})$ and $J_g(\pi_{\bm{z}})$
may be correlated
(i.e., off-diagonal elements in $\bm{k}$ is non-zero in \cref{thm:main_main}), but we ignore these cross-correlations and learn each GP independently for simplicity.

Then, given the previous inputs $Z_N = [\bm{z}_1, \ldots, \bm{z}_N]^\top$ and observations $\bm{y}_{\diamond, N} \coloneqq \{y_{\diamond, 1}, \ldots, y_{\diamond, N} \}$, we can analytically compute a GP posterior characterized by the the mean $\mu_{\diamond, N}(\bm{z}) =  \bm{k}_{\diamond, N}(\bm{z})^\top(\bm{K}_{\diamond, N}+\nu_\diamond^2 \doubleI_N)^{-1} \bm{y}_{\diamond, N}$ and variance
$\sigma^2_{\diamond, N}(\bm{z}) = k_\diamond(\bm{z},\bm{z}) - \bm{k}_{\diamond, N}(\bm{z})^\top \!(\bm{K}_{\diamond, N}+\nu_\diamond^2 \doubleI_N)^{-1} \bm{k}_{\diamond, N}(\bm{z})$,
where $\bm{k}_{\diamond, N}(\bm{z}) = [k_\diamond(\bm{z}_1,\bm{z}), \ldots, k_\diamond(\bm{z}_N, \bm{z})]^\top$ and $\bm{K}_{\diamond, N}$ is the positive definite kernel matrix $[k_\diamond(\bm{z}, \tilde{\bm{z}})]_{\bm{z}, \tilde{\bm{z}} \in Z_N}$, and $\doubleI_N \in \doubleR^{N \times N}$ is the identify matrix.
Finally, we assume that $J_g(\pi_{\bm{z}})$ is $L$-Lipschitz continuous with respect to some distance metric $d(\cdot, \cdot)$ in $\Zcal$.
This assumption is rather mild and is automatically satisfied by many commonly-used kernels~\citep{srinivas2009gaussian,sui2015safe}.

\subsection{Safe Exploration and Optimization of Target Returns}

Our current goal is to find the optimal pair of target returns $\bm{z} = (R, G)$ that maximizes $J_r(\pi_{\bm{z}})$ while guaranteeing the satisfaction of the safety constraint (i.e., $J_g(\pi_{\bm{z}}) \le b$) according to GP-based inferences.
For this purpose, we optimistically sample the next target returns $\bm{z}$ while pessimistically ensuring the satisfaction of the safety constraint, as conducted in \citet{sui2018stagewise}.

A key advantage of using GPs is that we can estimate the uncertainty of the actual returns $J_r$ and $J_g$.
To guarantee, high probability, both constraint satisfaction and reward maximization, for each function $\diamond \in \{r, g\}$, we construct a confidence interval defined as $\Omega_{\diamond,N}(\bm{z}) \coloneqq \left[\,\mu_{\diamond,N-1}(\bm{z}) \pm \alpha_{\diamond,N} \cdot \sigma_{\diamond,N-1}(\bm{z})\,\right]$,
where $\alpha_{\diamond,N} \in \doubleR_+$ is a positive scalar that balances exploration and exploitation.
These coefficients $\alpha_r$ and $\alpha_g$ are crucial in the performance of \algo, and principled choices for these coefficients have been extensively studied in the Bayesian optimization literature (e.g., \cite{srinivas2009gaussian,chowdhury2017kernelized}).
Thus, following \citet{srinivas2009gaussian}, we define
\begin{equation}
    \label{eq:beta}
    \alpha_{r, j} = \alpha_{g, j} = \sqrt{2 \log \bigl(|\Zcal| \, j^2 \pi^2 / (6 \Delta)\bigr)},
\end{equation}
where $\Delta \in [0, 1]$ is the allowed failure probability.
Note that $\pi$ in \cref{eq:beta} is the circle ratio, not a policy.

To expand the set of feasible target returns $\bm{z}$ while satisfying the safety constraint, we use alternative confidence intervals $\Lambda_{N}(\bm{z}) \coloneqq \Lambda_{N-1}(\bm{z}) \cap \Omega_{g, N}(\bm{z})$ with $\Lambda_{0}(\bm{z}) = [0, b]$ so that $\Lambda_{N}$ are sequentially contained in $\Lambda_{N-1}$ for all $N$.
We thus define an upper bound $u_{N}(\bm{z}) \coloneqq \max \Lambda_{N}(\bm{z})$ and a lower bound of $\ell_{N}(\bm{z}) \coloneqq \min \Lambda_{N}(\bm{z})$, respectively.
Note that $u_{N}$ is monotonically non-increasing and $\ell_{N}$ is monotonically non-decreasing, with respect to $N$.

\textbf{Safe exploration.\space}
Using the GP upper confidence bound, we construct the set of \textit{safe} target returns by $\Ycal_N = \bigcup_{\bm{z} \in \Ycal_{N-1}} \bigl\{\bm{z}' \in \Zcal \mid u_{N}(\bm{z}) + L  \cdot d(\bm{z}, \bm{z}') \le b \bigr\}$.
At each iteration, \algo~computes a set of $\bm{z}$ that are likely to increase the number of candidates for safe target returns.
The agent thus picks $\bm{z}$ with the highest uncertainty while satisfying the safety constraint with high probability; that is,
\begin{align}
    \bm{z}_N = \argmax_{\bm{z} \in E_N} \bigl(u_{N}(\bm{z})  - \ell_{N}(\bm{z})\bigr) \quad \text{with} \quad E_N=\{\bm{z}\in \Ycal_N: e_N(\bm{z})>0\},
    \label{eq:safe_exploraiton}
\end{align}
where $e_N(\bm{z}) \coloneqq \big| \big\{ \bm{z}' \in \Zcal \setminus \Ycal_N \mid \ell_{N}(\bm{z}) - L \cdot d(\bm{z}, \bm{z}') \leq b \big\}\big|$.
Intuitively, $e_N(\cdot)$ optimistically quantifies the potential enlargement of the current safe set after obtaining a new sample $\bm{z}$.

\textbf{Reward maximization.}
Safe exploration is terminated under the condition $\max_{\bm{z} \in E_N} \bigl(u_{N}(\bm{z})  - \ell_{N}(\bm{z})\bigr) \le \zeta$, where $\zeta \in \doubleR_+$ is a tolerance parameter.
After fully exploring the set of safe target returns, we turn to maximizing $J_r(\cdot)$ under the safety constraint.
Concretely, we choose the next target returns optimistically within the pessimistically constructed set of safe target returns by
\begin{align}
    \bm{z}_N = \argmax_{\bm{z} \in \Ycal_N} \bigl(\mu_{r, N}(\bm{z}) + \alpha_{r, N} \cdot \sigma_{r, N}(\bm{z})\bigr).
\end{align}

\subsection{Theoretical Guarantees on Safety and Near-optimality}

We provide theoretical results on the overall properties of \algo.
We will make an assumption and then present two theorems on safety and near-optimality.
The assumption below is fairly mild in practice, because we can easily ensure that the return-conditioned policy meets the safety constraint by conservatively choosing small target returns, $R$ and $G$.
See \cref{proof:stageopt} for the full proofs.

\begin{assumption}[Initial safe set]
    \label{asm:initial_safe}
    There exists a singleton seed set $\Zcal_0$ that is known to satisfy the safety constraint; that is, for all $\bm{z} \in \Zcal_0$, $J_g(\pi_{\bm{z}}) \le b$ holds.
\end{assumption}

\begin{theorem}[Safety guarantee]
    \label{theorem:safety}
    At every iteration $j$, suppose that $\alpha_{g, j}$ is set as in \cref{eq:beta} and the target returns $\bm{z}_j$ are chosen within $\Ycal_j$. Then, $J_{g}(\pi_{\bm{z}_j}) \le b$ holds --- i.e., the safety constraint is satisfied --- for all $j \ge 0$, with a probability of at least $1 - \Delta$ .
\end{theorem}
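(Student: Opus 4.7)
The plan is to follow the classic SafeOpt/StageOpt template adapted to the return-conditioned policy setting. The argument decomposes naturally into two ingredients: (i) a uniform-in-$(\bm{z},j)$ Gaussian confidence envelope that sandwiches $J_g(\pi_{\bm{z}})$ inside $\Omega_{g,j}(\bm{z})$ simultaneously for every $\bm{z}\in\Zcal$ and every iteration $j\ge 1$ with probability at least $1-\Delta$; and (ii) an inductive argument showing that on this good event every element of $\Ycal_j$ is truly safe, so in particular any $\bm{z}_j\in\Ycal_j$ is safe. Once these two pieces are in hand, the theorem is immediate.

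For the first ingredient, fix $\bm{z}\in\Zcal$ and $j\ge 1$ and use that under the GP model the conditional distribution of $J_g(\pi_{\bm{z}})$ given the observations $\bm{y}_{g,j-1}$ is Gaussian with mean $\mu_{g,j-1}(\bm{z})$ and variance $\sigma^2_{g,j-1}(\bm{z})$. A standard Gaussian tail bound then gives
\begin{equation*}
\doubleP\!\rbr{\abs{J_g(\pi_{\bm{z}}) - \mu_{g,j-1}(\bm{z})} > \alpha_{g,j}\,\sigma_{g,j-1}(\bm{z})} \le 2e^{-\alpha_{g,j}^2/2}.
\end{equation*}
Substituting $\alpha_{g,j}=\sqrt{2\log(|\Zcal|j^2\pi^2/(6\Delta))}$ from \cref{eq:beta} makes the right-hand side $6\Delta/(|\Zcal|j^2\pi^2)$, and a union bound over $\bm{z}\in\Zcal$ and $j\ge 1$, together with $\sum_{j\ge 1}j^{-2}=\pi^2/6$, yields total failure probability at most $\Delta$. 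On the complementary event $\Ecal$, $J_g(\pi_{\bm{z}})\in\Omega_{g,j}(\bm{z})$ for all $(\bm{z},j)$, and because $\Lambda_j(\bm{z})=\Lambda_{j-1}(\bm{z})\cap\Omega_{g,j}(\bm{z})$ only intersects sets that already contain $J_g(\pi_{\bm{z}})$, we also obtain $J_g(\pi_{\bm{z}})\le u_j(\bm{z})$ throughout.

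For the second ingredient, I would induct on $j$ to prove $\Ycal_j\subseteq\mysetinline{\bm{z}\in\Zcal}{J_g(\pi_{\bm{z}})\le b}$ on $\Ecal$. The base case is \cref{asm:initial_safe}, which states that the seed set $\Zcal_0=\Ycal_0$ is known to be safe. For the inductive step, any $\bm{z}'\in\Ycal_j$ is witnessed by some $\bm{z}\in\Ycal_{j-1}$ with $u_j(\bm{z})+L\,d(\bm{z},\bm{z}')\le b$, so the $L$-Lipschitz property of $J_g(\pi_\cdot)$ combined with the envelope $J_g(\pi_{\bm{z}})\le u_j(\bm{z})$ gives
\begin{equation*}
J_g(\pi_{\bm{z}'}) \le J_g(\pi_{\bm{z}}) + L\,d(\bm{z},\bm{z}') \le u_j(\bm{z}) + L\,d(\bm{z},\bm{z}') \le b.
\end{equation*}
Because $\bm{z}_j$ is always selected inside $\Ycal_j$ in both the safe-exploration and the reward-maximization phase, this closes the induction and proves $J_g(\pi_{\bm{z}_j})\le b$ on $\Ecal$.

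The main obstacle is the uniform confidence envelope: the particular form of $\alpha_{g,j}$ in \cref{eq:beta} is calibrated precisely so that the joint union bound over the discretized $\Zcal$ and over all iterations $j$ sums to $\Delta$. A subtlety worth noting is that the posterior moments $\mu_{g,j-1}$ and $\sigma_{g,j-1}$ depend on the adaptively chosen queries $\bm{z}_1,\ldots,\bm{z}_{j-1}$, but because the Gaussian tail bound only uses the conditional posterior of $J_g(\pi_{\bm{z}})$ given the history this adaptivity introduces no complication. After that point the Lipschitz propagation in the second ingredient is essentially bookkeeping.
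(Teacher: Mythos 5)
Your proposal is correct and follows essentially the same route as the paper: the uniform confidence envelope you derive by hand (Gaussian tails plus a union bound over $\Zcal$ and $j$ using $\sum_j j^{-2}=\pi^2/6$) is exactly Lemma 5.1 of Srinivas et al., which the paper invokes as \cref{lemma:srinvas_gp}, and your Lipschitz propagation from the safe seed set is the paper's one-line inductive step made explicit. The only blemish is cosmetic: you state the two-sided tail as $2e^{-\alpha_{g,j}^2/2}$ but then substitute as if the factor of $2$ were absent; since $\doubleP(\abs{Z}>c)\le e^{-c^2/2}$ for a standard Gaussian, the sharper one-sided bound per tail recovers the claimed total failure probability $\Delta$.
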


Intuitively, because \algo~samples the next target returns $\bm{z}$ so that the GP upper bound $u(\bm{z})$ is smaller than the threshold $b$, the true value $J_g(\pi_{\bm{z}})$ is guaranteed to be smaller than $b$ with high probability under proper assumptions.
Moreover, since \algo~learns the return-conditioned policy \textit{offline}, \cref{theorem:safety} leads to an end-to-end safety guarantee, ensuring that the constraint is satisfied from learning to operation, with at least a high probability.

\begin{theorem}[Near-optimality]
    \label{theorem:optimality_safeopt}
    Set $\alpha_{r, j}$ as in \cref{eq:beta} for all $j \ge 0$.
    Let $\bm{z}^\star$ denote the optimal feasible target returns.
    For any $\Ecal \ge 0$, define $N_\sharp$ as the smallest positive integer $N$ satisfying
    \begin{equation*}
        4 \sqrt{C_\nu\xi_{r, {N}} N^{-1} \log \bigl(|\Zcal| \pi^2 {N}^2 / (6 \Delta) \bigr)} \le \Ecal,
    \end{equation*}
    where $C_\nu \coloneqq 1/\log(1 + \nu_r^{-2})$.
    Then, \algo~finds a near-optimal $\bm{z}$ such that:
    \begin{equation*}
        J_r(\pi_{\bm{z}}) \ge J_r(\pi_{\bm{z}^\star}) - \Ecal
    \end{equation*}
    with a probability at least $1 - \Delta$, after collecting $N_\sharp$ GP observations for reward maximization.
\end{theorem}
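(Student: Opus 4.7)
The plan is to specialize the GP-UCB analysis of \citet{srinivas2009gaussian} to the reward-maximization phase of \algo, which is already safeguarded by the pessimistic safe set $\Ycal_N$ constructed during safe exploration. The whole argument is conducted on the intersection of two high-probability events: the safety event of \cref{theorem:safety}, which guarantees $\bm{z}_j \in \Ycal_j$ for all $j$; and the reward-confidence event that $|\mu_{r,j-1}(\bm{z})-J_r(\pi_{\bm{z}})| \le \alpha_{r,j}\sigma_{r,j-1}(\bm{z})$ holds for every $\bm{z}\in\Zcal$ and every $j$, which follows from the choice of $\alpha_{r,j}$ in \cref{eq:beta} via a Hoeffding/union-bound argument. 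A union bound (with a mild rescaling of $\Delta$ if one insists on a single symbol) ensures that both events hold simultaneously with probability at least $1-\Delta$.

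The key steps, in order, are as follows. First, I would argue that once safe exploration terminates with tolerance $\zeta$, the pessimistic safe set $\Ycal_N$ has saturated---under \cref{asm:initial_safe} and $L$-Lipschitzness of $J_g$ this is the StageOpt-style expansion argument of \citet{sui2018stagewise}---so that $\bm{z}^\star$ lies in $\Ycal_N$ and remains in every $\Ycal_j$ thereafter by monotonicity. Second, using optimism of the UCB selection rule and the feasibility of $\bm{z}^\star$, I would derive the instantaneous regret bound
\begin{equation*}
J_r(\pi_{\bm{z}^\star}) - J_r(\pi_{\bm{z}_j}) \le 2\alpha_{r,j}\sigma_{r,j-1}(\bm{z}_j).
\end{equation*}
Third, I would invoke the information-gain inequality
\begin{equation*}
\sum_{j=1}^{N}\sigma_{r,j-1}^2(\bm{z}_j) \le \frac{2}{\log(1+\nu_r^{-2})}\,\xi_{r,N} = 2C_\nu\,\xi_{r,N},
\end{equation*}
followed by Cauchy--Schwarz and the monotonicity of $j\mapsto \alpha_{r,j}$, to get a cumulative regret bound of order $\sqrt{8\,C_\nu\,\xi_{r,N}\,N\,\alpha_{r,N}^2}$. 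Fourth, since $\alpha_{r,N}^2 = 2\log(|\Zcal|N^2\pi^2/(6\Delta))$, dividing by $N$ and using that the best iterate dominates the average yields
\begin{equation*}
J_r(\pi_{\bm{z}^\star}) - \max_{j\le N} J_r(\pi_{\bm{z}_j}) \le 4\sqrt{C_\nu\,\xi_{r,N}\,N^{-1}\log(|\Zcal|\pi^2 N^2/(6\Delta))}.
\end{equation*}
Setting the right-hand side to $\Ecal$ produces exactly the defining inequality for $N_\sharp$, and the point attaining $\max_{j\le N}J_r(\pi_{\bm{z}_j})$ is the near-optimal target return returned by \algo.

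The main obstacle I expect is the clean handover from safe exploration to reward maximization, namely justifying that $\bm{z}^\star$ is contained in the stabilized safe set $\Ycal_N$. This is not an artefact of our choice of $\alpha$'s but rather requires the Lipschitz condition on $J_g$ together with an implicit reachability assumption: $\bm{z}^\star$ must be reachable from $\Zcal_0$ via the one-step safe expansion operator defining $\Ycal_N$, in the sense of \citet{sui2015safe,sui2018stagewise}. If $\bm{z}^\star$ is instead understood as the best reachable safe target return, the argument goes through verbatim; otherwise, an additional reachability hypothesis should be stated. A minor but necessary piece of bookkeeping is that the posterior $\sigma_{r,j-1}$ is computed on the full history, so the information-gain bound $\xi_{r,N}$ correctly dominates $\sum_{j}\sigma_{r,j-1}^2(\bm{z}_j)$ over the reward-maximization queries; with this clarified, the remaining computations are a direct transcription of the Srinivas--Krause--Kakade--Seeger toolkit.
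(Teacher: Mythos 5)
Your proposal is correct and follows essentially the same route as the paper's proof: both rely on the StageOpt expansion argument of \citet{sui2018stagewise} to place $\bm{z}^\star$ (defined, exactly as you anticipated, as the optimum over the reachable safe set $\widehat{Z}_\zeta^\infty(\Zcal_0)$) inside the stabilized safe region, then bound the cumulative regret via Cauchy--Schwarz and the sum-of-squared-regrets/information-gain lemma of \citet{srinivas2009gaussian}, and finish by noting the best iterate dominates the average. The only cosmetic difference is that you unpack the instantaneous-regret and information-gain steps that the paper absorbs into a single citation of Lemma 5.4 of \citet{srinivas2009gaussian}.
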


\cref{theorem:optimality_safeopt} characterizes the online sample complexity of \algo.
Following the analysis of \citet{sui2015safe}, we can show that the safe exploration phase expands the estimated safe set until it contains the optimal target return vector $\bm{z}^\star$ after at most $N_\dagger \in \doubleZ_+$ GP iterations.  
Consequently, \cref{theorem:optimality_safeopt} thus implies that \algo~will find a near-optimal target return vector $\bm{z}$ using at most $\varpi (N_\dagger + N_\sharp)$ trajectories, where $\varpi \in \doubleZ_+$ is the number of trajectories used for sample approximations of $J_r$ and $J_g$ for each GP update.
Because \algo~optimizes only the two-dimensional target return vector (i.e., $R$ and $G$), it requires far fewer online interactions than conventional online safe RL algorithms, which is an essential advantage in safety-critical settings where every interaction is costly or risky.

\section{Experiments}
\label{sec:experiments}

We conduct empirical experiments for evaluating our \algo~in multiple continuous robot locomotion tasks designed for safe RL.
We adopt Bullet-Safety-Gym~\citep{gronauer2022bullet} and Safety-Gymnasium~\citep{ji2023safety} benchmarks and implement our \algo~and baseline algorithms using OSRL and DSRL libraries~\citep{liu2023datasets}.
Experimental details are deferred to \cref{appendix:experiment_details}.

\textbf{Metrics.\space}
Our evaluation metrics are reward return and safety cost return, respectively normalized by
$\widehat{R}_\text{normalized}(\pi) \coloneqq \frac{\widehat{R}(\pi) - R^\dagger_{\min, b}}{R^\dagger_{\max, b} - R^\dagger_{\min, b}}$ and $\widehat{G}_\text{normalized}(\pi) \coloneqq \frac{\widehat{G}(\pi)}{b}$.
Recall that $\widehat{R}(\pi)$ and $\widehat{G}(\pi)$ are defined as the evaluated cumulative reward and safety cost that are obtained by a policy $\pi$.
In the above definitions, $R^\dagger_{\max, b}$ and $R^\dagger_{\min, b}$ are the maximum and minimum cumulative rewards of the trajectories in the offline dataset $\Dcal$.
Note that we call a policy safe if $\widehat{G}_\text{normalized}(\pi) \le 1$.

\textbf{Baselines.\space}
We compare \algo~against the following six baseline algorithms: BCQ-Lag, BEAR-Lag, CPQ, COptiDICE, CDT, and CCAC.
BCQ-Lag and BEAR-Lag are both Lagrangian-based methods that apply PID-Lagrangian~\citep{stooke2020responsive} to BCQ~\citep{fujimoto2019off} and BEAR~\citep{kumar2019stabilizing}, respectively.
CPQ~\citep{xu2022constraints} is an offline safe RL algorithm that regards out-of-distribution actions as unsafe and learns the reward critic using only safe state-action pairs.
COptiDICE~\citep{lee2021coptidice}, a member of DIstribution Correction Estimation (DICE) family, is specifically designed for offline safe RL and directly estimates the stationary distribution correction of the optimal policy in terms of reward returns under safety constraints.
CDT~\citep{liu2023constrained} is a DT-based algorithm that learns a policy conditioned on the target returns, as discussed in Section~\ref{sec:related} as a preliminary.
Finally, CCAC~\citep{guo2025constraintconditioned} is a recent proposed offline safe RL algorithm that models the relationship between state-action distributions and safety constraints and then leverages this relationship to regularize critics and policy learning.
We use offline safe-RL algorithms as baselines because standard online approaches often violate safety constraints during training and optimize objectives that diverge from ours.
Although some safe exploration algorithms share similar goals, they rely on strong assumptions—such as known and deterministic transition dynamics~\cite{turchetta2016safe} or access to an emergency reset policy~\cite{sootla2022enhancing,wachi2023safe}—that do not hold in our experimental setting.

\textbf{Implementation of \algo.}
We use CDT~\cite{liu2023constrained} for offline policy learning as a constrained RCSL algorithm.
The neural network configurations or hyperparameters for \algo~are the same as the CDT used as a baseline.
The key difference lies in how target returns are determined.
In the baseline CDT, as a typical choice, we set the target reward return to the maximum reward return in the dataset and the target safety cost return to the threshold.
In contrast, \algo~employs GPs with radial basis function kernels to optimize the target returns for maximizing the reward under the safety constraint.

\textbf{Main results.\space}
\cref{tab:exp-results_20} summarizes our experimental results under a safety cost threshold of $b = 20$. Additional results, including \cref{tab:exp-results_40} for $b = 40$, are provided in \cref{appendix:experiment_details}.
Notably, \algo~is the only method that satisfies the safety constraint in every task.
In contrast, every baseline algorithm violates the safety constraint in at least one task, which implies that a policy violating constraints could potentially persist in unsafe behavior in an actual environment.
Moreover, \algo~achieves the highest reward return in most tasks, which demonstrates its its superior overall performance in terms of reward and safety.
In summary, while baseline methods suffer from either safety constraint violations or poor reward returns, \algo~consistently delivers a balanced performance.

\begin{table*}[t]
\centering
\caption{Experimental result with the safety cost threshold $b = 20$. 
The mean and standard deviation over $5$ runs for each algorithm are shown.
Reward and cost are normalized.
\textbf{Bold}: Safe agents whose normalized cost is smaller than 1.
\red{Red}: Unsafe agents.
\blue{Blue}: Safe agent with the highest reward.
}
\label{tab:exp-results_20}
\renewcommand{\arraystretch}{1.1}
\resizebox{1.\linewidth}{!}{
\begin{tabular}{cccccccccc}
\toprule
Task & Metric &  BCQ-Lag & BEAR-Lag & CPQ & COptiDICE & CDT & CCAC & \algo \\
\midrule
\multirow{2}{*}{Ant-Run} & Reward $\uparrow$ & \red{0.79 $\pm$ 0.05} & \textbf{0.07 $\pm$ 0.02} & \textbf{0.01 $\pm$ 0.01} & \textbf{0.63 $\pm$ 0.01} & \textbf{0.72 $\pm$ 0.05} & \textbf{0.02 $\pm$ 0.00} &\blue{0.78 $\pm$ 0.06} \\
& Safety cost $\downarrow$ & \red{5.52 $\pm$ 0.67} & \textbf{0.12 $\pm$ 0.13} & \textbf{0.00 $\pm$ 0.00} & \textbf{0.79 $\pm$ 0.42} & \textbf{0.90 $\pm$ 0.12} & \textbf{0.00 $\pm$ 0.00} & \blue{0.77 $\pm$ 0.10} \\
\midrule
\multirow{2}{*}{Ant-Circle} & Reward $\uparrow$ &  \red{0.59 $\pm$ 0.18} & \red{0.58 $\pm$ 0.24} & \textbf{0.00 $\pm$ 0.00} & \red{0.16 $\pm$ 0.13} & \red{0.47 $\pm$ 0.00} & \red{0.62 $\pm$ 0.13} & \blue{0.41 $\pm$ 0.01} \\
& Safety cost $\downarrow$ & \red{2.28 $\pm$ 1.50} & \red{3.37 $\pm$ 1.71} & \textbf{0.00 $\pm$ 0.00} & \red{2.98 $\pm$ 3.55} & \red{2.23 $\pm$ 0.00} & \red{1.24 $\pm$ 0.55} & \blue{0.77 $\pm$ 0.05} \\
\midrule
\multirow{2}{*}{Car-Circle} & Reward $\uparrow$ & \red{0.65 $\pm$ 0.19} & \red{0.76 $\pm$ 0.12} & \textbf{0.70 $\pm$ 0.03} & \red{0.48 $\pm$ 0.04} & \blue{0.73 $\pm$ 0.01} & \textbf{0.72 $\pm$ 0.03} & \textbf{0.72 $\pm$ 0.01} \\
& Safety cost $\downarrow$ & \red{2.17 $\pm$ 1.10} & \red{2.74 $\pm$ 0.89} & \textbf{0.01 $\pm$ 0.07} & \red{1.85 $\pm$ 1.48} & \blue{0.98 $\pm$ 0.12} & \textbf{0.87 $\pm$ 0.29} & \textbf{0.88 $\pm$ 0.09} \\
\midrule
\multirow{2}{*}{Drone-Run} & Reward $\uparrow$ & \red{0.65 $\pm$ 0.11} & \textbf{-0.03 $\pm$ 0.02} & \textbf{0.19 $\pm$ 0.01} & \red{0.69 $\pm$ 0.03} & \textbf{0.57 $\pm$ 0.00} & \red{0.82 $\pm$ 0.05} & \blue{0.59 $\pm$ 0.00} \\
& Safety cost $\downarrow$ & \red{3.91 $\pm$ 2.02} & \textbf{0.00 $\pm$ 0.00} & \textbf{0.00 $\pm$ 0.00} & \red{3.48 $\pm$ 0.19} & \textbf{0.34 $\pm$ 0.29} & \red{7.62 $\pm$ 0.37} & \blue{0.50 $\pm$ 0.44} \\
\midrule
\multirow{2}{*}{Drone-Circle} & Reward $\uparrow$ & \red{0.69 $\pm$ 0.05} & \red{0.82 $\pm$ 0.06} & \textbf{-0.26 $\pm$ 0.01} & \textbf{0.22 $\pm$ 0.10} & \red{0.60 $\pm$ 0.00} & \textbf{0.37 $\pm$ 0.14} & \blue{0.59 $\pm$ 0.00} \\
& Safety cost $\downarrow$ & \red{1.92 $\pm$ 0.64} & \red{3.58 $\pm$ 0.74} & \textbf{0.14 $\pm$ 0.39} & \textbf{0.68 $\pm$ 0.46} & \red{1.12 $\pm$ 0.06} & \textbf{0.74 $\pm$ 0.24} & \blue{0.90 $\pm$ 0.08} \\
\midrule
\multirow{2}{*}{Ant-Velocity} & Reward $\uparrow$ & \red{1.00 $\pm$ 0.01} & \textbf{-1.01 $\pm$ 0.00} & \textbf{-1.01 $\pm$ 0.00} & \red{1.00 $\pm$ 0.01} & \textbf{0.97 $\pm$ 0.00} & \textbf{0.68 $\pm$ 0.34} & \blue{0.98 $\pm$ 0.00} \\
& Safety cost $\downarrow$ & \red{3.22 $\pm$ 0.60} & \textbf{0.00 $\pm$ 0.00} & \textbf{0.00 $\pm$ 0.00} & \red{6.60 $\pm$ 1.07} & \textbf{0.36 $\pm$ 0.22} & \textbf{0.60 $\pm$ 0.21} & \blue{0.82 $\pm$ 0.19} \\
\midrule
Walker2d & Reward $\uparrow$ & \textbf{0.78 $\pm$ 0.00} & \red{0.89 $\pm$ 0.04} & \textbf{-0.02 $\pm$ 0.03} & \red{0.13 $\pm$ 0.01} & \blue{0.80 $\pm$ 0.00} & \red{0.81 $\pm$ 0.07} & \textbf{0.79 $\pm$ 0.00} \\
-Velocity & Safety cost $\downarrow$ & \textbf{0.44 $\pm$ 0.32} & \red{7.60 $\pm$ 2.89} & \textbf{0.00 $\pm$ 0.00} & \red{1.75 $\pm$ 0.31} & \blue{0.01 $\pm$ 0.04} & \red{6.37 $\pm$ 0.95} & \textbf{0.00 $\pm$ 0.00} \\
\midrule
HalfCheetah & Reward $\uparrow$ & \red{1.03 $\pm$ 0.03} & \red{0.98 $\pm$ 0.03} & \textbf{0.22 $\pm$ 0.33} & \textbf{0.63 $\pm$ 0.01} & \textbf{0.96 $\pm$ 0.03} & \red{0.84 $\pm$ 0.01} & \blue{0.99 $\pm$ 0.00} \\
-Velocity & Safety cost $\downarrow$ & \red{27.00 $\pm$ 8.76} & \red{12.35 $\pm$ 8.63} & \textbf{0.28 $\pm$ 0.23} & \textbf{0.00 $\pm$ 0.00} & \textbf{0.03 $\pm$ 0.13} & \red{1.36 $\pm$ 0.19} & \blue{0.15 $\pm$ 0.19} \\
\midrule
Hopper & Reward $\uparrow$ & \red{0.85 $\pm$ 0.22} & \red{0.36 $\pm$ 0.11} & \red{0.20 $\pm$ 0.00} & \textbf{0.14 $\pm$ 0.10} & \textbf{0.68 $\pm$ 0.06} & \red{0.17 $\pm$ 0.09} & \blue{0.83 $\pm$ 0.01} \\
-Velocity & Safety cost $\downarrow$ & \red{8.48 $\pm$ 2.75} & \red{10.39 $\pm$ 3.79} & \red{3.06 $\pm$ 0.07} & \textbf{0.34 $\pm$ 0.42} & \textbf{0.12 $\pm$ 0.26} & \red{1.79 $\pm$ 1.52} & \blue{0.42 $\pm$ 0.10} \\
\bottomrule 
\end{tabular}
}
\end{table*}

\textbf{Computational cost.\space}
Although GPs are known to be computationally expensive, \algo~only needs to optimize target returns in two dimensions, $\bm{z} = (R, G)$.
Because the amount of training data for the GPs is fairly small until convergence (see also \cref{fig:safe_exploraiton} in \cref{appendix:experiment_details}), their computational overhead is not problematic.
Consequently, the main source of computational cost in \algo~stems from offline policy learning.
Since \algo~can adapt to multiple thresholds using a single policy by appropriately choosing target returns, it typically incurs lower overall computational cost than baseline algorithms (e.g., CPQ, COptiDICE), which require training a separate policy for each threshold.

\textbf{Safe exploration.\space}
As shown in \cref{fig:safe_exploraiton} in \cref{appendix:experiment_details}, \algo~successfully ensures safety not only after convergence but also while exploring target returns, which is consistent with \cref{theorem:safety}.
In some cases, however, maintaining safety beyond the initial deployment can still pose a challenge in practice.
Because our guarantee is probabilistic and constructing accurate GP models is not always feasible, a small number of unsafe deployments may occur.

\section{Conclusion}
\label{sec:conclusion}

We propose \algo~as a solution to a longstanding goal in safe RL: achieving end-to-end safety from learning to operation.
\algo~consists of two key components: (1) offline policy learning via RCSL and (2) safe deployment that carefully optimizes target returns on which the pre-trained policy is conditioned.
The relationship between target and actual returns is modeled using GPs, an approach justified by our theoretical analyses.
We also provide theoretical guarantees on safety and near-optimality, and we empirically demonstrate the effectiveness of \algo~in safe RL benchmark tasks.

\textbf{Limitations.\space}
Although \algo~guarantees \textit{near-optimal target returns}, as established in \cref{theorem:optimality_safeopt}, this does not directly translate into achieving a \textit{near-optimal policy}. Developing a method that ensures both a near-optimal policy and end-to-end safety remains an open and ambitious research direction.

\bibliographystyle{abbrvnat}
\bibliography{ref}

\begin{thebibliography}{59}
\providecommand{\natexlab}[1]{#1}
\providecommand{\url}[1]{\texttt{#1}}
\expandafter\ifx\csname urlstyle\endcsname\relax
  \providecommand{\doi}[1]{doi: #1}\else
  \providecommand{\doi}{doi: \begingroup \urlstyle{rm}\Url}\fi

\bibitem[Achiam et~al.(2017)Achiam, Held, Tamar, and Abbeel]{achiam2017constrained}
J.~Achiam, D.~Held, A.~Tamar, and P.~Abbeel.
\newblock Constrained policy optimization.
\newblock In \emph{International Conference on Machine Learning (ICML)}, pages 22--31, 2017.

\bibitem[Alshiekh et~al.(2018)Alshiekh, Bloem, Ehlers, K{\"o}nighofer, Niekum, and Topcu]{alshiekh2018safe}
M.~Alshiekh, R.~Bloem, R.~Ehlers, B.~K{\"o}nighofer, S.~Niekum, and U.~Topcu.
\newblock Safe reinforcement learning via shielding.
\newblock In \emph{AAAI Conference on Artificial Intelligence (AAAI)}, 2018.

\bibitem[Altman(1999)]{altman1999constrained}
E.~Altman.
\newblock \emph{Constrained Markov decision processes}, volume~7.
\newblock CRC Press, 1999.

\bibitem[Amodei et~al.(2016)Amodei, Olah, Steinhardt, Christiano, Schulman, and Man{\'e}]{amodei2016concrete}
D.~Amodei, C.~Olah, J.~Steinhardt, P.~Christiano, J.~Schulman, and D.~Man{\'e}.
\newblock Concrete problems in {AI} safety.
\newblock \emph{arXiv preprint arXiv:1606.06565}, 2016.

\bibitem[Berkenkamp et~al.(2017)Berkenkamp, Turchetta, Schoellig, and Krause]{berkenkamp2017safe}
F.~Berkenkamp, M.~Turchetta, A.~Schoellig, and A.~Krause.
\newblock Safe model-based reinforcement learning with stability guarantees.
\newblock In \emph{Advances in Neural Information Processing Systems (NeurIPS)}, 2017.

\bibitem[Bhatnagar and Lakshmanan(2012)]{bhatnagar2012online}
S.~Bhatnagar and K.~Lakshmanan.
\newblock An online actor--critic algorithm with function approximation for constrained {M}arkov decision processes.
\newblock \emph{Journal of Optimization Theory and Applications}, 153\penalty0 (3):\penalty0 688--708, 2012.

\bibitem[Black et~al.(2024)Black, Janner, Du, Kostrikov, and Levine]{black2024training}
K.~Black, M.~Janner, Y.~Du, I.~Kostrikov, and S.~Levine.
\newblock Training diffusion models with reinforcement learning.
\newblock In \emph{International Conference on Learning Representations (ICLR)}, 2024.

\bibitem[Borkar(2005)]{borkar2005actor}
V.~S. Borkar.
\newblock An actor-critic algorithm for constrained markov decision processes.
\newblock \emph{Systems \& control letters}, 54\penalty0 (3):\penalty0 207--213, 2005.

\bibitem[Brandfonbrener et~al.(2022)Brandfonbrener, Bietti, Buckman, Laroche, and Bruna]{brandfonbrener2022does}
D.~Brandfonbrener, A.~Bietti, J.~Buckman, R.~Laroche, and J.~Bruna.
\newblock When does return-conditioned supervised learning work for offline reinforcement learning?
\newblock \emph{Advances in Neural Information Processing Systems (NeurIPS)}, 35:\penalty0 1542--1553, 2022.

\bibitem[Chen et~al.(2021)Chen, Lu, Rajeswaran, Lee, Grover, Laskin, Abbeel, Srinivas, and Mordatch]{chen2021decision}
L.~Chen, K.~Lu, A.~Rajeswaran, K.~Lee, A.~Grover, M.~Laskin, P.~Abbeel, A.~Srinivas, and I.~Mordatch.
\newblock Decision transformer: Reinforcement learning via sequence modeling.
\newblock \emph{Advances in Neural Information Processing Systems (NeurIPS)}, 34:\penalty0 15084--15097, 2021.

\bibitem[Cheng et~al.(2019)Cheng, Orosz, Murray, and Burdick]{cheng2019end}
R.~Cheng, G.~Orosz, R.~M. Murray, and J.~W. Burdick.
\newblock End-to-end safe reinforcement learning through barrier functions for safety-critical continuous control tasks.
\newblock In \emph{Proceedings of the AAAI conference on artificial intelligence (AAAI)}, volume~33, pages 3387--3395, 2019.

\bibitem[Chowdhury and Gopalan(2017)]{chowdhury2017kernelized}
S.~R. Chowdhury and A.~Gopalan.
\newblock On kernelized multi-armed bandits.
\newblock In \emph{International Conference on Machine Learning (ICML)}, pages 844--853, 2017.

\bibitem[Da~Costa et~al.(2023)Da~Costa, Pf{\"o}rtner, Da~Costa, and Hennig]{da2023sample}
N.~Da~Costa, M.~Pf{\"o}rtner, L.~Da~Costa, and P.~Hennig.
\newblock Sample path regularity of {G}aussian processes from the covariance kernel.
\newblock \emph{arXiv preprint arXiv:2312.14886}, 2023.

\bibitem[Emmons et~al.(2021)Emmons, Eysenbach, Kostrikov, and Levine]{emmonsrvs}
S.~Emmons, B.~Eysenbach, I.~Kostrikov, and S.~Levine.
\newblock {RvS}: What is essential for offline {RL} via supervised learning?
\newblock In \emph{International Conference on Learning Representations (ICLR)}, 2021.

\bibitem[Fu et~al.(2021)Fu, Norouzi, Nachum, Tucker, Novikov, Yang, Zhang, Chen, Kumar, Paduraru, et~al.]{fubenchmarks}
J.~Fu, M.~Norouzi, O.~Nachum, G.~Tucker, A.~Novikov, M.~Yang, M.~R. Zhang, Y.~Chen, A.~Kumar, C.~Paduraru, et~al.
\newblock Benchmarks for deep off-policy evaluation.
\newblock In \emph{International Conference on Learning Representations (ICLR)}, 2021.

\bibitem[Fujimoto et~al.(2019)Fujimoto, Meger, and Precup]{fujimoto2019off}
S.~Fujimoto, D.~Meger, and D.~Precup.
\newblock Off-policy deep reinforcement learning without exploration.
\newblock In \emph{International Conference on Machine Learning (ICML)}, pages 2052--2062, 2019.

\bibitem[Fulton and Platzer(2018)]{fulton2018safe}
N.~Fulton and A.~Platzer.
\newblock Safe reinforcement learning via formal methods: Toward safe control through proof and learning.
\newblock In \emph{AAAI Conference on Artificial Intelligence (AAAI)}, 2018.

\bibitem[Garc{\i}a and Fern{\'a}ndez(2015)]{garcia2015comprehensive}
J.~Garc{\i}a and F.~Fern{\'a}ndez.
\newblock A comprehensive survey on safe reinforcement learning.
\newblock \emph{Journal of Machine Learning Research (JMLR)}, 16\penalty0 (1):\penalty0 1437--1480, 2015.

\bibitem[Gronauer(2022)]{gronauer2022bullet}
S.~Gronauer.
\newblock Bullet-safety-gym: A framework for constrained reinforcement learning.
\newblock Technical report, mediaTUM, 2022.

\bibitem[Gu et~al.(2024)Gu, Yang, Du, Chen, Walter, Wang, and Knoll]{gu2024review}
S.~Gu, L.~Yang, Y.~Du, G.~Chen, F.~Walter, J.~Wang, and A.~Knoll.
\newblock A review of safe reinforcement learning: Methods, theory and applications.
\newblock \emph{IEEE Transactions on Pattern Analysis and Machine Intelligence}, 2024.

\bibitem[Guo et~al.(2025{\natexlab{a}})Guo, Yang, Zhang, Song, Zhang, Xu, Zhu, Ma, Wang, Bi, et~al.]{guo2025deepseek}
D.~Guo, D.~Yang, H.~Zhang, J.~Song, R.~Zhang, R.~Xu, Q.~Zhu, S.~Ma, P.~Wang, X.~Bi, et~al.
\newblock Deepseek-{R}1: Incentivizing reasoning capability in {LLMs} via reinforcement learning.
\newblock \emph{arXiv preprint arXiv:2501.12948}, 2025{\natexlab{a}}.

\bibitem[Guo et~al.(2025{\natexlab{b}})Guo, Zhou, Wang, and Li]{guo2025constraintconditioned}
Z.~Guo, W.~Zhou, S.~Wang, and W.~Li.
\newblock Constraint-conditioned actor-critic for offline safe reinforcement learning.
\newblock In \emph{International Conference on Learning Representations (ICLR)}, 2025{\natexlab{b}}.

\bibitem[Hambly et~al.(2023)Hambly, Xu, and Yang]{hambly2023recent}
B.~Hambly, R.~Xu, and H.~Yang.
\newblock Recent advances in reinforcement learning in finance.
\newblock \emph{Mathematical Finance}, 33\penalty0 (3):\penalty0 437--503, 2023.

\bibitem[Hsu et~al.(2023)Hsu, Ren, Nguyen, Majumdar, and Fisac]{hsu2023sim}
K.-C. Hsu, A.~Z. Ren, D.~P. Nguyen, A.~Majumdar, and J.~F. Fisac.
\newblock Sim-to-lab-to-real: Safe reinforcement learning with shielding and generalization guarantees.
\newblock \emph{Artificial Intelligence}, 314:\penalty0 103811, 2023.

\bibitem[Hunt et~al.(2021)Hunt, Fulton, Magliacane, Hoang, Das, and Solar-Lezama]{hunt2021verifiably}
N.~Hunt, N.~Fulton, S.~Magliacane, T.~N. Hoang, S.~Das, and A.~Solar-Lezama.
\newblock Verifiably safe exploration for end-to-end reinforcement learning.
\newblock In \emph{Proceedings of the 24th International Conference on Hybrid Systems: Computation and Control}, pages 1--11, 2021.

\bibitem[Ji et~al.(2023)Ji, Zhang, Zhou, Pan, Huang, Sun, Geng, Zhong, Dai, and Yang]{ji2023safety}
J.~Ji, B.~Zhang, J.~Zhou, X.~Pan, W.~Huang, R.~Sun, Y.~Geng, Y.~Zhong, J.~Dai, and Y.~Yang.
\newblock Safety gymnasium: A unified safe reinforcement learning benchmark.
\newblock In \emph{Conference on Neural Information Processing Systems Datasets and Benchmarks Track}, 2023.

\bibitem[Krasowski et~al.(2022)Krasowski, Thumm, M{\"u}ller, Sch{\"a}fer, Wang, and Althoff]{krasowski2022provably}
H.~Krasowski, J.~Thumm, M.~M{\"u}ller, L.~Sch{\"a}fer, X.~Wang, and M.~Althoff.
\newblock Provably safe reinforcement learning: Conceptual analysis, survey, and benchmarking.
\newblock \emph{arXiv preprint arXiv:2205.06750}, 2022.

\bibitem[Kumar et~al.(2019{\natexlab{a}})Kumar, Fu, Soh, Tucker, and Levine]{kumar2019stabilizing}
A.~Kumar, J.~Fu, M.~Soh, G.~Tucker, and S.~Levine.
\newblock Stabilizing off-policy {Q}-learning via bootstrapping error reduction.
\newblock \emph{Advances in Neural Information Processing Systems (NeurIPS)}, 32, 2019{\natexlab{a}}.

\bibitem[Kumar et~al.(2019{\natexlab{b}})Kumar, Peng, and Levine]{kumar2019reward}
A.~Kumar, X.~B. Peng, and S.~Levine.
\newblock Reward-conditioned policies.
\newblock \emph{arXiv preprint arXiv:1912.13465}, 2019{\natexlab{b}}.

\bibitem[Le et~al.(2019)Le, Voloshin, and Yue]{le2019batch}
H.~Le, C.~Voloshin, and Y.~Yue.
\newblock Batch policy learning under constraints.
\newblock In \emph{International Conference on Machine Learning (ICML)}, pages 3703--3712, 2019.

\bibitem[Lee et~al.(2021)Lee, Paduraru, Mankowitz, Heess, Precup, Kim, and Guez]{lee2021coptidice}
J.~Lee, C.~Paduraru, D.~J. Mankowitz, N.~Heess, D.~Precup, K.-E. Kim, and A.~Guez.
\newblock {COptiDICE}: Offline constrained reinforcement learning via stationary distribution correction estimation.
\newblock In \emph{International Conference on Learning Representations (ICLR)}, 2021.

\bibitem[Levine et~al.(2016)Levine, Finn, Darrell, et~al.]{levine2016end}
S.~Levine, C.~Finn, T.~Darrell, et~al.
\newblock End-to-end training of deep visuomotor policies.
\newblock \emph{The Journal of Machine Learning Research (JMLR)}, 17\penalty0 (1):\penalty0 1334--1373, 2016.

\bibitem[Levine et~al.(2020)Levine, Kumar, Tucker, and Fu]{levine2020offline}
S.~Levine, A.~Kumar, G.~Tucker, and J.~Fu.
\newblock Offline reinforcement learning: Tutorial, review, and perspectives on open problems.
\newblock \emph{arXiv preprint arXiv:2005.01643}, 2020.

\bibitem[Li et~al.(2019)Li, Wen, Tao, and Guan]{li2019transforming}
Y.~Li, Y.~Wen, D.~Tao, and K.~Guan.
\newblock Transforming cooling optimization for green data center via deep reinforcement learning.
\newblock \emph{IEEE transactions on cybernetics}, 50\penalty0 (5):\penalty0 2002--2013, 2019.

\bibitem[Lin et~al.(2023)Lin, Tang, Wu, Yu, Mao, Xie, Wang, and Wang]{lin2023safe}
Q.~Lin, B.~Tang, Z.~Wu, C.~Yu, S.~Mao, Q.~Xie, X.~Wang, and D.~Wang.
\newblock Safe offline reinforcement learning with real-time budget constraints.
\newblock In \emph{International Conference on Machine Learning (ICML)}, pages 21127--21152. PMLR, 2023.

\bibitem[Liu et~al.(2023{\natexlab{a}})Liu, Guo, Lin, Yao, Zhu, Cen, Hu, Yu, Zhang, Tan, et~al.]{liu2023datasets}
Z.~Liu, Z.~Guo, H.~Lin, Y.~Yao, J.~Zhu, Z.~Cen, H.~Hu, W.~Yu, T.~Zhang, J.~Tan, et~al.
\newblock Datasets and benchmarks for offline safe reinforcement learning.
\newblock \emph{arXiv preprint arXiv:2306.09303}, 2023{\natexlab{a}}.

\bibitem[Liu et~al.(2023{\natexlab{b}})Liu, Guo, Yao, Cen, Yu, Zhang, and Zhao]{liu2023constrained}
Z.~Liu, Z.~Guo, Y.~Yao, Z.~Cen, W.~Yu, T.~Zhang, and D.~Zhao.
\newblock Constrained decision transformer for offline safe reinforcement learning.
\newblock In \emph{International Conference on Machine Learning (ICML)}, 2023{\natexlab{b}}.

\bibitem[Ouyang et~al.(2022)Ouyang, Wu, Jiang, Almeida, Wainwright, et~al.]{ouyang2022training}
L.~Ouyang, J.~Wu, X.~Jiang, D.~Almeida, C.~Wainwright, et~al.
\newblock Training language models to follow instructions with human feedback.
\newblock In \emph{Advances in Neural Information Processing Systems (NeurIPS)}, 2022.

\bibitem[Paternain et~al.(2019)Paternain, Calvo-Fullana, Chamon, and Ribeiro]{paternain2019safe}
S.~Paternain, M.~Calvo-Fullana, L.~F. Chamon, and A.~Ribeiro.
\newblock Safe policies for reinforcement learning via primal-dual methods.
\newblock \emph{arXiv preprint arXiv:1911.09101}, 2019.

\bibitem[Prudencio et~al.(2023)Prudencio, Maximo, and Colombini]{prudencio2023survey}
R.~F. Prudencio, M.~R. Maximo, and E.~L. Colombini.
\newblock A survey on offline reinforcement learning: Taxonomy, review, and open problems.
\newblock \emph{IEEE Transactions on Neural Networks and Learning Systems}, 2023.

\bibitem[Radford(2018)]{radford2018improving}
A.~Radford.
\newblock Improving language understanding by generative pre-training.
\newblock \emph{OpenAI}, 2018.

\bibitem[Satija et~al.(2021)Satija, Thomas, Pineau, and Laroche]{NEURIPS2021_0f65caf0}
H.~Satija, P.~S. Thomas, J.~Pineau, and R.~Laroche.
\newblock Multi-objective {SPIBB}: Seldonian offline policy improvement with safety constraints in finite {MDP}s.
\newblock In \emph{Advances in Neural Information Processing Systems}, volume~34, 2021.

\bibitem[Schmidhuber(2019)]{schmidhuber2019reinforcement}
J.~Schmidhuber.
\newblock Reinforcement learning upside down: Don't predict rewards--just map them to actions.
\newblock \emph{arXiv preprint arXiv:1912.02875}, 2019.

\bibitem[Sootla et~al.(2022)Sootla, Cowen-Rivers, Wang, and Bou~Ammar]{sootla2022enhancing}
A.~Sootla, A.~Cowen-Rivers, J.~Wang, and H.~Bou~Ammar.
\newblock Enhancing safe exploration using safety state augmentation.
\newblock In \emph{Advances in Neural Information Processing Systems (NeurIPS)}, 2022.

\bibitem[Srinivas et~al.(2010)Srinivas, Krause, Kakade, and Seeger]{srinivas2009gaussian}
N.~Srinivas, A.~Krause, S.~M. Kakade, and M.~Seeger.
\newblock Gaussian process optimization in the bandit setting: No regret and experimental design.
\newblock In \emph{International Conference on Machine Learning (ICML)}, 2010.

\bibitem[Srivastava et~al.(2019)Srivastava, Shyam, Mutz, Ja{\'s}kowski, and Schmidhuber]{srivastava2019training}
R.~K. Srivastava, P.~Shyam, F.~Mutz, W.~Ja{\'s}kowski, and J.~Schmidhuber.
\newblock Training agents using upside-down reinforcement learning.
\newblock \emph{arXiv preprint arXiv:1912.02877}, 2019.

\bibitem[Stooke et~al.(2020)Stooke, Achiam, and Abbeel]{stooke2020responsive}
A.~Stooke, J.~Achiam, and P.~Abbeel.
\newblock Responsive safety in reinforcement learning by {PID} {L}agrangian methods.
\newblock In \emph{International Conference on Machine Learning (ICML)}, 2020.

\bibitem[Sui et~al.(2015)Sui, Gotovos, Burdick, and Krause]{sui2015safe}
Y.~Sui, A.~Gotovos, J.~W. Burdick, and A.~Krause.
\newblock Safe exploration for optimization with {Gaussian} processes.
\newblock In \emph{International Conference on Machine Learning (ICML)}, 2015.

\bibitem[Sui et~al.(2018)Sui, Zhuang, Burdick, and Yue]{sui2018stagewise}
Y.~Sui, V.~Zhuang, J.~W. Burdick, and Y.~Yue.
\newblock Stagewise safe {Bayesian} optimization with {Gaussian} processes.
\newblock In \emph{International Conference on Machine Learning (ICML)}, 2018.

\bibitem[Turchetta et~al.(2016)Turchetta, Berkenkamp, and Krause]{turchetta2016safe}
M.~Turchetta, F.~Berkenkamp, and A.~Krause.
\newblock Safe exploration in finite {Markov} decision processes with {Gaussian} processes.
\newblock In \emph{Advances in Neural Information Processing Systems (NeurIPS)}, 2016.

\bibitem[Van~der Vaart(2000)]{van2000asymptotic}
A.~W. Van~der Vaart.
\newblock \emph{Asymptotic statistics}, volume~3.
\newblock Cambridge university press, 2000.

\bibitem[Wachi and Sui(2020)]{wachi_sui_snomdp_icml2020}
A.~Wachi and Y.~Sui.
\newblock Safe reinforcement learning in constrained {M}arkov decision processes.
\newblock In \emph{International Conference on Machine Learning (ICML)}, 2020.

\bibitem[Wachi et~al.(2023)Wachi, Hashimoto, Shen, and Hashimoto]{wachi2023safe}
A.~Wachi, W.~Hashimoto, X.~Shen, and K.~Hashimoto.
\newblock Safe exploration in reinforcement learning: {A} generalized formulation and algorithms.
\newblock In \emph{Advances in Neural Information Processing Systems (NeurIPS)}, 2023.

\bibitem[Wachi et~al.(2024)Wachi, Shen, and Sui]{wachi2024survey}
A.~Wachi, X.~Shen, and Y.~Sui.
\newblock A survey of constraint formulations in safe reinforcement learning.
\newblock In \emph{International Joint Conference on Artificial Intelligence (IJCAI)}, pages 8262--8271, 2024.

\bibitem[Wu et~al.(2021)Wu, Zhang, Yang, and Wang]{wu2021offline}
R.~Wu, Y.~Zhang, Z.~Yang, and Z.~Wang.
\newblock Offline constrained multi-objective reinforcement learning via pessimistic dual value iteration.
\newblock In \emph{Advances in Neural Information Processing Systems (NeurIPS)}, 2021.

\bibitem[Xu et~al.(2022)Xu, Zhan, and Zhu]{xu2022constraints}
H.~Xu, X.~Zhan, and X.~Zhu.
\newblock Constraints penalized {Q}-learning for safe offline reinforcement learning.
\newblock In \emph{{AAAI} Conference on Artificial Intelligence ({AAAI})}, 2022.

\bibitem[Yang and Wang(2020)]{pmlr-v119-yang20h}
L.~Yang and M.~Wang.
\newblock Reinforcement learning in feature space: Matrix bandit, kernels, and regret bound.
\newblock In \emph{International Conference on Machine Learning (ICML)}, 2020.

\bibitem[Yao et~al.(2023)Yao, Liu, Cen, Zhu, Yu, Zhang, and Zhao]{yao2023constraint}
Y.~Yao, Z.~Liu, Z.~Cen, J.~Zhu, W.~Yu, T.~Zhang, and D.~Zhao.
\newblock Constraint-conditioned policy optimization for versatile safe reinforcement learning.
\newblock \emph{Advances in Neural Information Processing Systems (NeurIPS)}, 36:\penalty0 12555--12568, 2023.

\bibitem[Yu et~al.(2021)Yu, Liu, Nemati, and Yin]{yu2021reinforcement}
C.~Yu, J.~Liu, S.~Nemati, and G.~Yin.
\newblock Reinforcement learning in healthcare: A survey.
\newblock \emph{ACM Computing Surveys (CSUR)}, 55\penalty0 (1):\penalty0 1--36, 2021.

\end{thebibliography}


\appendix
\newpage
\begin{center}
{\huge Appendix}    
\end{center}

\section{Broader Impacts}
\label{appendix:brorder_impacts}
We believe that our proposed approach \algo~plays a significant role in enhancing the benefits associated with reinforcement learning while concurrently working to minimize any potential negative side effects. However, it must be acknowledged that any reinforcement learning algorithm, regardless of its design or intended purpose, is intrinsically susceptible to abuse, and we must remain cognizant of the fact that the fundamental concept underlying \algo~can be manipulated or misused in ways that might ultimately render reinforcement learning systems less safe.

\section{Pseudo Code of PLS}
\label{appendix:pseudo-code}

For completeness, we will present a pseudo code of our \algo.

\newcommand{\ALGOCOMMENT}[1]{\textit{\textcolor{cyan}{// #1}}}

\begin{algorithm}[h]
    \caption{Provably Lifetime Safe Reinforcement Learning (\algo)}
    \label{alg:myalgo}
    \begin{algorithmic}[1]
    \STATE \textbf{Input:} Pre-collected dataset $\Dcal$, safety threshold $b$, safe singleton set $\Zcal_0$, Lipschitz constant $L$
    \STATE
    \STATE \ALGOCOMMENT{Offline policy Learning (safe with probability of $1$)}
    \STATE Train a return-conditioned policy $\pi_{\bm{z}}$ from $\Dcal$ via constrained RCSL
    \STATE
    \STATE \ALGOCOMMENT{Safe exploration (safe with high probability)}
    \STATE Initialize $\Ycal_0$ with $\Zcal_0$
    \FOR{$N = 1, \ldots, N_\dagger$}
    \STATE $\Ycal_N \leftarrow \bigcup_{\bm{z} \in \Ycal_{N-1}} \bigl\{\bm{z}' \in \Zcal \mid u_{g, N}(\bm{z}) + L  \cdot d(\bm{z}, \bm{z}') \le b \bigr\}$
    \STATE $e_N(\bm{z}) \leftarrow \big| \big\{ \bm{z}' \in \Zcal \setminus \Ycal_N \mid \ell_{g,N}(\bm{z}) - L \cdot d(\bm{z}, \bm{z}') \leq b \big\}\big|$
    \STATE $E_N \leftarrow \{\bm{z}\in \Ycal_N: e_N(\bm{z})>0\}$
    \STATE $\bm{z}_N \leftarrow \argmax_{\bm{z} \in E_N} \bigl(u_{\diamond, N}(\bm{z})  - \ell_{\diamond, N}(\bm{z})\bigr)$
    \STATE Update GPs using the reward and safety cost observations $J_r(\pi_{\bm{z}_N})$ and $J_g(\pi_{\bm{z}_N})$.
    \ENDFOR
    \STATE
    \STATE \ALGOCOMMENT{Reward maximization (safe with high probability)}
    \FOR{$N = N_\dagger + 1, \ldots, N_\dagger + N_\sharp$}
    \STATE $\Ycal_N \leftarrow \bigcup_{\bm{z} \in \Ycal_{N-1}} \bigl\{\bm{z}' \in \Zcal \mid u_{g, N}(\bm{z}) + L  \cdot d(\bm{z}, \bm{z}') \le b \bigr\}$
    \STATE $\bm{z}_N \leftarrow \argmax_{\bm{z} \in \Ycal_N} u_{r, N}(\bm{z})$
    \STATE Update GPs using the reward and safety cost observations $J_r(\pi_{\bm{z}_N})$ and $J_g(\pi_{\bm{z}_N})$.
    \ENDFOR
    \STATE 
    \STATE \ALGOCOMMENT{Operation (safe with high probability)}
    \WHILE{true}
    \STATE Continue to use $\bm{z}_{N}$ as target returns for long-term operation.
    \ENDWHILE
    \end{algorithmic}
\end{algorithm}

\section{Preliminaries of Theoretical Analyses}

As a more general formulation of the problem, we define a multi-objective MDP characterized by $m$ reward functions, where $m$ is an arbitrary positive integer.
Our theoretical analyses in the main paper are a specific case of $m=2$ compared to those we will present in the following. 

\subsection{Multi-objective Reinforcement Learning}

Episodes are sequences of states, actions, and rewards $\Xi\coloneqq \cbr{(s_t, a_t, \bm{r}_t)}_{t=1}^H\in (\Scal\times \Acal\times \doubleR^m)^H$,
where $H\ge 0$ is a time horizon and $m\ge 1$ is the number of reward dimensions.
The $t$-th context $x_t$ of an episode refers
to the partial history
\begin{align}
    x_t\coloneqq (s_1,a_1,\bm{r}_1,\ldots,s_{t-1},a_{t-1},\bm{r}_{t-1},s_t)
\end{align}
for $1\le t\le H+1$,
where we let $s_{H+1}=\bot$ be a dummy state.
Let
$\Xcal_t\coloneqq(\Scal\times \Acal\times \doubleR^m)^{t-1}\times \Scal$ be the set of all $t$-th contexts
and
$\Xcal\coloneqq\bigcup_{t=1}^H\Xcal_t$ be the sets of all contexts at steps $1\le t\le H$.

With a fixed initial state $s_1$ and a transition kernel $P_T:\Scal\times \Acal\to\Delta(\doubleR^m\times \Scal)$, we consider the Markov decision process~(MDP) $\Mcal=(\Scal,\Acal,H,s_1,P_T)$.%
\footnote{Our analysis can be easily extended to $s_1$ being stochastic.}
Under $\Mcal$, every (context-dependent) policy $\pi:\Xcal\to\Delta(\Acal)$ identifies a probability distribution $\doubleP^\pi$ on $\Xi$
such that
$a_t\sim\pi(x_t)$ and
$(\bm{r}_t,s_{t+1})\sim P_T(s_t, a_t)$ for all $t\ge 1$.

\begin{assumption}[Bounded reward]
    \label{asm:bounded_reward_app}
    For any policies $\pi$,
    we have $\doubleP^\pi$-almost surely $0\le \bm{r}_{t,j}\le 1$ for $1\le t\le H$ and $1\le j\le m$.
\end{assumption}

\begin{assumption}[Near-deterministic transition]
    \label{asm:determinism_app}
    There exist
    deterministic maps $\bm{\rhat}(\cdot,\cdot)$, $\shat'(\cdot,\cdot)$ and small constants $\epsilon_r,\epsilon_s,\delta\ge 0$ such that,
    if $(\bm{r},s')\sim P_T(s,a)$,
    \begin{enumerate}
        \item the reward density $p_r(\bm{r}'|s,a)\coloneqq \frac{\drm}{\drm \bm{r}'}P_T\cbrinline{\bm{r}\le \bm{r}'|s,a}$\footnote{We abuse the notation $\bm{r}\le \bm{r}'$ for $\bm{r},\bm{r}'\in\doubleR^m$ to imply the multi-dimensional inequality, i.e., $\bm{r}_j\le \bm{r}'_j$ for all $1\le j\le m$.}
            is well-defined and bounded by $\epsilon_r$ outside the $\delta$-neighborhood of $\bm{\rhat}(s,a)$,
            i.e., $\sup_{\bm{r}:\norminline{\bm{r}-\bm{\rhat}(s,a)}_\infty>\delta}p_r(\bm{r}|s,a)\le \epsilon_r$,
            and
        \item the successor state $s'$ coincides with $\shat'(s,a)$ with probability of at least $1-\epsilon_s$,
    \end{enumerate}
    for all $s\in\Scal$ and $a\in\Acal$.
\end{assumption}

Let $\beta:\Xcal\to\Delta(\Acal)$ be a behavior policy and
$\Dcal\coloneqq \cbrinline{\Xi^{(i)}}_{i=1}^n\sim (\doubleP^\beta)^n$ be a collection of $n$ i.i.d.~copies of episodes generated by $\beta$.

\begin{assumption}[Reward-independent behavior]
    \label{asm:reward_independent_behavior_app}
    The behavior action distribution
    $\beta(x_t)$, $x_t\in\Xcal$, is conditionally independent of the past rewards $\cbrinline{\bm{r}_h}_{h=1}^{t-1}$
    given the past states and actions $x_t\setminus \cbrinline{\bm{r}_h}_{h=1}^{t-1}$.
\end{assumption}

Let $\bm{J}(\pi)$ denote the multi-dimensional policy value of $\pi$,
\begin{align}
    \bm{J}(\pi)=(J_1(\pi),\ldots ,J_m(\pi))\coloneqq \doubleE^\pi\sbrinline{\bm{\Rhat}}\in \doubleR^m,
\end{align}
where $\bm{\Rhat}\coloneqq \sum_{t=1}^H \bm{r}_t$ denotes the return of episode
and the superscript $\pi$ of $\doubleE^\pi$ signifies the dependency on $\doubleP^\pi$.

The aforementioned setting leads to constrained RL problems where a policy aims to maximize one dimension of the policy value $J_1(\pi)$ as much as possible
while controlling the other dimensions to satisfy constraints $J_k(\pi)\le b_k$ with certain threshold $b_k \in \doubleR$, for $2\le k\le m$.
More specifically, $\bm{r}_1$ and $\bm{r}_2$ respectively correspond to $r$ and $g$ in the main paper.

\subsection{Return-conditioned supervised learning}
Return-conditioned supervised learning~(RCSL)
is a methodology of offline reinforcement learning 
that aims at estimating the return-conditioned behavior~(RCB) policy $\beta_{\bm{R}}(a|x)\coloneqq \doubleP^\beta(a_t=a|x_t=x,\bm{\Rhat}=\bm{R})$,
the action distribution conditioned on the return $\bm{\Rhat}=\bm{R}\in[0,H]^m$ as well as the context $x_t=x\in\Xcal$.
According to the Bayes' rule, the RCB policy $\beta_{\bm{R}}:\Xcal\to\Delta(\Acal)$
is written as the importance-weighted behavior policy
\begin{align}
    \drm \beta_{\bm{R}}(a \mid x)
    = \frac{f(\bm{R} \mid x,a)}{f(\bm{R} \mid x)}\drm \beta(a \mid x),
    \label{eq:rcb_bayes_rule_app}
\end{align}
where
$f(\bm{R} \mid x)\coloneqq \frac{\drm}{\drm \bm{R}} \doubleP^\beta(\bm{\Rhat}\le \bm{R} \mid x_t=x)$
and $f(\bm{R} \mid x,a)\coloneqq \frac{\drm}{\drm \bm{R}} \doubleP^\beta(\bm{\Rhat}\le \bm{R} \mid x_t=x,a_t=a)$
respectively denote the conditional probability density functions of the behavior return.\footnote{
    Strictly speaking, the RHS of \cref{eq:rcb_bayes_rule_app} may be ill-defined for some $x\in\Xcal$ and $a\in\Acal$
    if either $f(\bm{R}|x)$ or $f(\bm{R}|x,a)$ are ill-defined, or $f(\bm{R}|x)=0$.
    However, it is sufficient for our analysis to impose \cref{eq:rcb_bayes_rule_app} on $\beta_{\bm{R}}$ only if the RHS is well-defined.
}

Return-based importance weighting \cref{eq:rcb_bayes_rule_app} favors the actions
that led to the target return $R$ over those that did not.
Hence, intuitively, it is expected that $\beta_{\bm{R}}$ achieves
\begin{align}
    J(\beta_{\bm{R}})\approx \bm{R}.
    \label{eq:dt_fidelity_app}
\end{align}
This is the case under suitable assumptions.
Thus we can solve multi-objective reinforcement learning
with RCSL by setting $\bm{R}$ to a desired value.

We assume the following conditions on $f(\bm{R} \mid x)$,
with $\bm{R}$ fixed to a value of interest.

\begin{assumption}[Initial coverage]
    \label{asm:coverage_app}
    $\eta_{\bm{R}}\coloneqq f(\bm{R}  \mid s_1)>0$.
\end{assumption}

\begin{assumption}[Boundedness]
    \label{asm:boundedness_app}
    $C_{\bm{R}} \coloneqq \sup_{x\in\Xcal} f(\bm{R} \mid x)<\infty$.
\end{assumption}

\begin{assumption}[Continuity]
    \label{asm:continuity_app}
    $c_{\bm{R}}(\delta)\coloneqq \sup_{\bm{R}':\norminline{\bm{R}'-\bm{R}}_\infty\le 2\delta,\,x\in\Xcal}\abs{f(\bm{R}'|x)-f(\bm{R}|x)}<\infty$ is small.
\end{assumption}

\subsection{Decision transformers}
\label{app:dt}
Decision transformer~(DT) is an implementation of RCSL.
More specifically, it is seen as a regularized maximum likelihood estimation~(MLE) method
\begin{align}
    \thetahat=\argmin_{\theta\in\Theta} \cbr{
        -\frac{1}{nH}\sum_{i=1}^n \sum_{t=1}^H\ln p_\theta(a_t^{(i)} \mid x_t^{(i)},\bm{\Rhat}^{(i)}) + \Phi(\theta)
    },
    \label{eq:dt_policy_app}
\end{align}
where $\Pcal\coloneqq \cbrinline{p_\theta(a \mid x,\bm{R})}_{\theta\in\Theta}$ is a parametric model of conditional probability densities,
typically constructed with the transformer architecture, and $\Phi(\theta)\ge 0$ is a penalty term
representing inductive biases, both explicit and implicit, in the procedure of parameter optimization.
Here, $a_t^{(i)}$, $x_t^{(i)}$ and $\bm{\Rhat}^{(i)}$ are the $t$-th action, the $t$-th context, and the return
of the $i$-th episode $\Xi^{(i)}\in\Dcal$,
respectively.
The output of decision transformer is then given by $\pi_{\thetahat,\bm{R}}$,
where 
$\pi_{\theta,\bm{R}}$ denotes the policy associated with $p_\theta(\cdot \mid \cdot,\bm{R})$.
Note that the original DT is for a single-dimensional reward function, we presented \cref{eq:dt_policy_app} by extending it to multi-dimensional settings.

We introduce some notation and conditions on the probabilistic model $\Pcal$ and the penalty $\Phi$.
Let us define a regularized risk of $\theta$ relative to $\beta_{\bm{R}}$ by
\begin{align}
    \Rcal_\Phi(\theta)\coloneqq \underbrace{
        \doubleE^\beta_{t\sim \mathrm{Unif}[H]}\sbr{\kld{\beta_{\bm{\Rhat}}(x_t)}{\pi_{\theta,\bm{\Rhat}}(x_t)}}
    }_{\text{dissimilarity of $\beta_{\bm{R}}$ and $\pi_{\theta,R}$ in expectation}}+\, \Phi(\theta),
    \label{eq:regularized_risk_app}
\end{align}
where $\kld{\cdot}{\cdot}$ denotes the Kullback--Leibler divergence.

\begin{assumption}[Soft realizability]
    \label{asm:soft_realizability_app}
    $\epsilon_{\Pcal,\Phi}\coloneqq\min_{\theta\in\Theta}\Rcal_\Phi(\theta)<\infty$
    is small.
\end{assumption}

\begin{remark}
    \cref{asm:soft_realizability_app} is a relaxation of a standard realizability condition.
    That is, we have $\epsilon_{\Pcal,\Phi}=0$
    if $\beta_{\bm{R}}$ is realizable in $\Pcal$ without penalty, i.e., there exists $\theta_0\in\Theta$ such that $\pi_{\theta_0,\bm{R}}=\beta_{\bm{R}}$ and $\Phi(\theta_0)=0$.
\end{remark}

\begin{assumption}[Regularity]
    \label{asm:parametric_regularity_app}
    The following conditions are met.
    \begin{enumerate}
        \item[i)] $\Theta$ is a compact subset of $\doubleR^d$, $d\ge 1$.
        \item[ii)] $\Rcal_\Phi(\theta)$ admits a unique minimizer $\theta^*$ in the interior set $\Theta^\circ$.
        \item[iii)] $\Rcal_\Phi(\theta)$ is twice differentiable at $\theta^*$ with Hessian $\Ical_{\theta^*}\coloneqq \nabla_\theta^2 \Rcal_\Phi(\theta^*)\succ 0$.
        \item[iv)] 
            The one-sample stochastic gradient $\psi_\theta(a|x,\bm{R})\coloneqq \nabla_\theta\cbrinline{-\ln p_\theta(a|x,\bm{R})+\Phi(\theta)}$ is locally bounded in expectation as
            \begin{align}
                \doubleE^\beta_{t\sim \mathrm{Unif}[H]}\sbr{
                    \sup_{\theta\in \Theta_b}\norm{\psi_\theta(a_t|x_t,\bm{\Rhat})}_2^2
                }<\infty
                \label{eq:bdd_gradient_mle_app}
            \end{align}
            for every sufficiently small ball $\Theta_b$ in $\Theta$.
        \item[v)] $\thetahat\in\Theta^\circ$ almost surely.
    \end{enumerate}
\end{assumption}

\begin{remark}
    At first glance, ii) the unique existence of $\theta^*$ and iii) the positive definiteness of the Hessian seem restrictive
    for over-parametrized models, including transformers.
    However, we note that these conditions may be enforced by adding a tiny, strongly convex penalty to $\Phi(\theta)$.
\end{remark}
\begin{remark}
    Similarly, v) $\thetahat\in\Theta^\circ$ can be also enforced by adding a barrier function
    such as $\Phi(\theta)=K\phi_{\mathrm{hinge}}^2(\mathrm{dist}(\theta, \doubleR^d\setminus\Theta)/h)$,
    where $h>0$ and $K<\infty$ are respectively suitably small and large constants,
    $\mathrm{dist}(\theta, E)\coloneqq \inf_{\theta'\in E}\norm{\theta-\theta'}_2$,
    and $\phi_{\mathrm{hinge}}(t)\coloneqq \max\cbr{0, 1-t}$.
\end{remark}

\section{Error analysis}
\label{appendix:error_analysis}

Our goal here is to understand when and how closely the output of decision transformer, $\pi_{\thetahat,\bm{R}}$, achieves the target return, $\bm{R}$.
The following theorem summarizes our theoretical results, answering the above question.
\begin{theorem}
    \label{thm:main_app}
    Under the assumptions of \cref{thm:bias_rcsl_app,thm:bias_mle_app,thm:variance_mle_app},
    we have
    \begin{align}
        \norm{\bm{J}(\pi_{\thetahat,\bm{R}})-\bm{R}-\frac{H^2}{\sqrt{n}}\bm{\mathcal{F}}(\bm{R})}_\infty \le\varepsilon(\bm{R})+o_P\rbr{\frac1{\sqrt{n}}},
    \end{align}
    where $\bm{\mathcal{F}}: [0,H]^m\to \doubleR^m$ is a sample path of a Gaussian process with mean zero
    and $\varepsilon(\bm{R})\coloneqq \frac{2\Cbar_{\bm{R}}(H^2\epsilon+\delta)+H^2c_{\bm{R}}(\delta)}{\eta_{\bm{R}}}+H^2\sqrt{\frac{\epsilon_{\Pcal,\Phi}}{2}}$ is a small bias function, where $\Cbar_{\bm{R}} = \max\{C_{\bm{R}}, 1\}$ and $\epsilon = \epsilon_r + \epsilon_s$.
    Here, $o_P(\cdot)$ is the probabilistic small-o notation,
    i.e, $b_n=o_P(a_n)$ signifies $\lim_{n\to\infty} \doubleP\cbr{\abs{b_n/a_n}> \epsilon}=0$ for all $\epsilon>0$.
\end{theorem}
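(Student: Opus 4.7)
My plan is to decompose $\bm{J}(\pi_{\thetahat,\bm{R}})-\bm{R}$ through two intermediate quantities---the ideal return-conditioned behavior value $\bm{J}(\beta_{\bm{R}})$ and the population MLE value $\bm{J}(\pi_{\theta^*,\bm{R}})$---and to bound each piece by invoking one of the three supporting theorems referenced in the statement. Explicitly, I write $\bm{J}(\pi_{\thetahat,\bm{R}})-\bm{R}=[\bm{J}(\pi_{\thetahat,\bm{R}})-\bm{J}(\pi_{\theta^*,\bm{R}})]+[\bm{J}(\pi_{\theta^*,\bm{R}})-\bm{J}(\beta_{\bm{R}})]+[\bm{J}(\beta_{\bm{R}})-\bm{R}]$, calling these terms (I) the MLE stochastic fluctuation, (II) the realizability bias, and (III) the RCSL bias. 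Bounding each separately and combining through the triangle inequality, the deterministic pieces of (II) and (III) accumulate into $\varepsilon(\bm{R})$, the leading stochastic piece of (I) becomes $H^2\bm{\mathcal{F}}(\bm{R})/\sqrt{n}$, and all higher-order terms are collected into $o_P(1/\sqrt{n})$.

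For term (III) I invoke Theorem~\ref{thm:bias_rcsl_app}. The Bayes-rule identity \cref{eq:rcb_bayes_rule_app} forces trajectories generated by $\beta_{\bm{R}}$ to track the target return up to two sources of slack: (a) the $\delta$-band around the deterministic reward map and the $\epsilon_r,\epsilon_s$ probability of departing the deterministic transition (Assumption~\ref{asm:determinism_app}), amplified by the importance-weight magnitude $C_{\bm{R}}$ and stabilized by the coverage $\eta_{\bm{R}}$ (Assumptions~\ref{asm:coverage_app} and~\ref{asm:boundedness_app}), yielding the $\Cbar_{\bm{R}}(H^2\epsilon+\delta)/\eta_{\bm{R}}$ contribution; and (b) the variation of the return density $f(\bm{R}\mid x)$ across the $\delta$-ball in $\bm{R}$, controlled by the continuity modulus $c_{\bm{R}}(\delta)$ of Assumption~\ref{asm:continuity_app}, yielding $H^2c_{\bm{R}}(\delta)/\eta_{\bm{R}}$.

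For term (II) I invoke Theorem~\ref{thm:bias_mle_app}. Soft realizability (Assumption~\ref{asm:soft_realizability_app}) gives $\doubleE^\beta[\kld{\beta_{\bm{\Rhat}}}{\pi_{\theta^*,\bm{\Rhat}}}]+\Phi(\theta^*)\le \epsilon_{\Pcal,\Phi}$. A trajectory-coupling simulation lemma bounds the total variation between the episode distributions of $\beta_{\bm{R}}$ and $\pi_{\theta^*,\bm{R}}$ by $H$ times the expected per-step total variation in action distributions, and Pinsker's inequality converts per-step KL into TV. Multiplying by the per-trajectory reward bound of $H$ from Assumption~\ref{asm:bounded_reward_app} gives the $H^2\sqrt{\epsilon_{\Pcal,\Phi}/2}$ contribution to $\varepsilon(\bm{R})$.

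Term (I) is the only source of randomness in $\thetahat$ and is delivered by Theorem~\ref{thm:variance_mle_app}. Under the regularity conditions in Assumption~\ref{asm:parametric_regularity_app}, standard $M$-estimation theory yields $\sqrt{n}(\thetahat-\theta^*)\Rightarrow \mathcal{N}(0,\Ical_{\theta^*}^{-1}\Sigma\Ical_{\theta^*}^{-1})$, where $\Sigma$ is the asymptotic score covariance. A delta-method linearization of $\theta\mapsto \bm{J}(\pi_{\theta,\bm{R}})$ at $\theta^*$ then gives, for each fixed $\bm{R}$, asymptotic normality of $\sqrt{n}[\bm{J}(\pi_{\thetahat,\bm{R}})-\bm{J}(\pi_{\theta^*,\bm{R}})]$ with a covariance depending smoothly on $\bm{R}$; the prefactor $H^2$ in the statement is the natural horizon scaling of the value-functional gradient absorbed into the kernel $\bm{k}$. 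The hard part will be upgrading this pointwise CLT to a process-level Gaussian-process limit uniform in $\bm{R}\in[0,H]^m$: tightness requires Lipschitz-in-$\bm{R}$ control of both $f(\bm{R}\mid x)$ (beyond the pointwise continuity of Assumption~\ref{asm:continuity_app}) and of $p_\theta(a\mid x,\bm{R})$ in a neighborhood of $\theta^*$, which should follow from the locally-bounded score condition \cref{eq:bdd_gradient_mle_app} combined with compactness of $\Theta$. The linearization residual from the delta method together with the tightness slack are then folded into $o_P(1/\sqrt{n})$, completing the decomposition.
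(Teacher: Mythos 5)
Your proposal is correct and follows essentially the same route as the paper: the identical three-term decomposition through $\bm{J}(\beta_{\bm{R}})$ and $\bm{J}(\pi_{\theta^*,\bm{R}})$, with each term handled by \cref{thm:bias_rcsl_app,thm:bias_mle_app,thm:variance_mle_app} respectively and combined by the triangle inequality. The only minor difference is that your worry about process-level tightness in term (I) is unnecessary under the paper's argument, which obtains the Gaussian-process limit by applying the functional delta method to the finite-dimensional asymptotic normality of $\thetahat$ rather than by a direct empirical-process argument over $\bm{R}$.
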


\begin{remark}
    \cref{thm:main_main} in the main paper is a special case of \cref{thm:main_app} of $m = 2$, which is presented in a slightly informal manner.
\end{remark}

To derive \cref{thm:main_app},
we consider the bias-variance decomposition
\begin{align}
    J(\pi_{\thetahat,\bm{R}})-\bm{R} = 
    \underbrace{{J(\beta_{\bm{R}})-\bm{R}}}_{\text{bias of RCSL}} +
    \underbrace{{J(\pi_{\theta^*,\bm{R}})-J(\beta_{\bm{R}})}}_{\text{bias of MLE}} +
    \underbrace{{J(\pi_{\thetahat,\bm{R}})-J(\pi_{\theta^*,\bm{R}})}}_{\text{variance of MLE}}
    \label{eq:error_decomposition_app}
\end{align}
and evaluate each term in RHS with \cref{thm:bias_rcsl_app,thm:bias_mle_app,thm:variance_mle_app}, respectively,
through \cref{sec:bias_rcsl_app,sec:bias_variance_mle_app}.

\subsection{Bias of RCSL}
\label{sec:bias_rcsl_app}

The following theorem gives an upper bound on the first bias term,
showing that it is negligible under suitable conditions, such as the near-determinism of the transition and the regularity of the return density.
The proof is deferred to \cref{sec:app:proof_bias_rcsl}.

\begin{theorem}
    \label{thm:bias_rcsl_app}
    Suppose \cref{asm:bounded_reward_app,asm:determinism_app,asm:reward_independent_behavior_app,asm:coverage_app,asm:boundedness_app,asm:continuity_app} hold.
    Then,
    \begin{align}
        \norm{J(\beta_{\bm{R}})-\bm{R}}_\infty
        &\le \frac{2\Cbar_{\bm{R}}\rbr{H^2\epsilon +\delta}+ H^2c_{\bm{R}}(\delta)}{\eta_{\bm{R}}},
        \label{eq:bias_bound_app}
    \end{align}
    where $\epsilon\coloneqq \epsilon_r+\epsilon_s$
    and $\Cbar_{\bm{R}}\coloneqq \max\cbr{C_{\bm{R}},1}$.
\end{theorem}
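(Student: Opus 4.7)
The plan is to bound $\norm{\bm{J}(\beta_{\bm{R}}) - \bm{R}}_\infty$ by first establishing that, under \emph{full} determinism, the return-conditioned behavior policy $\beta_{\bm{R}}$ induces exactly the conditional trajectory distribution $\doubleP^\beta(\cdot \mid \bm{\Rhat} = \bm{R})$, so that $\doubleE^{\beta_{\bm{R}}}[\bm{\Rhat}] = \bm{R}$ holds tautologically; and then to quantify the deviation from this ideal in the near-deterministic regime using $\epsilon_r$, $\epsilon_s$, $\delta$, and $c_{\bm{R}}(\delta)$.

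I would start by writing out the Radon--Nikodym derivative of $\doubleP^{\beta_{\bm{R}}}$ with respect to $\doubleP^\beta$ via the Bayes' rule \cref{eq:rcb_bayes_rule_app}:
\begin{align*}
    \frac{d\doubleP^{\beta_{\bm{R}}}}{d\doubleP^\beta}(\Xi) = \prod_{t=1}^H \frac{f(\bm{R} \mid x_t, a_t)}{f(\bm{R} \mid x_t)}.
\end{align*}
Under \cref{asm:determinism_app} with $\epsilon_r = \epsilon_s = \delta = 0$, the context $x_{t+1}$ is a deterministic function of $(x_t, a_t)$, hence $f(\bm{R} \mid x_t, a_t) = f(\bm{R} \mid x_{t+1})$ $\doubleP^\beta$-a.s., and the product telescopes to $f(\bm{R} \mid x_{H+1}) / \eta_{\bm{R}}$. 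Since $\bm{\Rhat}$ is a deterministic function of the full trajectory $x_{H+1}$, this weight is supported on $\cbr{\bm{\Rhat} = \bm{R}}$, giving $\doubleE^{\beta_{\bm{R}}}[\bm{\Rhat}] = \bm{R}$ exactly.

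For the near-deterministic case I would introduce the per-step good event $A_t \coloneqq \cbr{\norm{\bm{r}_t - \bm{\rhat}(s_t, a_t)}_\infty \le \delta,\ s_{t+1} = \shat'(s_t, a_t)}$ and decompose $\doubleE^{\beta_{\bm{R}}}[\bm{\Rhat}]$ into its contributions on $A \coloneqq \bigcap_t A_t$ and on $A^c$. A union bound combined with \cref{asm:determinism_app} yields $\doubleP(A^c) \le H\epsilon$ with $\epsilon = \epsilon_r + \epsilon_s$; since $\bm{\Rhat} \le H$ componentwise by \cref{asm:bounded_reward_app}, the off-$A$ contribution to $\norm{\bm{J}(\beta_{\bm{R}}) - \bm{R}}_\infty$ is of order $H^2\epsilon/\eta_{\bm{R}}$ after accounting for the normalization of the likelihood ratio through $\eta_{\bm{R}}$. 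On $A$, the telescoping is only approximate because $x_{t+1}$ may drift from the idealized deterministic context by up to $\delta$ in its reward component, so $f(\bm{R} \mid x_t, a_t)$ and $f(\bm{R} \mid x_{t+1})$ no longer coincide; using the reward-independence assumption \cref{asm:reward_independent_behavior_app} one can relate these two densities by a small shift of the return argument and invoke \cref{asm:continuity_app} to bound their difference by $c_{\bm{R}}(\delta)$, which, summed over $H$ steps and weighted by the $H$ magnitude of the return, produces the $H^2 c_{\bm{R}}(\delta)/\eta_{\bm{R}}$ term. The remaining $\Cbar_{\bm{R}}$ and $\delta$ factors in \cref{eq:bias_bound_app} enter from \cref{asm:boundedness_app} on intermediate densities and from the direct reward drift itself.

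The main obstacle is controlling these perturbations without triggering an exponential $C_{\bm{R}}^H$ blow-up, which a naive factor-by-factor expansion of the likelihood-ratio product would produce. The remedy is to keep the product in its telescoped form $f(\bm{R} \mid x_{H+1})/\eta_{\bm{R}}$ on the good event $A$ and bound the residual continuity error additively against a single comparison between the perturbed and idealized final-step densities, rather than compounding per-step continuity errors multiplicatively through the product.
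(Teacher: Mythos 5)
Your overall intuition is sound, and the telescoping identity $\prod_{t=1}^H f(\bm{R}\mid x_t,a_t)/f(\bm{R}\mid x_t) = f(\bm{R}\mid x_{H+1})/\eta_{\bm{R}}$ in the exactly deterministic case is a good sanity check. But the core technical step of your plan --- controlling the perturbed likelihood-ratio product without an exponential blow-up --- is not resolved by what you propose. The per-step mismatch $f(\bm{R}\mid x_t,a_t)-f(\bm{R}\mid x_{t+1})$ (of size roughly $c_{\bm{R}}(\delta)+\epsilon C_{\bm{R}}$) occurs at \emph{every} step, not just the last, so any expansion of the product forces you to divide each such error by the intermediate density $f(\bm{R}\mid x_t)$. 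The assumptions give a lower bound only on the initial density $\eta_{\bm{R}}=f(\bm{R}\mid s_1)$; the intermediate $f(\bm{R}\mid x_t)$ may be arbitrarily small, so ``a single comparison between the perturbed and idealized final-step densities'' does not close the argument. The same issue infects your off-event bound: the estimate $H\epsilon$ for the bad event is naturally obtained under $\doubleP^\beta$, but the bias is an expectation under $\doubleP^{\beta_{\bm{R}}}$, i.e.\ you need to bound $\doubleE^\beta[W\mathbf{1}_{A^c}]$ with $W$ the importance weight, which is not bounded without again taming the intermediate ratios.

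The paper's proof avoids this by never manipulating the ratio as a product. It runs a backward recursion on the \emph{density-weighted} value error $\phi(x_t)=f(\bm{R}\mid x_t)\norm{\bm{V}(x_t)-\bm{\Vhat}(x_t)}_\infty$: the Bayes-rule factor $f(\bm{R}\mid x_t,a_t)/f(\bm{R}\mid x_t)$ cancels exactly against the weight $f(\bm{R}\mid x_t)$ carried by $\phi$, leaving $f(\bm{R}\mid x_t,a_t)$, which is then replaced by $f(\bm{R}\mid x_{t+1})$ at an additive cost of $Hc_{\bm{R}}(\delta)+2\epsilon H C_{\bm{R}}$ per step (\cref{lem:proof:bias_bound1_app}), plus a boundary term at $t=H$ contributing the $2\delta C_{\bm{R}}$ piece. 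Errors accumulate additively over the $H$ steps, and the single division by $\eta_{\bm{R}}$ happens only once, at $t=1$. To salvage your route you would have to reformulate it so that the propagated quantity always carries the factor $f(\bm{R}\mid x_t)$ in front --- at which point you have rediscovered the paper's $\phi$.
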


A few remarks follow in order.
First, we compare our result to previous one.

\begin{remark}
    \cref{thm:bias_rcsl_app} can be considered as a complementary extension of the previous result~\cite{brandfonbrener2022does}.
    In particular, our result is applicable when
    the return density $f(\bm{R}|s_1)$ is bounded away from $0$ and $\infty$,
    while Theorem~1 of \cite{brandfonbrener2022does} is not.
    On the contrary,
    Theorem~1 of \cite{brandfonbrener2022does} is applicable 
    when there is a nonzero probability of exactly $\bm{R}=\bm{\Rhat}$,
    while our result is not since $f(\bm{R}|s_1)=\infty$.
\end{remark}

\begin{remark}
    Our result also extends Theorem~1 in \citet{brandfonbrener2022does}
    in allowing the transition kernel $P_T$ to include small additive noises in the reward, i.e., $\delta>0$.
\end{remark}

Below is a generalization of \cref{eq:bias_bound_app} that is useful to understand what constitutes the upper bound.
\begin{remark}
    Taking a closer look at the proof of \cref{thm:bias_rcsl_app},
    we can conclude
    \begin{align}
        \norm{J(\beta_{\bm{R}})-\bm{R}}_\infty
        &\le \frac{2\Cbar_{\bm{R}}\rbr{H^2\epsilon +\delta_H}+ \sum_{t=1}^{H-1}Hc_{\bm{R}}(\delta_t)}{\eta_{\bm{R}}},
    \end{align}
    where $\delta_t$ is the additive noise tolerance specific to the $t$-th transition.
    In other words, the contributions of these additive errors
    to the bias of RCSL depends largely on whether they are in the terminal step ($t=H$) or not.
\end{remark}

If we have \cref{asm:determinism_app} with $\delta=0$,
\cref{asm:continuity_app} is automatically satisfied with $c_{\bm{R}}(0)=0$
and \cref{asm:reward_independent_behavior_app} is unnecessary, resulting in the following rather simplified corollary.
\begin{corollary}
    Suppose \cref{asm:bounded_reward_app,asm:determinism_app,asm:coverage_app,asm:boundedness_app} hold
    with $\delta=0$.
    Then,
    \begin{align}
        \norm{J(\beta_{\bm{R}})-\bm{R}}_\infty
        &\le \frac{2\Cbar_{\bm{R}} H^2\epsilon}{\eta_{\bm{R}}}.
    \end{align}
\end{corollary}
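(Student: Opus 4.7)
The plan is to obtain the corollary as an immediate specialization of \cref{thm:bias_rcsl_app} at $\delta = 0$, rather than re-running the full argument. The only nontrivial task is to verify that the two hypotheses that disappear from the statement — the continuity assumption \cref{asm:continuity_app} and the reward-independent behavior assumption \cref{asm:reward_independent_behavior_app} — are either automatically satisfied or not required in this regime, so that \cref{thm:bias_rcsl_app} may be invoked as a black box.

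First I would dispose of the continuity hypothesis. By definition, $c_{\bm{R}}(\delta) = \sup_{\bm{R}':\,\norminline{\bm{R}'-\bm{R}}_\infty \le 2\delta,\,x\in\Xcal}|f(\bm{R}'|x) - f(\bm{R}|x)|$, so with $\delta=0$ the constraint $\|\bm{R}'-\bm{R}\|_\infty\le 0$ forces $\bm{R}'=\bm{R}$ and the inner absolute value vanishes identically. Hence $c_{\bm{R}}(0)=0$, so \cref{asm:continuity_app} holds automatically (with the tightest possible constant) and contributes nothing to the bound.

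Next I would argue that \cref{asm:reward_independent_behavior_app} is redundant when $\delta = 0$. Under \cref{asm:determinism_app} with $\delta = 0$, for every $(s,a)$ the reward density $p_r(\cdot|s,a)$ is bounded by $\epsilon_r$ everywhere outside the single point $\bm{\rhat}(s,a)$, so $\bm{r}_t$ is (up to an $\epsilon_r$-probability event) a deterministic function of $(s_t,a_t)$. Thus past rewards are, on the high-probability event, measurable with respect to past states and actions, and conditioning $\beta(x_t)$ on them given $x_t\setminus\{\bm{r}_h\}_{h<t}$ is vacuous. Put differently, in the proof of \cref{thm:bias_rcsl_app}, \cref{asm:reward_independent_behavior_app} is only invoked to control the discrepancy introduced by noisy rewards on a $\delta$-neighborhood of $\bm{\rhat}(s,a)$ — a term that is proportional to $c_{\bm{R}}(\delta)$ and to $\delta$ itself, both of which are $0$ here. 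If the referee demands it, I would briefly retrace that single step of the parent proof to confirm that the use of reward-independence is gated by $c_{\bm{R}}(\delta) + \delta > 0$.

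With these two observations, \cref{thm:bias_rcsl_app} applies and gives
\begin{equation*}
    \norm{J(\beta_{\bm{R}})-\bm{R}}_\infty
    \le \frac{2\Cbar_{\bm{R}}(H^2\epsilon + \delta) + H^2 c_{\bm{R}}(\delta)}{\eta_{\bm{R}}}
    \Big|_{\delta=0,\, c_{\bm{R}}(0)=0}
    = \frac{2\Cbar_{\bm{R}} H^2\epsilon}{\eta_{\bm{R}}},
\end{equation*}
which is the claim. The only even mildly subtle point — and the one I would be most careful about — is the second step above: making sure that the proof of \cref{thm:bias_rcsl_app} really does not silently rely on \cref{asm:reward_independent_behavior_app} in a location whose contribution survives the $\delta\to 0$ limit. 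Everything else is algebra.
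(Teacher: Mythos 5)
Your proposal is correct and matches the paper's own (very brief) justification: the paper likewise obtains this corollary by specializing \cref{thm:bias_rcsl_app} at $\delta=0$, observing that $c_{\bm{R}}(0)=0$ makes \cref{asm:continuity_app} automatic and that \cref{asm:reward_independent_behavior_app} is only needed in \cref{lem:density_conservation_app}, whose contribution is exactly the $c_{\bm{R}}(\delta)$ term (at $\delta=0$ the perturbed reward $\bm{r}'_t$ coincides with $\bm{r}_t$ under $\Phat_T$, so that step is trivial). Your heuristic "rewards are deterministic up to an $\epsilon_r$-event" aside is not needed and is the weakest link, but your fallback — tracing that the assumption's use is gated by $c_{\bm{R}}(\delta)+\delta>0$ — is the correct and sufficient argument.
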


Besides, \cref{asm:boundedness_app} can be replaced with a stronger variant of \cref{asm:continuity_app}.
\begin{corollary}
    Suppose \cref{asm:bounded_reward_app,asm:determinism_app,asm:reward_independent_behavior_app,asm:coverage_app} hold.
    Also assume the H\"older continuity of $f(\cdot|x)$,
    \begin{align}
        \abs{f(\bm{R}'|x)-f(\bm{R}|x)}\le K\norm{\bm{R}'-\bm{R}}_\infty^\omega,\quad \bm{R},\bm{R}'\in[0,H],\quad x\in\Xcal.
        \label{eq:holder_continuity_app}
    \end{align}
    Then,
    \begin{align}
        \norm{J(\beta_{\bm{R}})-\bm{R}}_\infty
        &\le \frac{2(K+1)}{\eta_{\bm{R}}}\cbr{H^2(\epsilon + \delta^\omega) +\delta}.
    \end{align}
\end{corollary}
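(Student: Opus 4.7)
The plan is to derive this corollary as a direct specialization of Theorem~\ref{thm:bias_rcsl_app}, using the Hölder continuity~\eqref{eq:holder_continuity_app} as a single stronger hypothesis that supplies both the continuity modulus $c_{\bm{R}}(\delta)$ and a replacement for the boundedness constant $C_{\bm{R}}$ (since \cref{asm:boundedness_app} is no longer assumed).

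First, Hölder continuity immediately yields
\begin{align*}
    c_{\bm{R}}(\delta)
    = \sup_{\norm{\bm{R}'-\bm{R}}_\infty\le 2\delta,\, x\in\Xcal} \abs{f(\bm{R}'|x)-f(\bm{R}|x)}
    \le K(2\delta)^\omega
    \le 2K\delta^\omega,
\end{align*}
using $\omega\in(0,1]$ so that $2^\omega\le 2$. This directly controls the $H^2 c_{\bm{R}}(\delta)/\eta_{\bm{R}}$ term that appears in the bound of Theorem~\ref{thm:bias_rcsl_app}.

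Second, to bound $\Cbar_{\bm{R}} = \max\{C_{\bm{R}}, 1\}$, I would use the fact that $f(\cdot|x)$ is a probability density on $[0,H]^m$. A mean-value argument applied to $1 = \int_{[0,H]^m} f(\bm{R}'|x)\,d\bm{R}'$ yields some $\bm{R}_0\in[0,H]^m$ with $f(\bm{R}_0|x)\le H^{-m}$, and then Hölder continuity gives
\begin{align*}
    f(\bm{R}|x) \le f(\bm{R}_0|x) + K\norm{\bm{R}-\bm{R}_0}_\infty^\omega \le H^{-m} + KH^\omega.
\end{align*}
Taking the sup over $x$ and then the max with $1$ yields $\Cbar_{\bm{R}} \le K+1$, up to $H$-dependent factors that can be absorbed into either the Hölder constant or the dominant $H^2$ terms of the bound. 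Substituting $\Cbar_{\bm{R}}\le K+1$ and $c_{\bm{R}}(\delta)\le 2K\delta^\omega \le 2(K+1)\delta^\omega$ into the theorem's conclusion and collecting terms gives
\begin{align*}
    \norm{J(\beta_{\bm{R}})-\bm{R}}_\infty
    \le \frac{2(K+1)(H^2\epsilon+\delta) + 2(K+1)H^2\delta^\omega}{\eta_{\bm{R}}}
    = \frac{2(K+1)}{\eta_{\bm{R}}}\cbr{H^2(\epsilon+\delta^\omega)+\delta},
\end{align*}
which matches the stated inequality.

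The main obstacle is the constant matching in the boundedness step: strictly, Hölder continuity plus density normalization only yields $C_{\bm{R}}\le H^{-m}+KH^\omega$, not a clean $K+1$. This reflects an implicit $H$-dependence that one absorbs into $K$; alternatively, one could retrace the proof of Theorem~\ref{thm:bias_rcsl_app} and observe that the places where $C_{\bm{R}}$ enters only require local bounds on $f(\cdot|x)$ at returns within an $H$-neighborhood of $\bm{R}$, which can be controlled directly by Hölder continuity relative to $f(\bm{R}|x)$ itself without invoking a global supremum. Either route yields the stated form of the bound.
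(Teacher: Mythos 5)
Your overall strategy---specialize \cref{thm:bias_rcsl_app} by bounding $c_{\bm{R}}(\delta)$ and $C_{\bm{R}}$ in terms of the H\"older constant---is exactly the paper's, and your bound $c_{\bm{R}}(\delta)\le K(2\delta)^\omega\le 2K\delta^\omega$ matches the paper's verbatim. The gap is in the boundedness step, and it is a real one, as you yourself half-concede. Your mean-value argument produces $C_{\bm{R}}\le H^{-m}+KH^\omega$, and there is no legitimate way to turn this into $\Cbar_{\bm{R}}\le K+1$: the extra $H^\omega$ cannot be ``absorbed into $K$'' because the corollary's constant $2(K+1)$ refers to the \emph{same} $K$ appearing in \cref{eq:holder_continuity_app}, and it cannot be absorbed into the $H^2$ factors either, because $\Cbar_{\bm{R}}$ multiplies the bare $\delta$ term in \cref{eq:bias_bound_app}, which carries no $H$ factor in the target inequality. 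Your fallback suggestion (retrace the theorem's proof and use only local control of $f$) is not carried out and is not needed.

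The paper's actual argument (\cref{lem:density_boundedness_app}) fixes this by localizing the normalization integral rather than the H\"older comparison: integrate $f(\cdot\,|\,x)$ over $N\coloneqq B_\infty(\bm{R},1)\cap[0,H]^m$, which has volume at least $1$, and use the \emph{lower}-bound direction of H\"older continuity, $f(\bm{R}'|x)\ge f(\bm{R}|x)-K\norm{\bm{R}'-\bm{R}}_\infty^\omega\ge f(\bm{R}|x)-K$ for $\bm{R}'\in N$. Then
\begin{align*}
1\ \ge\ \int_N f(\bm{R}'|x)\,\drm\bm{R}'\ \ge\ f(\bm{R}|x)-K,
\end{align*}
so $f(\bm{R}|x)\le K+1$ uniformly in $x$, i.e.\ $\Cbar_{\bm{R}}\le K+1$ with no $H$-dependence. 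Your version compares $f(\bm{R}|x)$ to a possibly distant point $\bm{R}_0$ (distance up to $H$), which is where the spurious $H^\omega$ enters; comparing instead to points within unit distance of $\bm{R}$ itself, where the H\"older increment is at most $K$, is the missing idea. With that lemma in hand, the substitution you perform at the end is correct and yields the stated bound.
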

\begin{proof}
    It directly follows from that
    $C_{\bm{R}}\le K+1$ and
    $c_{\bm{R}}(\delta)\le K(2\delta)^\omega\le 2K\delta^\omega$.
    See \cref{lem:density_boundedness_app} for the argument on bounding $C_{\bm{R}}$.
\end{proof}

\subsection{Bias and variance of MLE}
\label{sec:bias_variance_mle_app}

The following theorem shows that the bias of MLE in \cref{eq:error_decomposition_app}
is negligible if a mild realizability condition is met.
The proof is deferred to \cref{sec:app:proof_bias_mle}.
\begin{theorem}
    \label{thm:bias_mle_app}
    Suppose \cref{asm:soft_realizability_app} holds. Then,
    \begin{align}
        \norm{J(\pi_{\theta^*,\bm{R}})-J(\beta_{\bm{R}})}_\infty\le H^2\sqrt{\frac{\epsilon_{\Pcal,\Phi}}2}.
    \end{align}
\end{theorem}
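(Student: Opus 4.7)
The plan is to reduce the policy-value gap to a total-variation distance between the induced trajectory measures, apply Pinsker's inequality to convert TV into KL, and then use a chain-rule decomposition together with the MLE risk bound from \cref{asm:soft_realizability_app} to control that KL.

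First, since \cref{asm:bounded_reward_app} forces each return coordinate into $[0,H]$, a standard inequality bounding the gap of expectations of bounded functions by TV yields, coordinate-wise,
\begin{align*}
\abs{J_j(\pi_{\theta^*,\bm{R}}) - J_j(\beta_{\bm{R}})} \le H \cdot \mathrm{TV}\bigl(\doubleP^{\beta_{\bm{R}}}, \doubleP^{\pi_{\theta^*,\bm{R}}}\bigr),
\end{align*}
and Pinsker's inequality then bounds the TV by $\sqrt{\kld{\doubleP^{\beta_{\bm{R}}}}{\doubleP^{\pi_{\theta^*,\bm{R}}}}/2}$. The KL direction is deliberately chosen so that the resulting divergence is the one naturally produced by MLE.

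Next, because the two trajectory laws share the same initial state and transition kernel, the chain rule for KL decomposes the trajectory-level divergence into a sum of per-step KLs:
\begin{align*}
\kld{\doubleP^{\beta_{\bm{R}}}}{\doubleP^{\pi_{\theta^*,\bm{R}}}}
= \sum_{t=1}^H \doubleE^{\beta_{\bm{R}}}\bigl[\kld{\beta_{\bm{R}}(\cdot \mid x_t)}{\pi_{\theta^*,\bm{R}}(\cdot \mid x_t)}\bigr].
\end{align*}
To link this sum to $\Rcal_\Phi(\theta^*) \le \epsilon_{\Pcal,\Phi}$, I would invoke the trajectory-law identity $\doubleP^{\beta_{\bm{R}}} = \doubleP^\beta(\,\cdot \mid \bm{\Rhat} = \bm{R})$, which is a direct consequence of the RCB Bayes formula \cref{eq:rcb_bayes_rule_app} together with the reward-independent behavior property in \cref{asm:reward_independent_behavior_app}. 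Under this identity, each per-step expectation under $\beta_{\bm{R}}$ rewrites as a conditional $\beta$-expectation of $\kld{\beta_{\bm{\Rhat}}(\cdot \mid x_t)}{\pi_{\theta^*,\bm{\Rhat}}(\cdot \mid x_t)}$ at $\bm{\Rhat}=\bm{R}$, which is exactly the summand in the definition of $\Rcal_\Phi(\theta)$. Combining this with $\Phi(\theta^*)\ge 0$ and undoing the $1/H$ uniform-time normalization in $\Rcal_\Phi$ produces a trajectory-KL bound of the form $H^2\epsilon_{\Pcal,\Phi}$. Pushing this through the Pinsker step and the bounded-return step finally gives $\abs{J_j(\pi_{\theta^*,\bm{R}})-J_j(\beta_{\bm{R}})} \le H\sqrt{H^2\epsilon_{\Pcal,\Phi}/2} = H^2\sqrt{\epsilon_{\Pcal,\Phi}/2}$, uniformly in $j$.

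The main technical obstacle is the passage from the per-step KLs under $\doubleP^{\beta_{\bm{R}}}$ at the fixed target return $\bm{R}$ back to the MLE population risk $\Rcal_\Phi(\theta^*)$, which is a joint average over the behavior trajectory and the random $\bm{\Rhat}$. The trajectory-law identity between the RCB-induced measure and the $\beta$-conditional measure is the key device here, and the bookkeeping must correctly account for both the $1/H$ normalization inside $\Rcal_\Phi$ and the $H$ per-step KLs in the chain-rule expansion, so that the two factors of $H$ combine into the stated $H^2\sqrt{\epsilon_{\Pcal,\Phi}/2}$ rather than a smaller power of $H$.
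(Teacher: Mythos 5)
Your overall architecture is genuinely different from the paper's: you work at the level of trajectory measures, bounding the value gap by $H\cdot\mathrm{TV}\bigl(\doubleP^{\beta_{\bm{R}}},\doubleP^{\pi_{\theta^*,\bm{R}}}\bigr)$, applying Pinsker once at the trajectory level, and then decomposing the trajectory KL with the chain rule; the paper instead uses the performance difference lemma (\cref{lem:performance_difference_app}), bounds the per-step $Q$-gaps by $H$ times per-step TV, and applies Pinsker and Jensen step-wise. Both routes funnel into the same quantity, $\doubleE^{\beta_{\bm{R}}}_{t\sim\mathrm{Unif}[H]}\bigl[\kld{\beta_{\bm{R}}(x_t)}{\pi_{\theta^*,\bm{R}}(x_t)}\bigr]$. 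One bookkeeping remark, in your favor: the chain rule gives a trajectory KL equal to $\sum_{t=1}^H\doubleE^{\beta_{\bm{R}}}[\cdot]=H\cdot\doubleE^{\beta_{\bm{R}}}_{t\sim\mathrm{Unif}[H]}[\cdot]\le H\epsilon_{\Pcal,\Phi}$, not $H^2\epsilon_{\Pcal,\Phi}$ --- there is only one factor of $H$ to undo, not two --- so your route would actually deliver the sharper bound $H^{3/2}\sqrt{\epsilon_{\Pcal,\Phi}/2}$, which of course implies the stated one.

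The genuine weak point is the step where you identify the per-step KL expectation under $\doubleP^{\beta_{\bm{R}}}$ at the \emph{fixed} target $\bm{R}$ with the population risk $\Rcal_\Phi(\theta^*)\le\epsilon_{\Pcal,\Phi}$, whose defining expectation in \cref{eq:regularized_risk_app} is taken under $\doubleP^\beta$ with the conditioning return equal to the \emph{random} realized return $\bm{\Rhat}$. Your proposed bridge, the identity $\doubleP^{\beta_{\bm{R}}}=\doubleP^\beta(\cdot\mid\bm{\Rhat}=\bm{R})$, is false in general: rolling out $\beta_{\bm{R}}$ uses the original transition kernel $P_T$, whereas conditioning $\doubleP^\beta$ on the future return tilts the transitions as well as the actions. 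The two laws agree only under (near-)deterministic transitions --- indeed, if the identity held exactly you would get $J(\beta_{\bm{R}})=\bm{R}$ with zero bias, which is precisely what \cref{thm:bias_rcsl_app} shows fails without \cref{asm:determinism_app}. Even granting the identity, a conditional expectation given $\{\bm{\Rhat}=\bm{R}\}$ is not dominated by the unconditional risk without a density-ratio factor on the order of $C_{\bm{R}}/\eta_{\bm{R}}$. To be fair, the paper's own step (d) equates the same per-step quantity with $\epsilon_{\Pcal,\Phi}$ without justification (and writes ``$=$'' where ``$\le$'' is needed to absorb $\Phi(\theta^*)\ge 0$), so the mismatch between the measure in $\Rcal_\Phi$ and the measure appearing in the bound is a gap shared with the paper rather than one you introduced; but the explicit justification you offer for crossing it does not hold as stated.
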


Moreover, the following theorem characterizes the asymptotic distribution of
the variance of MLE in \cref{eq:error_decomposition_app}.
The proofs are deferred to \cref{sec:app:proof_variance_mle}.
Let us introduce the gradient covariance matrix
\begin{align}
    \Vcal_\theta \coloneqq \doubleE^\beta_{t\sim \mathrm{Unif}[H]}
    \sbr{\psi_\theta(a_t|x_t,\bm{\Rhat})\psi_\theta(a_t|x_t,\bm{\Rhat})^\top}\in\doubleR^{d\times d}
\end{align}
and the normalized policy Jacobian
\begin{align}
    U_\theta(\bm{R})
    \coloneqq \frac1H\doubleE^{\pi_{\theta,\bm{R}}}_{t\sim \mathrm{Unif}[H]}
    \sbr{Q^{\pi_{\theta, \bm{R}}}(x_t,a_t)\nabla_\theta \ln p_\theta(a_t|x_t,\bm{R})^\top}\in\doubleR^{m\times d},
\end{align}
where
$Q^\pi(x,a)\coloneqq \doubleE^\pi\sbrinline{\bm{\Rhat}|x_t=x,a_t=a}\in\doubleR^m$
is the $m$-dimensional action value function.

\begin{theorem}
    \label{thm:variance_mle_app}
    Suppose \cref{asm:parametric_regularity_app} holds.
    Then, we have
    \begin{align}
        \cbr{\frac{\sqrt{n}}{H^2}\sbr{J_j(\pi_{\thetahat,\bm{R}})-J_j(\pi_{\theta^*,\bm{R}})}}_{j\in[m],\bm{R}\in[0,H]^m}\rightsquigarrow \textsf{GP}(0,\bm{k})
    \end{align}
    in the limit of $n\to\infty$,
    where $\bm{k}:[0,H]^m\times [0,H]^m\to\doubleR^{m\times m}$ is the covariance function given by
    \begin{align}
        \bm{k}(\bm{R},\bm{R}')
        \coloneqq
        U_{\theta^*}(\bm{R})\Ical_{\theta^*}^{-1}\Vcal_{\theta^*}\Ical_{\theta^*}^{-1}U_{\theta^*}(\bm{R}')^\top.
        \label{eq:covariance_function_app}
    \end{align}
\end{theorem}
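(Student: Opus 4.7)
\textbf{Proof proposal for \cref{thm:variance_mle_app}.}
The plan is to combine the standard asymptotic theory of penalised maximum-likelihood (M-) estimators with the functional delta method applied to the policy-value map $\theta\mapsto \bm{J}(\pi_{\theta,\bm{R}})$. Concretely, I would proceed in three stages: first establish a joint central limit theorem for $\sqrt{n}(\thetahat-\theta^*)$ in $\doubleR^d$; second, differentiate the policy value in $\theta$ via the policy gradient theorem and invoke the (functional) delta method to transfer this CLT to the family $\{J_j(\pi_{\thetahat,\bm{R}})\}_{j,\bm{R}}$; third, upgrade pointwise weak convergence to convergence as a Gaussian process indexed by $\bm{R}\in[0,H]^m$.

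\textit{Step 1: Sandwich CLT for $\thetahat$.} Under \cref{asm:parametric_regularity_app}, $\thetahat$ is the interior minimiser of an empirical criterion $M_n(\theta) = -\frac{1}{nH}\sum_{i,t} \ln p_\theta(a_t^{(i)}|x_t^{(i)},\bm{\Rhat}^{(i)}) + \Phi(\theta)$ whose limit $\Rcal_\Phi(\theta)$ has unique minimiser $\theta^*$ and positive-definite Hessian $\Ical_{\theta^*}$. A Taylor expansion of the first-order condition $\nabla_\theta M_n(\thetahat)=0$ around $\theta^*$, combined with the i.i.d.\ CLT applied to the score field $\psi_{\theta^*}$ (bounded in the sense of \cref{eq:bdd_gradient_mle_app}) and the uniform law of large numbers for the Hessian, gives the standard sandwich limit $\sqrt{n}(\thetahat-\theta^*)\rightsquigarrow \mathcal{N}(0,\Ical_{\theta^*}^{-1}\Vcal_{\theta^*}\Ical_{\theta^*}^{-1})$. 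This is the Huber/White quasi-MLE argument (Theorem~5.23 of van der Vaart); the penalty $\Phi$ only contributes a deterministic shift to the gradient that is absorbed into the mean zero score at $\theta^*$ because $\nabla\Rcal_\Phi(\theta^*)=0$.

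\textit{Step 2: Delta method via the policy gradient theorem.} For each fixed $(j,\bm{R})$, the map $\theta\mapsto J_j(\pi_{\theta,\bm{R}})$ is differentiable at $\theta^*$ with Jacobian computable by the policy gradient theorem applied to the context-dependent parametrisation $p_\theta(\cdot|\cdot,\bm{R})$:
\begin{align*}
\nabla_\theta J_j(\pi_{\theta,\bm{R}})
= \doubleE^{\pi_{\theta,\bm{R}}}\!\sbr{\textstyle\sum_{t=1}^H Q_j^{\pi_{\theta,\bm{R}}}(x_t,a_t)\nabla_\theta \ln p_\theta(a_t|x_t,\bm{R})}
= H^2\,U_{\theta,j}(\bm{R})^\top,
\end{align*}
where $U_{\theta,j}$ denotes the $j$-th row of $U_\theta$, and the extra factor $H^2$ arises from rewriting the $\sum_{t=1}^H$ as $H\cdot\doubleE_{t\sim\mathrm{Unif}[H]}$ together with the $1/H$ normalisation already baked into $U_\theta$. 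A first-order Taylor expansion then gives
\begin{align*}
\frac{\sqrt{n}}{H^2}\sbr{J_j(\pi_{\thetahat,\bm{R}})-J_j(\pi_{\theta^*,\bm{R}})}
= U_{\theta^*,j}(\bm{R})\,\sqrt{n}(\thetahat-\theta^*)+o_P(1),
\end{align*}
and combining with Step~1 through Slutsky/continuous mapping yields the claimed finite-dimensional Gaussian limits with covariance \cref{eq:covariance_function_app}.

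\textit{Step 3: Lifting to process convergence on $[0,H]^m$.} The previous two steps already deliver convergence of every finite-dimensional marginal to the Gaussian law with covariance $\bm{k}$, so what remains is tightness of the rescaled process $\bm{R}\mapsto \frac{\sqrt n}{H^2}\rbr{\bm{J}(\pi_{\thetahat,\bm{R}})-\bm{J}(\pi_{\theta^*,\bm{R}})}$ in $C([0,H]^m;\doubleR^m)$. Because the linearisation factors as $U_{\theta^*}(\bm{R})\cdot\sqrt n(\thetahat-\theta^*)$ with a single random vector $\sqrt n(\thetahat-\theta^*)$ that is asymptotically tight in $\doubleR^d$, the process is asymptotically finite-dimensional and tightness reduces to continuity of $\bm{R}\mapsto U_{\theta^*}(\bm{R})$, which follows from the smoothness of $p_\theta(\cdot|\cdot,\bm{R})$ in $\bm{R}$ imported from \cref{asm:parametric_regularity_app}. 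The remainder in the delta-method expansion is $o_P(1)$ uniformly in $\bm{R}$ by the same argument together with the compactness of $[0,H]^m$ and a uniform-in-$\bm{R}$ bound on the Hessian of $\theta\mapsto J_j(\pi_{\theta,\bm{R}})$ in a neighbourhood of $\theta^*$. The main obstacle I anticipate is this last uniform-in-$\bm{R}$ control of the delta-method remainder, since the policy-value map depends on $\bm{R}$ through the whole trajectory distribution $\doubleP^{\pi_{\theta,\bm{R}}}$; I would handle it via a Donsker-type or bracketing argument on the class $\{\nabla_\theta J_j(\pi_{\cdot,\bm{R}})\}_{\bm{R}}$, using the boundedness of rewards (\cref{asm:bounded_reward_app}) and \cref{eq:bdd_gradient_mle_app} to obtain integrable envelopes.
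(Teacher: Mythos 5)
Your proposal follows essentially the same route as the paper: establish asymptotic normality of the penalised M-estimator $\sqrt{n}(\thetahat-\theta^*)\rightsquigarrow \Ncal(0,\Ical_{\theta^*}^{-1}\Vcal_{\theta^*}\Ical_{\theta^*}^{-1})$ via van der Vaart's Theorems 5.14/5.23, then push this forward through the map $\theta\mapsto\{J_j(\pi_{\theta,\bm{R}})\}_{j,\bm{R}}$ by the (functional) delta method, computing the Jacobian $\nabla_\theta J_j(\pi_{\theta,\bm{R}})=H^2 U_{\theta,j}(\bm{R})$ from the policy gradient theorem. Your Step 3 merely spells out the tightness considerations that the paper subsumes into a single citation of the functional delta method (Theorem 20.8 of van der Vaart), so the two arguments coincide in substance.
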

\begin{remark}
    \label{remark:smothness_app}
    The differentiability of sample paths of the limit process $\bm{\mathcal{F}}(\cdot)\sim \textsf{GP}(0,\bm{k})$ is known to be (roughly) the same as the differentiability of the covariance function $\bm{k}(\cdot,\cdot)$~(Corollary 1 in \cite{da2023sample}),
    which, according to \cref{eq:covariance_function_app}, is governed by that of $U_{\theta^*}(\cdot)$.
    In other words, $\bm{\mathcal{F}}(\cdot)$ is smooth if $U_{\theta^*}(\cdot)$ is smooth.
    With a straightforward calculation,
    one can further see that $U_{\theta^*}(\cdot)$ is
    smooth if, under some mild regularity conditions,
    the probabilistic model $\Pcal$ is smooth in terms of the associated policy $\pi_{\theta^*,\bm{R}}$
    and the gradient $\nabla_\theta \ln p_\theta(a_t|x_t,\bm{R})|_{\theta=\theta^*}$ as functions of the target return $\bm{R}$.
\end{remark}

\section{Proof of \cref{thm:bias_rcsl_app}}
\label{sec:app:proof_bias_rcsl}

Consider the weighted error function given by
\begin{align}
    \phi(x_t)\coloneqq f(\bm{R}|x_t)\norminline{\bm{V}(x_t)-\bm{\Vhat}(x_t)}_\infty,
\end{align}
where
$\bm{V}(x_t)\coloneqq \doubleE^{\beta_{\bm{R}}}\sbrinline{\sum_{h=t}^H \bm{r}_h|x_t}$
is the value function of $\beta_{\bm{R}}$ and
$\bm{\Vhat}(x_t)\coloneqq \bm{R}-\sum_{h=1}^{t-1} \bm{r}_h$ is the target value function.
It suffices for the proof of \cref{thm:bias_rcsl_app} to establish a suitable bound on $\phi(x_1)$
since, by \cref{asm:coverage_app},
\begin{align}
    \norm{\bm{J}(\beta_{\bm{R}})-\bm{R}}_\infty
    &=
    \frac{\phi(x_1)}{f(\bm{R}|x_1)}
    =
    \frac{\phi(x_1)}{\eta_{\bm{R}}}.
\end{align}

To this end, we will make use of
$\Phat_T:\Scal\times \Acal\to\Delta(\doubleR^m\times \Scal)$,
the near-deterministic component of $P_T$
such that
\begin{align}
    \drm \Phat_T(\bm{r},s'|s,a)= \frac{\doubleI\cbrinline{(\bm{r},s')\in \Tcalhat(s,a)}}{P_T(\Tcalhat(s,a)|s,a)}\drm P_T(\bm{r},s'|s,a),
\end{align}
where $\Tcalhat(s,a)=B_\infty(\bm{\rhat}(s,a), \delta)\times \cbrinline{\shat'(s,a)}\subset \doubleR^m \times \Scal$
is the image of the near-deterministic transition
and $B_\infty(\bm{r},\delta)\coloneqq \mysetinline{\bm{r}'\in \doubleR^m}{\norm{\bm{r}'-\bm{r}}_\infty\le \delta}$ is the $\ell^\infty$-ball centered at $\bm{r}$ with radius $\delta$.
Let also $\doublePhat,\doubleEhat,\doublePhat^\pi,\doubleEhat^\pi$ be probability distributions and expectation operators identical to $\doubleP,\doubleE,\doubleP^\pi,\doubleE^\pi$,
respectively, except that the transition kernel $P_T$ is replaced with $\Phat_T$ under the hood.

Now, for $1\le t\le H-1$, we can bound $\phi(x_t)$ in terms of $\phi(x_{t+1})$.
\begin{lemma}
    \label{lem:proof:bias_bound1_app}
    Suppose \cref{asm:bounded_reward_app,asm:determinism_app,asm:reward_independent_behavior_app,asm:boundedness_app,asm:continuity_app} hold.
    Then, for all $x_t\in\Xcal_t$ with $1\le t\le H-1$, we have
    \begin{align}
        \phi(x_t)
        &\le
        \doubleEhat^\beta\sbr{\phi(x_{t+1})\middlebar x_t}
        +H c_{\bm{R}}(\delta) +2\epsilon H C_{\bm{R}}.
    \end{align}
\end{lemma}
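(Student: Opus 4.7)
\textbf{Proof proposal for \cref{lem:proof:bias_bound1_app}.} The plan is to derive a one-step recursion for $\phi$ by combining the Bellman identity for $\bm{V}$ under $\beta_{\bm{R}}$ with a change of measure back to the behavior $\beta$, and then to absorb the gap between $P_T$ and its near-deterministic surrogate $\Phat_T$ into the stated additive error. Pairing $\bm{V}(x_t)=\doubleE^{\beta_{\bm{R}}}[\bm{r}_t+\bm{V}(x_{t+1})\mid x_t]$ with the telescoping relation $\bm{\Vhat}(x_{t+1})=\bm{\Vhat}(x_t)-\bm{r}_t$ gives $\bm{V}(x_t)-\bm{\Vhat}(x_t)=\doubleE^{\beta_{\bm{R}}}[\bm{V}(x_{t+1})-\bm{\Vhat}(x_{t+1})\mid x_t]$. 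Applying $\norm{\cdot}_\infty$ and Jensen's inequality, multiplying by $f(\bm{R}\mid x_t)$, and invoking the importance-weighting identity \eqref{eq:rcb_bayes_rule_app} (so that $f(\bm{R}\mid x_t)\drm\beta_{\bm{R}}(a_t\mid x_t)=f(\bm{R}\mid x_t,a_t)\drm\beta(a_t\mid x_t)$) yields
\begin{align*}
\phi(x_t)\le \doubleE^\beta\sbr{f(\bm{R}\mid x_t,a_t)\norm{\bm{V}(x_{t+1})-\bm{\Vhat}(x_{t+1})}_\infty\middlebar x_t}.
\end{align*}

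Next, decompose the inner $(\bm{r}_t,s_{t+1})$-integral into the near-deterministic set $\Tcalhat(s_t,a_t)$ and its complement. On $\Tcalhat^c$, \cref{asm:determinism_app} together with bounded reward \cref{asm:bounded_reward_app} gives $P_T(\Tcalhat^c\mid s_t,a_t)\le\epsilon$; combined with $\norminline{\bm{V}(x_{t+1})-\bm{\Vhat}(x_{t+1})}_\infty\le H$ and the tower identity $\int\beta(a_t\mid x_t)f(\bm{R}\mid x_t,a_t)\drm a_t=f(\bm{R}\mid x_t)\le C_{\bm{R}}$ from \cref{asm:boundedness_app}, this remainder contributes at most $\epsilon HC_{\bm{R}}$. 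On $\Tcalhat$, substitute $f(\bm{R}\mid x_t,a_t)$ with $f(\bm{R}\mid x_{t+1})$ to expose $\phi(x_{t+1})$: by \cref{asm:reward_independent_behavior_app} the density $f(\bm{R}\mid x_{t+1})$ depends on past rewards only through $\bm{\Vhat}(x_{t+1})=\bm{R}-\bm{r}_t-c_t$, so \cref{asm:continuity_app} gives $\abs{f(\bm{R}\mid x_{t+1})-f(\bm{R}\mid x_{t+1}^*)}\le c_{\bm{R}}(\delta)$ whenever $\bm{r}_t\in B_\infty(\bm{\rhat},\delta)$ and $s_{t+1}=\shat'$, where $x_{t+1}^*$ is the reference point with $\bm{r}_t=\bm{\rhat}$. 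Coupled with the Markov/tower identity $f(\bm{R}\mid x_t,a_t)=\doubleE^\beta[f(\bm{R}\mid x_{t+1})\mid x_t,a_t]$ (obtained by marginalizing over $(\bm{r}_t,s_{t+1})\sim P_T(s_t,a_t)$) and one further $\Tcalhat/\Tcalhat^c$ split of that expectation, one obtains a pointwise estimate $\abs{f(\bm{R}\mid x_t,a_t)-f(\bm{R}\mid x_{t+1})}\lesssim c_{\bm{R}}(\delta)+\epsilon C_{\bm{R}}$ on $\Tcalhat$; multiplying by $H$ and integrating over $a_t$ and $\Phat_T$ contributes $Hc_{\bm{R}}(\delta)+O(\epsilon HC_{\bm{R}})$ to the total error. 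The renormalization $P_T|_{\Tcalhat}=P_T(\Tcalhat)\Phat_T$ together with $P_T(\Tcalhat)\le 1$ then identifies the main term with $\doubleEhat^\beta[\phi(x_{t+1})\mid x_t]$, and summing all contributions produces the claim.

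The main obstacle is the substitution on $\Tcalhat$: this is the unique juncture where all three structural assumptions (near-determinism, reward-independence of $\beta$, and continuity of the return density) are invoked together, and careless handling inflates the constants in both $c_{\bm{R}}(\delta)$ and $\epsilon C_{\bm{R}}$. The cleanest route is to add and subtract the pivot $f(\bm{R}\mid x_{t+1}^*)$, use \cref{asm:continuity_app} on the near-deterministic side and the tower identity on the other, and uniformly bound the $\Tcalhat^c$ remainders through \cref{asm:boundedness_app}.
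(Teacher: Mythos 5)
Your argument is correct and follows essentially the same route as the paper's proof: the Bellman/Jensen step, the change of measure via \eqref{eq:rcb_bayes_rule_app}, and the substitution $f(\bm{R}\mid x_t,a_t)\to f(\bm{R}\mid x_{t+1})$ justified by near-determinism, reward-independence, and continuity; the only organizational difference is that you split the transition integral into $\Tcalhat(s_t,a_t)$ and its complement under the true kernel $P_T$, whereas the paper first swaps $P_T$ for $\Phat_T$ globally via the total-variation bound of \cref{lem:determinism_in_tv_app} and only then performs the density substitutions (via $\fhat$ and \cref{lem:density_conservation_app}). Your ``$\lesssim$''/``$O(\cdot)$'' bookkeeping does recover the stated constant $2\epsilon H C_{\bm{R}}$ provided you bound the $\Tcalhat^c$ portion of the tower identity $f(\bm{R}\mid x_t,a_t)=\doubleE\sbr{f(\bm{R}\mid x'_{t+1})\mid x_t,a_t}$ directly by $\epsilon C_{\bm{R}}$ rather than routing through the pivot a second time.
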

\begin{proof}
    Let $\fhat(\bm{R}|x_t,a_t)\coloneqq \doubleEhat\sbrinline{f(\bm{R}|x_{t+1})|x_t,a_t}$.
    Note that $f(\bm{R}'|x_t,a_t)= \doubleE\sbrinline{f(\bm{R}'|x_{t+1})|x_t,a_t}$ is well-defined for all $x_t\in\Xcal_t$ and $a_t\in\Acal$ by \cref{asm:boundedness_app,asm:continuity_app}.
    Thus, the claim follows from
    \begin{align*}
        \phi(x_t)
        &=
        f(\bm{R}|x_t)\norm{\bm{V}(x_t)-\bm{\Vhat}(x_t)}_\infty
        \\
        &\overset{\text{(a)}}\le
        f(\bm{R}|x_t)\int \norm{\bm{V}(x_{t+1})-\bm{\Vhat}(x_{t+1})}_\infty \drm \doubleP^{\beta_{\bm{R}}}(a_t,\bm{r}_t,s_{t+1}|x_t)
        \\
        &\overset{\text{(b)}}\le
        f(\bm{R}|x_t)\int \norm{\bm{V}(x_{t+1})-\bm{\Vhat}(x_{t+1})}_\infty\drm \doublePhat^{\beta_{\bm{R}}}(a_t,\bm{r}_t,s_{t+1}|x_t)
        +\epsilon H C_{\bm{R}}
        \\
        &\overset{\text{(c)}}=
        \int \norm{\bm{V}(x_{t+1})-\bm{\Vhat}(x_{t+1})}_\infty f(\bm{R}|x_t,a_t)\drm \doublePhat^\beta(a_t,\bm{r}_t,s_{t+1}|x_t)
        +\epsilon H C_{\bm{R}}
        \\
        &\overset{\text{(d)}}\le
        \int \norm{\bm{V}(x_{t+1})-\bm{\Vhat}(x_{t+1})}_\infty \fhat(\bm{R}|x_t,a_t)\drm \doublePhat^\beta(a_t,\bm{r}_t,s_{t+1}|x_t)
        +2\epsilon H C_{\bm{R}}
        \\
        &\overset{\text{(e)}}\le
        \int \norm{\bm{V}(x_{t+1})-\bm{\Vhat}(x_{t+1})}_\infty f(\bm{R}|x_{t+1})\drm \doublePhat^\beta(a_t,\bm{r}_t,s_{t+1}|x_t)
        +Hc_{\bm{R}}(\delta)+2\epsilon H C_{\bm{R}}
        \\
        &=
        \int \phi(x_{t+1})\drm \doublePhat^\beta(a_t,\bm{r}_t,s_{t+1}|x_t)
        +Hc_{\bm{R}}(\delta)+2\epsilon H C_{\bm{R}},
    \end{align*}
    where
    (a) is shown by Jensen's inequality with $\bm{V}(x_t)-\bm{\Vhat}(x_t)=\doubleE^{\beta_{\bm{R}}}\sbrinline{\bm{V}(x_{t+1})-\bm{\Vhat}(x_{t+1})|x_t}$,
    (b) shown by \cref{asm:bounded_reward_app} implying $\norminline{\bm{V}(x)-\bm{\Vhat}(x)}_\infty\le H$, \cref{asm:boundedness_app,lem:determinism_in_tv_app} and,
    (c) shown by \cref{eq:rcb_bayes_rule_app},
    (d) shown by \cref{asm:boundedness_app} and evaluating $\fhat(\bm{R}|x_t,a_t)-f(\bm{R}|x_t,a_t)=\int f(\bm{R}|x_{t+1})\drm \cbrinline{\Phat_T-P_T}(\bm{r}_t,s_{t+1}|s_t,a_t)$ with \cref{lem:determinism_in_tv_app},
    and
    (e) shown by \cref{lem:density_conservation_app}.
\end{proof}

Finally, the proof of \cref{thm:bias_rcsl_app} is concluded by dealing with the boundary term $\phi(x_H)$.
\begin{lemma}
    Suppose \cref{asm:bounded_reward_app,asm:determinism_app,asm:reward_independent_behavior_app,asm:continuity_app} hold.
    For all $x_H\in\Xcal_H$, we have
    \begin{align}
        \phi(x_H)\le 2\epsilon H\Cbar_{\bm{R}}+2\delta C_{\bm{R}}.
    \end{align}
\end{lemma}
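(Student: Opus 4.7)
The plan is to adapt the recursive step of \cref{lem:proof:bias_bound1_app} to the boundary case $t=H$, where there is no successor context $x_{H+1}$ to recurse into. The key observation is that once we condition on the return $\bm{\Rhat}=\bm{R}$, the terminal reward must satisfy $\bm{r}_H=\bm{\Vhat}(x_H)$ exactly in the noise-free case, and the near-determinism of the reward kernel forces this to be approximately true in general. So instead of passing to $\phi(x_{H+1})$, we directly exploit the concentration of $p_r$ around $\bm{\rhat}(\cdot,\cdot)$.

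First, I apply Jensen's inequality with $\bm{V}(x_H)=\doubleE^{\beta_{\bm{R}}}\sbrinline{\bm{r}_H \mid x_H}$ to obtain
\begin{align*}
\phi(x_H)\le f(\bm{R} \mid x_H)\,\doubleE^{\beta_{\bm{R}}}\sbr{\norm{\bm{r}_H-\bm{\Vhat}(x_H)}_\infty \middlebar x_H},
\end{align*}
and then use the Bayes identity \cref{eq:rcb_bayes_rule_app} to rewrite the right-hand side as $\doubleE^\beta\sbrinline{f(\bm{R}\mid x_H,a_H)\,\norminline{\bm{r}_H-\bm{\Vhat}(x_H)}_\infty \mid x_H}$. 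Exactly as in the proof of \cref{lem:proof:bias_bound1_app}, I then switch from $\doubleP^\beta$ to the near-deterministic kernel $\doublePhat^\beta$ via \cref{lem:determinism_in_tv_app}, paying a price of at most $\epsilon H C_{\bm{R}}$ since the integrand is uniformly bounded by $H\cdot C_{\bm{R}}$.

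The new ingredient is a case split on $\norminline{\bm{\Vhat}(x_H)-\bm{\rhat}(s_H,a_H)}_\infty$. When this distance is at most $\delta$, combining it with $\norminline{\bm{r}_H-\bm{\rhat}(s_H,a_H)}_\infty \le \delta$---which holds $\doublePhat$-almost surely by construction of $\Phat_T$---gives $\norminline{\bm{r}_H-\bm{\Vhat}(x_H)}_\infty\le 2\delta$, so this portion contributes at most $2\delta C_{\bm{R}}$. On the complementary event, the density $f(\bm{R} \mid x_H,a_H)=p_r(\bm{\Vhat}(x_H) \mid s_H,a_H)$ is bounded by $\epsilon_r$ by \cref{asm:determinism_app}, and since $\norminline{\bm{r}_H-\bm{\Vhat}(x_H)}_\infty\le H$, this portion contributes at most $H\epsilon_r$. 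Summing the three contributions yields $\phi(x_H)\le 2\delta C_{\bm{R}}+ H\epsilon_r + \epsilon H C_{\bm{R}}\le 2\delta C_{\bm{R}} + 2\epsilon H \Cbar_{\bm{R}}$, using $\epsilon_r\le \epsilon$ and $1\le \Cbar_{\bm{R}}$.

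The main subtlety, which is what makes this step qualitatively different from \cref{lem:proof:bias_bound1_app}, is recognizing that the one-step conditional density $f(\bm{R}\mid x_H,a_H)$ coincides (by the Markov property and the fact that $x_H$ determines $\sum_{h<H}\bm{r}_h$) with the reward density $p_r(\bm{\Vhat}(x_H)\mid s_H,a_H)$. This identification is what lets the concentration estimate on $p_r$ from \cref{asm:determinism_app} be applied directly. Note that \cref{asm:continuity_app} is not needed at this terminal step, since there is no residual integration over a neighborhood of $\bm{R}$ to handle.
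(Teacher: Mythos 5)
Your proof is correct and follows essentially the same route as the paper's: the same Jensen step, the same total-variation switch to the near-deterministic kernel $\Phat_T$ costing $\epsilon H C_{\bm{R}}$, the same case split on whether $\norminline{\bm{\Vhat}(x_H)-\bm{\rhat}(s_H,a_H)}_\infty$ exceeds $\delta$, and the same identification $f(\bm{R}\mid x_H,a_H)=p_r(\bm{\Vhat}(x_H)\mid s_H,a_H)$ to invoke the density bound $\epsilon_r$ on the far case. The only cosmetic difference is that you apply the Bayes rewrite before switching kernels rather than after, and your observation that \cref{asm:continuity_app} is not actually used at the terminal step is consistent with the paper's argument.
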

\begin{proof}
    Similarly as the proof of \cref{lem:proof:bias_bound1_app}, we have
    \begin{align*}
        \phi(x_H)
        &\le
        \int \norm{\bm{V}(x_{H+1})-\bm{\Vhat}(x_{H+1})}_\infty f(\bm{R}|x_H) \drm \doublePhat^{\beta_{\bm{R}}}(a_H,\bm{r}_H|x_H)
        +\epsilon H C_{\bm{R}}.
    \end{align*}
    We evaluate the RHS above by separating the domain of integral into two:
    i) where $a_H\in\Acal_{\mathrm{dtm}}\coloneqq\mysetinline{a\in\Acal}{\norminline{\bm{\rhat}(s_H,a_H)-\bm{\Vhat}(x_H)}_\infty \le \delta}$ and
    ii) where $a_H\not\in\Acal_{\mathrm{dtm}}$.
    For the case i), we have
    \begin{align*}
        \norm{\bm{V}(x_{H+1})-\bm{\Vhat}(x_{H+1})}_\infty
        &\le
        \norm{\bm{r}_H-\bm{\rhat}(s_H,a_H)}_\infty+\norm{\bm{\rhat}(s_H,a_H) - \bm{\Vhat}(x_H)}_\infty
        \le
        2\delta
    \end{align*}
    and therefore, by \cref{asm:boundedness_app}, the integral restricted to $\Acal_{\mathrm{dtm}}$ is bounded with $2\delta C_{\bm{R}}$.
    For the case ii), note that
    $f(\bm{R}|x_H,a_H)=p_r(\bm{\Vhat}(x_H)|s_H,a_H)$ is well-defined
    by \cref{asm:determinism_app} with $\norminline{\bm{\Vhat}(x_H)-\bm{\rhat}(s_H,a_H)}_\infty>\delta$.
    Thus, we have
    \begin{align*}
        &\int_{a_H\not\in\Acal_{\mathrm{dtm}}} \norm{\bm{V}(x_{H+1})-\bm{\Vhat}(x_{H+1})}_\infty f(\bm{R}|x_H) \drm \doublePhat^{\beta_{\bm{R}}}(a_H,\bm{r}_H|x_H)
        \\
        &\overset{\text{(a)}}=
        \int_{a_H\not\in\Acal_{\mathrm{dtm}}} \norm{\bm{V}(x_{H+1})-\bm{\Vhat}(x_{H+1})}_\infty f(\bm{R}|x_H,a_H) \drm \doublePhat^\beta(a_H,\bm{r}_H|x_H)
        \\
        &=
        \int_{a_H\not\in\Acal_{\mathrm{dtm}}} \norm{\bm{V}(x_{H+1})-\bm{\Vhat}(x_{H+1})}_\infty p_r(\bm{\Vhat}(x_H)|s_H,a_H) \drm \doublePhat^\beta(a_H,\bm{r}_H|x_H)
        \\
        &\overset{\text{(b)}}\le
        H\epsilon_r\le H\epsilon,
    \end{align*}
    where
    (a) follows from \cref{eq:rcb_bayes_rule_app}
    and
    (b) from \cref{asm:determinism_app}.
    Combining both cases, we arrive at the desired result.
\end{proof}

\section{Proof of \cref{thm:bias_mle_app}}
\label{sec:app:proof_bias_mle}
For simplicity, let $\pi^*_{\bm{R}}\coloneqq \pi_{\theta^*,\bm{R}}$.
By the performance difference lemma~(\cref{lem:performance_difference_app}),
we have
\begin{align}
    \bm{J}(\pi^*_{\bm{R}})-\bm{J}(\beta_{\bm{R}})
    &=\sum_{t=1}^H \doubleE^{\beta_{\bm{R}}}\sbr{\bm{Q}^{\pi^*_{\bm{R}}}(x_t,{\pi^*_{\bm{R}}}(x_t))-\bm{Q}^{\pi^*_{\bm{R}}}(x_t,\beta_{\bm{R}}(x_t))},
\intertext{where RHS is further bounded by}
    &\overset{(a)}{\le}
    H\sum_{t=1}^H \doubleE^{\beta_{\bm{R}}}\sbr{\norm{\pi^*_{\bm{R}}(x_t)-\beta_{\bm{R}}(x_t)}_{\mathrm{TV}}}
    \\
    &=
    H^2\doubleE^{\beta_{\bm{R}}}_{t\sim\mathrm{Unif}[H]}\sbr{\norm{\pi^*_{\bm{R}}(x_t)-\beta_{\bm{R}}(x_t)}_{\mathrm{TV}}}
    \\
    &\overset{(b)}{\le}
    H^2\doubleE^{\beta_{\bm{R}}}_{t\sim\mathrm{Unif}[H]}\sbr{\sqrt{\frac12\kld{\beta_{\bm{R}}(x_t)}{\pi^*_{\bm{R}}(x_t)}}}
    \\
    &\overset{(c)}{\le}
    H^2\sqrt{\frac12\doubleE^{\beta_{\bm{R}}}_{t\sim\mathrm{Unif}[H]}\sbr{\kld{\beta_{\bm{R}}(x_t)}{\pi^*_{\bm{R}}(x_t)}}}
    \\
    &\overset{(d)}{=}
    H^2\sqrt{\frac12\epsilon_{\Pcal,\Phi}}.
\end{align}
Here, (a) is owing to the boundedness of the Q-function $0\le \bm{Q}^\pi(x,a)\le H$,
(b) is to Pinsker's inequality, (c) is to Jensen's, and (d) is to \cref{asm:soft_realizability_app}.

\section{Proof of \cref{thm:variance_mle_app}}
\label{sec:app:proof_variance_mle}
Note that $\thetahat$ is the M-estimator~\cite{van2000asymptotic} associated with the criterion function
\begin{align}
    M_\theta(a|x,R) \coloneqq \ln \frac{p_\theta(a|x,R)}{p_{\theta^*}(a|x,R)} - \Phi(\theta) + \Phi(\theta^*).
\end{align}
Also note that $M_\theta$ is locally bounded in the sense that, for every $\ell^2$-ball $U$ in $\Theta$ with a sufficiently small radius $\rho>0$,
\begin{align}
    &\doubleE^\beta_{t\sim \mathrm{Unif}[H]}\sbr{\sup_{\theta\in U}M_\theta(a_t|x_t,\bm{\rhat})}
    \\
    &\le 
    \doubleE^\beta_{t\sim \mathrm{Unif}[H]}\sbr{
        M_{\theta_0}(a_t|x_t,\bm{\rhat}k) + \rho \sup_{\theta\in U}\norm{\psi_\theta(a_t|x_t,\bm{\rhat})}_2
    }
    \\
    &\le 
    \rho \sqrt{\doubleE^\beta_{t\sim \mathrm{Unif}[H]}\sbr{
            \sup_{\theta\in U}\norm{\nabla_\theta M_\theta(a_t|x_t,\bm{\rhat})}_2^2
    }}<\infty,
\end{align}
where $\theta_0$ is the center of $U$.
Here, the first inequality follows from 
$M_\theta(\cdot|\cdot)=M_{\theta_0}(\cdot|\cdot)+\int_0^1 (\theta-\theta_0)^\top \psi_{(1-t)\theta_0+t\theta}(\cdot|\cdot)\drm t$,
while the second inequality follows from that $\doubleE^\beta_{t\sim \mathrm{Unif}[H]}\sbrinline{M_\theta(a_t|x_t,\bm{\rhat})}\le 0$ and Jensen's inequality.
This, with \cref{asm:parametric_regularity_app} i,ii), allows us to use Theorem~5.14 in \cite{van2000asymptotic} and obtain the consistency of MLE: $\thetahat\overset{P}{\to}\theta^*$.
Furthermore, with \cref{asm:parametric_regularity_app} iii-v), it is possible to use Theorem~5.23 in \cite{van2000asymptotic}
and have the asymptotic normality
\begin{align}
    \sqrt{n}\rbr{\thetahat-\theta^*}\rightsquigarrow \Ncal(0,\Ical_{\theta^*}^{-1}\Vcal_{\theta^*}\Ical_{\theta^*}^{-1}).
\end{align}
Finally, we apply the functional delta method~(Theorem~20.8 in \cite{van2000asymptotic}) on $\thetahat$ and the mapping $\theta\mapsto \cbrinline{\bm{J}_j(\pi_{\theta,R})}_{j,R}$.
The desired result follows from 
calculating the derivative
\begin{align}
    \nabla_\theta \bm{J}_j(\pi_{\theta,\bm{R}})
    &= \sum_{t=1}^H \doubleE^{\pi_{\theta,\bm{R}}}\sbr{\bm{Q}^{\pi_{\theta,\bm{R}}}_j(x_t,a_t)\nabla_\theta\ln p_\theta(a_t|x_t,\bm{R})}
    =H^2 U_{\theta,j}(\bm{R}),
\end{align}
according to the policy gradient theorem~(\cref{cor:policy_gradient_app}).


%

\section{Lemmas}

\begin{lemma}
    \label{lem:density_boundedness_app}
    Suppose \cref{eq:holder_continuity_app} holds. Then,
    we have \cref{asm:boundedness_app} with $C_{\bm{R}}\le K+1$.
\end{lemma}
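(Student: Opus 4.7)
The plan is to exploit the normalization $\int_{[0,H]^m} f(\bm{R}'|x)\,\drm\bm{R}'=1$ (valid since, by Assumption~\ref{asm:bounded_reward_app}, the return $\bm{\Rhat}$ is supported on $[0,H]^m$) together with the Hölder continuity \eqref{eq:holder_continuity_app} to propagate the ``small on average'' property of $f(\cdot|x)$ to a uniform pointwise bound. Fix an arbitrary $x\in\Xcal$ and $\bm{R}\in[0,H]^m$ and set $M\coloneqq f(\bm{R}|x)$. If $M\le K$, then $M\le K+1$ trivially, so I only need to handle $M>K$.

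Assuming $M>K$, I would consider the truncated $\ell^\infty$-ball $B\coloneqq B_\infty(\bm{R},1)\cap [0,H]^m$. By \eqref{eq:holder_continuity_app}, any $\bm{R}'\in B$ satisfies $\|\bm{R}'-\bm{R}\|_\infty\le 1$, hence
\begin{equation*}
f(\bm{R}'|x)\;\ge\; M - K\|\bm{R}'-\bm{R}\|_\infty^\omega \;\ge\; M-K \;>\;0.
\end{equation*}
Next I would verify a lower bound on the volume of $B$. Since $H\in\doubleZ_+$, we have $H\ge 1$, and for each coordinate $i$ the one-dimensional intersection $[\bm{R}_i-1,\bm{R}_i+1]\cap[0,H]$ has length at least $\min(1,H)=1$, regardless of where $\bm{R}_i$ sits. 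Taking the product over the $m$ coordinates gives $\mathrm{vol}(B)\ge 1$.

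Combining the two observations and using nonnegativity of $f$ outside $B$,
\begin{equation*}
1 \;=\; \int_{[0,H]^m} f(\bm{R}'|x)\,\drm\bm{R}'
\;\ge\; \int_B f(\bm{R}'|x)\,\drm\bm{R}'
\;\ge\; (M-K)\,\mathrm{vol}(B)
\;\ge\; M-K,
\end{equation*}
which rearranges to $M\le K+1$. Taking the supremum over $x\in\Xcal$ (and noting the argument does not depend on the particular $\bm{R}$) yields $C_{\bm{R}}=\sup_{x\in\Xcal} f(\bm{R}|x)\le K+1$.

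The argument is essentially routine once the right ball radius is chosen; the only mild subtlety—and the step I would be most careful about—is the volume bound $\mathrm{vol}(B)\ge 1$, which critically uses $H\ge 1$ and the fact that the truncated $\ell^\infty$-ball of radius $1$ always contains an axis-aligned unit cube inside $[0,H]^m$ (placing it toward the interior from any boundary-touching coordinate). The choice of radius $r=1$ is what makes the two terms $1/r^m$ and $Kr^\omega$ in the implicit optimization balance to give the clean constant $K+1$.
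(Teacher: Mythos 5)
Your proof is correct and follows essentially the same route as the paper: integrate the return density over the unit $\ell^\infty$-ball around $\bm{R}$ intersected with $[0,H]^m$, lower-bound the integrand by $f(\bm{R}|x)-K$ via H\"older continuity, and use the normalization $\int f\le 1$ to conclude $f(\bm{R}|x)\le K+1$. If anything, your version is slightly more careful than the paper's one-line argument, since you justify the volume lower bound $\mathrm{vol}(B)\ge 1$ coordinate-by-coordinate (the paper instead multiplies by the sup-distance $\rho$, which is really standing in for this volume).
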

\begin{proof}
    Let $N\coloneqq B_\infty(R,1)\cap [0, H]^m$
    and note that
    $\rho\coloneqq \sup_{\bm{R'}\in N}\norm{\bm{R'}-\bm{R}}_\infty\ge 1$.
    Then, by the assumption, we have
    \begin{align}
        1\ge
        \int_N f(\bm{R}'|x)\drm \bm{R'}
        \ge
        \rho\cbr{f(\bm{R}|x)-K}
        \ge
        f(\bm{R}|x)-K.
    \end{align}
    Rearranging the terms, we get the desired result.
\end{proof}

\begin{lemma}
    \label{lem:determinism_in_tv_app}
    Let $\epsilon\coloneqq \epsilon_r+\epsilon_s$.
    Then, under \cref{asm:determinism_app}, we have
    \begin{align}
        \norm{\Phat_T(s,a)-P_T(s,a)}_{\mathrm{TV}}
        \le \epsilon
    \end{align}
    for all $s\in\Scal$ and $a\in\Acal$.
\end{lemma}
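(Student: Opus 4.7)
The plan is to exploit the fact that, by definition, $\Phat_T(\cdot \mid s,a)$ is nothing but $P_T(\cdot \mid s,a)$ conditioned on the event $\Tcalhat(s,a) = B_\infty(\bm{\rhat}(s,a), \delta) \times \cbrinline{\shat'(s,a)}$: the density written in the definition is exactly $\drm P_T(\bm{r},s' \mid s,a)$ restricted to $\Tcalhat(s,a)$ and renormalized by $P_T(\Tcalhat(s,a) \mid s,a)$. For any probability measure $\mu$ and any event $E$ with $\mu(E)>0$, the conditioned measure $\mu(\cdot \mid E)$ satisfies the standard identity
\begin{align*}
    \norm{\mu(\cdot \mid E) - \mu}_{\mathrm{TV}} = 1 - \mu(E) = \mu(E^c).
\end{align*}
Applied here, this immediately yields
\begin{align*}
    \norm{\Phat_T(s,a) - P_T(s,a)}_{\mathrm{TV}} = P_T\rbr{\Tcalhat(s,a)^c \,\middle|\, s, a}.
\end{align*}
I would record a short justification of this identity by checking that for $A \subseteq \Tcalhat(s,a)$ the difference $\Phat_T(A \mid s,a) - P_T(A \mid s,a)$ is nonnegative and attains its supremum $1 - P_T(\Tcalhat \mid s,a)$ at $A = \Tcalhat(s,a)$.

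Next, I would reduce the problem to two independent tail probabilities by a union bound on the decomposition
\begin{align*}
    \Tcalhat(s,a)^c \subseteq \cbr{\bm{r} \notin B_\infty(\bm{\rhat}(s,a), \delta)} \cup \cbr{s' \neq \shat'(s,a)}.
\end{align*}
The second clause of \cref{asm:determinism_app} bounds the successor-state deviation probability by $\epsilon_s$. For the reward tail, the first clause gives the pointwise density bound $p_r(\bm{r} \mid s,a) \le \epsilon_r$ on the complement of the $\delta$-ball. Invoking \cref{asm:bounded_reward_app}, the reward is supported on $[0,1]^m$, which has Lebesgue volume one, so integrating the density bound over the complement within the support yields
\begin{align*}
    P_T\cbr{\bm{r} \notin B_\infty(\bm{\rhat}(s,a),\delta) \,\middle|\, s,a} \le \epsilon_r \cdot \mathrm{vol}\rbr{[0,1]^m} = \epsilon_r.
\end{align*}
Summing the two contributions gives $P_T(\Tcalhat(s,a)^c \mid s,a) \le \epsilon_r + \epsilon_s = \epsilon$, as claimed.

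The argument is essentially a one-line reduction plus a union bound, and I do not anticipate any genuine obstacle. The only mildly delicate point is the step from the density bound to the probability bound, which is the reason \cref{asm:bounded_reward_app} must be silently invoked: without compact support for the reward one cannot convert a uniform density bound into a probability bound. Every other step is a direct unpacking of the defining formula for $\Phat_T$.
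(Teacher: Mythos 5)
Your proof is correct and follows essentially the same route as the paper's: the total-variation distance of the conditioned measure equals $1 - P_T\cbrinline{(\bm{r},s')\in\Tcalhat(s,a) \mid s,a}$, which is then controlled by a union bound over the reward tail and the successor-state deviation, exactly as in the paper. Your explicit appeal to \cref{asm:bounded_reward_app} to convert the pointwise density bound $\epsilon_r$ into a probability bound is a step the paper leaves implicit (the lemma is stated only ``under \cref{asm:determinism_app}''), and making it explicit is a worthwhile clarification rather than a deviation.
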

\begin{proof}
    It is shown by
    \begin{align}
        \norm{\Phat_T(s,a)-P_T(s,a)}_{\mathrm{TV}}
        &= \sup_E \abs{\int_E \drm\cbr{\Phat_T-P_T}(\bm{r},s'|s,a)}
        \nonumber\\
        &\overset{\text{(a)}}= 1 - P_T\cbr{(\bm{r},s')\in\Tcalhat(s,a)|s,a}
        \nonumber\\
        &\overset{\text{(b)}}\le P_T\cbr{\norm{\bm{r}-\bm{\rhat}(s,a)}_\infty> \delta\middlebar s,a} + P_T\cbr{s'\neq \shat'(s,a)\middlebar s,a}
        \nonumber\\
        &\overset{\text{(c)}}\le \epsilon,
        \nonumber
    \end{align}
    where
    (a) follows from taking $E=\Tcalhat(s,a)$,
    (b) from the union bound,
    and (c) from \cref{asm:determinism_app}.
\end{proof}

\begin{lemma}
    \label{lem:density_conservation_app}
    Suppose \cref{asm:reward_independent_behavior_app,asm:continuity_app} hold.
    Then, for all $x_{t+1}\in\Xcal$ such that $(\bm{r}_t,s_{t+1})\in\Tcalhat(s_t,a_t)$,
    we have
    \begin{align}
        \fhat(\bm{R}|x_t,a_t)-f(\bm{R}|x_{t+1})\le c_{\bm{R}}(\delta).
    \end{align}
\end{lemma}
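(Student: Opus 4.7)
The plan is to compare $f(\bm{R}\mid x_{t+1})$ at two contexts that share everything except the value of $\bm{r}_t$, show the two differ by at most $c_{\bm{R}}(\delta)$, and then take the $\doublePhat$-expectation.

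First I would unpack the conditioning. Given $x_{t+1}$, the past rewards $\bm{r}_1,\dots,\bm{r}_t$ are fixed, so $\bm{\Rhat}=\sum_{h=1}^{t}\bm{r}_h + \sum_{h=t+1}^{H}\bm{r}_h$, and hence $f(\bm{R}\mid x_{t+1})$ is the density of the future return $\sum_{h>t}\bm{r}_h$ evaluated at $\bm{R}-\sum_{h=1}^{t}\bm{r}_h$, conditioned on $x_{t+1}$. By \cref{asm:reward_independent_behavior_app}, the behavior policy at each future step is conditionally independent of past rewards, so this future-return density depends on $x_{t+1}$ only through the state/action part $x_{t+1}^{-r}\coloneqq x_{t+1}\setminus\{\bm{r}_1,\dots,\bm{r}_t\}$. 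Write $g(\,\cdot\mid x_{t+1}^{-r})$ for that density; then
\begin{align*}
    f(\bm{R}\mid x_{t+1})=g\!\left(\bm{R}-\sum_{h=1}^{t}\bm{r}_h\,\middle|\,x_{t+1}^{-r}\right).
\end{align*}

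Next, fix the $x_{t+1}$ of the lemma (call its reward component $\bm{r}_t^A\in B_\infty(\bm{\rhat}(s_t,a_t),\delta)$), and let $x_{t+1}^B$ be identical except that $\bm{r}_t$ is replaced by some other $\bm{r}_t^B\in B_\infty(\bm{\rhat}(s_t,a_t),\delta)$. Under $\Phat_T$ the successor state $s_{t+1}=\shat'(s_t,a_t)$ is deterministic, so $x_{t+1}^{A,-r}=x_{t+1}^{B,-r}$, and the shift identity above gives
\begin{align*}
    f(\bm{R}\mid x_{t+1}^{B})
    = g\!\left(\bm{R}-\textstyle\sum_{h<t}\bm{r}_h-\bm{r}_t^{B}\,\middle|\,x_{t+1}^{A,-r}\right)
    = f\!\left(\bm{R}+\bm{r}_t^{A}-\bm{r}_t^{B}\,\middle|\,x_{t+1}^{A}\right).
\end{align*}
Since $\|\bm{r}_t^A-\bm{r}_t^B\|_\infty\le 2\delta$, \cref{asm:continuity_app} yields $|f(\bm{R}\mid x_{t+1}^A)-f(\bm{R}\mid x_{t+1}^B)|\le c_{\bm{R}}(\delta)$.

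Finally I would take $\doubleEhat[\,\cdot\mid x_t,a_t]$, under which $(\bm{r}_t,s_{t+1})$ ranges over $\Tcalhat(s_t,a_t)$, hence $s_{t+1}$ stays at $\shat'(s_t,a_t)$ and $\bm{r}_t$ stays in $B_\infty(\bm{\rhat}(s_t,a_t),\delta)$. This gives
\begin{align*}
    \fhat(\bm{R}\mid x_t,a_t)-f(\bm{R}\mid x_{t+1})
    = \doubleEhat\!\left[f(\bm{R}\mid x_{t+1}^{B})-f(\bm{R}\mid x_{t+1})\,\middle|\,x_t,a_t\right]
    \le c_{\bm{R}}(\delta).
\end{align*}
The only subtle step is the shift identity, which in turn relies squarely on \cref{asm:reward_independent_behavior_app}; without reward-independent behavior, $g$ would also depend on the past rewards and the clean translation that converts a perturbation of $\bm{r}_t$ into a perturbation of the argument $\bm{R}$ would fail. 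Everything else is bookkeeping on the support of $\Phat_T$.
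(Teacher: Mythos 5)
Your proposal is correct and follows essentially the same route as the paper's proof: the paper's key step (a) is exactly your shift identity $f(\bm{R}\mid x'_{t+1}) = f(\bm{R}-\bm{r}'_t+\bm{r}_t\mid x_{t+1})$, justified by \cref{asm:reward_independent_behavior_app} and the almost-sure equality $s'_{t+1}=\shat'(s_t,a_t)=s_{t+1}$ under $\Phat_T$, followed by the same $2\delta$ bound and an appeal to \cref{asm:continuity_app}. Your write-up merely makes explicit (via the auxiliary density $g$) why the shift identity holds, which the paper leaves terse.
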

\begin{proof}
    Recall that $\fhat(\bm{R}|x_t,a_t)\coloneqq \int f(\bm{R}|x'_{t+1})\drm \Phat_T(\bm{r'}_t, s'_{t+1}|x_t,a_t)$,
    where $x'_{t+1}=(x_t, a_t, \bm{r'}_t, s'_{t+1})$.
    Now, the claim is shown by
    \begin{align*}
        &\fhat(\bm{R}|x_t,a_t)-f(\bm{R}|x_{t+1})
        \\
        &=
        \int \cbr{f(\bm{R}|x'_{t+1})-f(\bm{R}|x_{t+1})}\drm \Phat_T(\bm{r'}_t, s'_{t+1}|x_t,a_t)
        \\
        &\overset{\text{(a)}}=
        \int \cbr{f(\bm{R}-\bm{r}'_t+\bm{r}_t|x_{t+1})-f(\bm{R}|x_{t+1})}\drm \Phat_T(\bm{r}'_t, s'_{t+1}|x_t,a_t)
        \\
        &\overset{\text{(b)}}\le
        \sup_{\norm{\bm{r}'_t-\bm{r}_t}_\infty\le 2\delta} \cbr{f(\bm{R}-\bm{r}'_t+\bm{r}_t|x_{t+1})-f(\bm{R}|x_{t+1})}
        \\
        &\overset{\text{(c)}}\le
        c_{\bm{R}}(\delta),
    \end{align*}
    where (a) follows from \cref{asm:reward_independent_behavior_app} and
    $s'_{t+1}=\shat'(s_t,a_t)=s_{t+1}$ almost surely,
    (b) from $\norm{\bm{r}_t-\bm{\rhat}(s_t,a_t)}_\infty\le \delta$ and $\norm{\bm{r'}_t-\bm{\rhat}(s_t,a_t)}_\infty\le \delta$ almost surely,
    and
    (c) from \cref{asm:continuity_app}.
\end{proof}

\begin{lemma}
    \label{lem:performance_difference_app}
    We have
    \begin{align}
        \bm{J}(\pi)-\bm{J}(\pi')
        &=\sum_{t=1}^H \doubleE^{\pi'}\sbr{\bm{Q}^\pi(x_t,\pi(x_t))-\bm{Q}^\pi(x_t,\pi'(x_t))},
        \label{eq:performance_difference_app}
    \end{align}
    where
    $\bm{Q}^\pi(x,a)\coloneqq \doubleE^\pi\sbrinline{\sum_{h=t}^H \bm{r}_h|x_t=x,a_t=a}$
    is the action value function of $\pi$.
\end{lemma}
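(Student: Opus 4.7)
The plan is to follow the classical telescoping argument for the performance difference, adapted to the context-dependent, multi-objective setting used in the paper. The main observation is that for each coordinate $j$ the quantity $\bm{V}^\pi(x_t)\coloneqq \doubleE^\pi[\sum_{h=t}^H\bm{r}_h\mid x_t]$ satisfies the Bellman identity $\bm{V}^\pi(x_t)=\bm{Q}^\pi(x_t,\pi(x_t))$ and $\bm{V}^\pi(x_{H+1})=\bm{0}$ by convention (no rewards accrue after step $H$). This is enough structure to telescope $\bm{V}^\pi(x_1)$ along a trajectory sampled from $\pi'$.

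First I would rewrite $\bm{J}(\pi)-\bm{J}(\pi')$ in a common expectation. Since $x_1=s_1$ is deterministic, we have $\bm{J}(\pi)=\bm{V}^\pi(x_1)=\doubleE^{\pi'}[\bm{V}^\pi(x_1)]$, and $\bm{J}(\pi')=\doubleE^{\pi'}[\sum_{t=1}^H\bm{r}_t]$, so
\begin{align*}
    \bm{J}(\pi)-\bm{J}(\pi')
    &= \doubleE^{\pi'}\!\left[\bm{V}^\pi(x_1)-\sum_{t=1}^H\bm{r}_t\right].
\end{align*}
Next, using $\bm{V}^\pi(x_{H+1})=\bm{0}$, I insert a telescoping sum $\bm{V}^\pi(x_1)=\sum_{t=1}^H\bigl[\bm{V}^\pi(x_t)-\bm{V}^\pi(x_{t+1})\bigr]$, which turns the right-hand side into
\begin{align*}
    \sum_{t=1}^H\doubleE^{\pi'}\!\bigl[\bm{V}^\pi(x_t)-\bm{r}_t-\bm{V}^\pi(x_{t+1})\bigr].
\end{align*}

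The crux is then a tower-property step. Conditioning on $(x_t,a_t)$, the distribution of $(\bm{r}_t,s_{t+1})$ is governed by $P_T(s_t,a_t)$, which is independent of the acting policy; hence $\doubleE^{\pi'}[\bm{r}_t+\bm{V}^\pi(x_{t+1})\mid x_t,a_t]=\doubleE[\bm{r}_t\mid s_t,a_t]+\doubleE[\bm{V}^\pi(x_{t+1})\mid x_t,a_t]=\bm{Q}^\pi(x_t,a_t)$. Taking a further expectation with $a_t\sim\pi'(x_t)$ yields $\doubleE^{\pi'}[\bm{r}_t+\bm{V}^\pi(x_{t+1})\mid x_t]=\bm{Q}^\pi(x_t,\pi'(x_t))$. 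Combining with $\bm{V}^\pi(x_t)=\bm{Q}^\pi(x_t,\pi(x_t))$ gives each summand equal to $\doubleE^{\pi'}[\bm{Q}^\pi(x_t,\pi(x_t))-\bm{Q}^\pi(x_t,\pi'(x_t))]$, establishing \cref{eq:performance_difference_app} coordinate-wise.

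I expect no serious obstacle here: the statement is the standard performance-difference lemma, and the only delicate point is verifying that the tower property step is policy-agnostic in the transition component. The paper's definition of the context $x_{t+1}=(x_t,a_t,\bm{r}_t,s_{t+1})$ makes this transparent, because once $(x_t,a_t)$ is fixed, the conditional law of $(\bm{r}_t,s_{t+1})$ is determined by $P_T$ alone. The multi-dimensional reward does not affect the argument, since the telescoping and the Bellman identity hold component-wise.
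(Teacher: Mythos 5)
Your proof is correct and follows essentially the same route as the paper's: both telescope the value of $\pi$ along trajectories of $\pi'$ using $\bm{Q}^\pi(x_{H+1},\cdot)=\bm{V}^\pi(x_{H+1})=\bm{0}$ and then apply the one-step Bellman identity $\bm{Q}^\pi(x_t,a_t)=\doubleE[\bm{r}_t+\bm{V}^\pi(x_{t+1})\mid x_t,a_t]$, which is valid because the transition kernel is policy-independent. The only cosmetic difference is that you name the intermediate quantity $\bm{V}^\pi(x_t)$ where the paper writes $\bm{Q}^\pi(x_t,\pi(x_t))$ throughout.
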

\begin{proof}
    We may write $\bm{Q}^\pi(x,\pi'(x))\coloneqq \doubleE_{a\sim \pi'(x)}\sbr{\bm{Q}^\pi(x,a)}$.
    Now, observe
    \begin{align}
        \bm{J}(\pi')
        &=\sum_{t=1}^H\doubleE^{\pi'}\sbr{\bm{r}_t}
    \end{align}
    and
    \begin{align}
        \bm{J}(\pi)
        &=\bm{Q}^\pi(x_1,\pi(x_1))
        =\doubleE^{\pi'}\sbr{\bm{Q}^\pi(x_1,\pi(x_1))}
        \\
        &=\sum_{t=1}^H \doubleE^{\pi'}\sbr{\bm{Q}^\pi(x_t,\pi(x_t))-\bm{Q}^\pi(x_{t+1},\pi(x_{t+1}))},
    \end{align}
    where the last equality is due to $\bm{Q}^\pi(x_{H+1},\cdot)=0$.
    Taking the difference, we see
    \begin{align}
        \bm{J}(\pi)-\bm{J}(\pi')
        &= \sum_{t=1}^H \doubleE^{\pi'}\sbr{\bm{Q}^\pi(x_t,\pi(x_t))-\bm{r}_t-\bm{Q}^\pi(x_{t+1},\pi(x_{t+1}))}
        \\
        &=\sum_{t=1}^H \doubleE^{\pi'}\sbr{\bm{Q}^\pi(x_t,\pi(x_t))-\bm{Q}^\pi(x_t,\pi'(x_t))}
    \end{align}
    where the last equality follows
    from $\bm{Q}^\pi(x_t, a_t)= \doubleE^\pi\sbr{\bm{r}_t + \bm{Q}^\pi(x_{t+1},\pi(x_{t+1}))|x_t, a_t}$.
\end{proof}

\begin{corollary}
    \label{cor:policy_gradient_app}
    Suppose \cref{asm:bounded_reward_app} holds.
    Let $\pi_\theta:\Xcal\to\Delta(\Acal)$ be a policy associated with a parametrized density $p_\theta(a|x)$, $\theta\in\Theta\subset \doubleR^d$, whose score function $\elldot_\theta(a|x)\coloneqq \nabla_\theta \ln p_\theta(a|x)$ is bounded in the sense $\doubleE^{\pi_\theta}\sbrinline{\sup_{\theta'\in U}\norminline{\elldot_{\theta'}(a|x)}_2} <\infty$ for some $U$ being a neighborhood of $\theta$.
    Then, we have
    \begin{align}
        \nabla_\theta \bm{J}(\pi_\theta)
        &=\sum_{t=1}^H \doubleE^{\pi_\theta}\sbr{\bm{Q}^{\pi_\theta}(x_t,a_t)\elldot_\theta(a_t|x_t)}.
    \end{align}
\end{corollary}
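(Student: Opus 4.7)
The plan is to derive the policy gradient identity by applying the log-derivative trick at the trajectory level and then using a conditioning argument to replace the full return with the action-value function.

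Write $\bm{J}(\pi_\theta) = \int \hat{\bm{R}}(\tau)\, p_\theta(\tau)\, \drm\tau$, where the trajectory density factorizes as $p_\theta(\tau) = p(s_1)\prod_{t=1}^H p_\theta(a_t\mid x_t)\, P_T(\bm{r}_t, s_{t+1}\mid s_t, a_t)$, so only the policy factors depend on $\theta$. Interchanging gradient and integral and using $\nabla_\theta p_\theta(\tau) = p_\theta(\tau)\sum_{t=1}^H \elldot_\theta(a_t\mid x_t)$ gives
\begin{align*}
\nabla_\theta \bm{J}(\pi_\theta) = \sum_{t=1}^H \doubleE^{\pi_\theta}\!\left[\hat{\bm{R}}\,\elldot_\theta(a_t\mid x_t)\right].
\end{align*}

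Next, for each $t$, I would split $\hat{\bm{R}} = \sum_{h<t}\bm{r}_h + \sum_{h\ge t}\bm{r}_h$. The past-reward sum is $x_t$-measurable, so conditioning on $x_t$ and invoking the zero-mean identity $\doubleE_{a\sim\pi_\theta(x_t)}[\elldot_\theta(a\mid x_t)] = \nabla_\theta\!\int p_\theta(a\mid x_t)\drm a = 0$ makes its contribution vanish. For the future-reward sum, the tower property combined with $\bm{Q}^{\pi_\theta}(x_t, a_t) = \doubleE^{\pi_\theta}[\sum_{h\ge t}\bm{r}_h\mid x_t, a_t]$ yields $\doubleE^{\pi_\theta}[\sum_{h\ge t}\bm{r}_h\,\elldot_\theta(a_t\mid x_t)] = \doubleE^{\pi_\theta}[\bm{Q}^{\pi_\theta}(x_t,a_t)\,\elldot_\theta(a_t\mid x_t)]$. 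Summing over $t$ produces the desired identity.

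The main obstacle is rigorously justifying the interchange of $\nabla_\theta$ and the trajectory integral in the first step. My plan is a dominated-convergence argument: by the mean value theorem, the difference quotient $h^{-1}\hat{\bm{R}}(\tau)[p_{\theta+hv}(\tau)-p_\theta(\tau)]$ is bounded pointwise by $H\|v\|_2\, p_{\theta'}(\tau)\sum_t\|\elldot_{\theta'}(a_t\mid x_t)\|_2$ for some $\theta'$ on the segment from $\theta$ to $\theta+hv$, using \cref{asm:bounded_reward_app} to bound $\hat{\bm{R}}$ componentwise by $H$. Restricting $h$ so that the segment lies in the neighborhood $U$ of the hypothesis, the local bound $\doubleE^{\pi_\theta}[\sup_{\theta'\in U}\|\elldot_{\theta'}(a\mid x)\|_2] < \infty$, combined with a uniform bound on the likelihood ratio $p_{\theta'}/p_\theta$ on small neighborhoods, produces an $L^1(\drm\tau)$ envelope. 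Once this interchange is validated, the remainder of the proof is the algebraic manipulation outlined above.
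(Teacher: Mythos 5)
Your proof is correct in its algebraic core but follows a genuinely different route from the paper's. The paper derives the identity from the performance difference lemma (\cref{lem:performance_difference_app}): it sets $\pi=\pi_{\theta+\omega\lambda}$ and $\pi'=\pi_\theta$, divides by $\omega$, and lets $\omega\to 0$, so the outer expectation always stays under the \emph{fixed} measure $\doubleP^{\pi_\theta}$ and only a single one-step action density $p_{\theta+\omega\lambda}(\cdot\mid x_t)$ is differentiated in each term; dominated convergence is invoked once, for the inner action integral, using the boundedness of $\bm{Q}^{\pi_\theta}$ (from \cref{asm:bounded_reward_app}) together with the local score bound. Your REINFORCE-style derivation instead differentiates the full trajectory density and then removes the non-causal reward terms via the zero-mean score identity and the tower property; that part is impeccable (the past rewards are indeed $x_t$-measurable since the context contains them) and lands on the same reward-to-go form of $\bm{Q}^{\pi_\theta}$. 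The trade-off is in the interchange step: because your mean-value envelope involves $p_{\theta'}(\tau)$ for $\theta'$ on the segment, dominating it relative to $\doubleP^{\pi_\theta}$ requires controlling the product of $H$ likelihood ratios $\prod_{t} p_{\theta'}(a_t\mid x_t)/p_\theta(a_t\mid x_t)$, and the ``uniform bound on the likelihood ratio'' you invoke for this is not among the corollary's hypotheses --- the stated score-integrability condition is an expectation under $\doubleP^{\pi_\theta}$ only. The paper's route sidesteps this because the performance difference lemma has already transferred everything onto the fixed measure, so only a one-step density difference quotient must be dominated. To keep your trajectory-level argument under the stated hypotheses, either add the likelihood-ratio condition explicitly or reroute the final interchange through \cref{lem:performance_difference_app} as the paper does.
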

\begin{proof}
    Let $\omega>0$ and fix $\lambda \in\doubleR^d$ arbitrarily.
    Set $\pi=\pi_{\theta+\omega \lambda}$ and $\pi'=\pi_\theta$,
    and let $\nu$ be the base measure on $\Acal$
    relative to which $p_\theta(a|s)$ is defined.
    Now, divide both sides of \cref{eq:performance_difference_app} by $\omega$,
    and take the limit $\omega\to0$ to obtain
    \begin{align}
        \lambda^\top \nabla_\theta \bm{J}(\pi_\theta)
        &=\sum_{t=1}^H \lim_{\omega\to0}\doubleE^{\pi_\theta}\sbr{\int \bm{Q}^{\pi_\theta}(x_t,a)\frac{p_{\theta+\omega \lambda}(a|x_t)-p_\theta(a|x_t)}{\omega}\drm \nu(a)}
        \\
        &=\sum_{t=1}^H \doubleE^{\pi_\theta}\sbr{\int \bm{Q}^{\pi_\theta}(x_t,a)p_\theta(a|x_t)\lambda^\top\elldot_\theta(a|x_t)\drm \nu(a)},
    \end{align}
    where the last equality is owing to the interchange of the expectation and the limit enabled by the dominated convergence theorem.
    Now, the desired result is shown since $\lambda$ is arbitrary.
\end{proof}

\section{Proofs of \cref{theorem:safety,theorem:optimality_safeopt}}
\label{proof:stageopt}

\begin{lemma}
\label{lemma:srinvas_gp}
Pick $\Delta \in (0, 1)$ and set $\alpha_{\diamond, j} = \sqrt{2 \log (|\Zcal| \, j^2 \pi^2 / (6 \Delta))}$ for $\diamond \in \{r, g\}$.
Then, 
\begin{equation}
    |J_\diamond(\pi_{\bm{z}}) - \mu_{\diamond, j}(\bm{z})| \le \alpha_{\diamond, j} \cdot \sigma_{\diamond, j}(\bm{z}) \quad \forall \bm{z} \in \Zcal \quad \forall j \ge 1
\end{equation}
holds with a probability at least $1-\Delta$. 
\end{lemma}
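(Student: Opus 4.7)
The plan is to follow the classical argument of Srinivas et al.\ (2009, Lemma 5.1), which couples a pointwise Gaussian tail bound with a union bound over the finite domain $\Zcal$ and over the iteration index $j\ge 1$. Fix $\diamond\in\{r,g\}$ and condition on the observed data at iteration $j$. Under the GP prior and Gaussian observation model introduced just before the lemma, the conditional distribution of $J_{\diamond}(\pi_{\bm{z}})$ given the history up to step $j$ is exactly $\mathcal{N}(\mu_{\diamond,j}(\bm{z}),\sigma^2_{\diamond,j}(\bm{z}))$. The lemma is thus purely a statement about how well a union of Gaussian confidence intervals cover the truth.

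For a single $\bm{z}$ and a single $j$, the standard Gaussian tail inequality
\begin{equation*}
\Pr\!\left[\,|X-\mu|>\alpha\sigma\,\right]\le e^{-\alpha^2/2},\qquad X\sim\mathcal{N}(\mu,\sigma^2),
\end{equation*}
applied to $X=J_\diamond(\pi_{\bm{z}})$ with $\alpha=\alpha_{\diamond,j}$ gives
\begin{equation*}
\Pr\!\left[\,|J_\diamond(\pi_{\bm{z}})-\mu_{\diamond,j}(\bm{z})|>\alpha_{\diamond,j}\sigma_{\diamond,j}(\bm{z})\,\right]\le \exp\!\left(-\tfrac{\alpha_{\diamond,j}^2}{2}\right).
\end{equation*}
The choice $\alpha_{\diamond,j}=\sqrt{2\log(|\Zcal|\, j^2\pi^2/(6\Delta))}$ is tailored so that the right-hand side equals $6\Delta/(|\Zcal|\,j^2\pi^2)$.

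The next step is the double union bound. Summing the per-$\bm{z}$ failure probability over $\bm{z}\in\Zcal$ produces the factor $|\Zcal|$, yielding a bound of $6\Delta/(j^2\pi^2)$ on the event that $|J_\diamond(\pi_{\bm{z}})-\mu_{\diamond,j}(\bm{z})|>\alpha_{\diamond,j}\sigma_{\diamond,j}(\bm{z})$ for some $\bm{z}\in\Zcal$ at iteration $j$. Summing this over $j\ge 1$ and using $\sum_{j=1}^\infty j^{-2}=\pi^2/6$ produces a total failure probability of at most
\begin{equation*}
\sum_{j=1}^\infty \frac{6\Delta}{j^2\pi^2}=\frac{6\Delta}{\pi^2}\cdot\frac{\pi^2}{6}=\Delta,
\end{equation*}
so the complementary event---validity of all confidence intervals simultaneously---holds with probability at least $1-\Delta$. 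This completes the argument.

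I do not expect any serious obstacle: the proof is essentially bookkeeping, and the only substantive ingredient is the pointwise Gaussianity of the posterior, which is guaranteed by the GP model assumed in the preceding subsection. The only place to be careful is that the tail bound must be applied \emph{conditionally} on the history (so that $\mu_{\diamond,j}$ and $\sigma_{\diamond,j}$ are treated as deterministic for that step), and that the statement is proved separately for each $\diamond\in\{r,g\}$ using the independent GP priors posited in the text; a further union bound over $\diamond$ would cost at most a factor of $2$ in $\Delta$ if a simultaneous statement for both functions were needed, but that is not required here.
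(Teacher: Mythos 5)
Your proposal is correct and is exactly the argument the paper relies on: the paper's proof simply cites Lemma 5.1 of Srinivas et al.\ (2009), and your write-up reproduces that lemma's proof (conditional Gaussian tail bound, union bound over $\Zcal$, and the $\sum_{j\ge 1} j^{-2} = \pi^2/6$ union bound over iterations) with the constants matching the paper's choice of $\alpha_{\diamond,j}$. Your closing remarks about conditioning on the history and treating each $\diamond\in\{r,g\}$ separately are also consistent with how the paper states and uses the lemma.
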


\begin{proof}
    See Lemma 5.1 and its proof in \citet{srinivas2009gaussian}.
\end{proof}

\begin{lemma}
    \label{lemma:r_t_square}
    Pick $\Delta \in (0, 1)$ and set $\alpha_{\diamond, j} = \sqrt{2 \log (|\Zcal| \, j^2 \pi^2 / (6 \Delta))}$ for $\diamond \in \{r, g\}$.
    Then, the following inequality holds:
    \begin{equation}
        \sum_{j=1}^{N} \left(J_{\diamond}(\pi_{\bm{z}^\star}) - J_{\diamond}(\pi_{\bm{z}_j})\right)^2 \le \frac{8}{\log(1 + \nu_{\diamond}^{-2})} \cdot \alpha^2_{\diamond, {N}} \xi_{\diamond, {N}}
    \end{equation}
    with a probability at least $1-\Delta$, where $N$ is the number of iterations in the reward maximization phase.
\end{lemma}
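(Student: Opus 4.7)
The plan is to follow the standard GP-UCB regret analysis of \citet{srinivas2009gaussian}, adapted to our safe exploration setting. Throughout, we condition on the high-probability event from \cref{lemma:srinvas_gp}, which holds with probability at least $1-\Delta$ and gives $|J_\diamond(\pi_{\bm{z}}) - \mu_{\diamond, j-1}(\bm{z})| \le \alpha_{\diamond, j} \sigma_{\diamond, j-1}(\bm{z})$ simultaneously for all $\bm{z} \in \Zcal$ and $j \ge 1$.

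The first step is to bound the instantaneous regret $r_j \coloneqq J_\diamond(\pi_{\bm{z}^\star}) - J_\diamond(\pi_{\bm{z}_j})$ by the posterior standard deviation at the selected point. Since in the reward maximization phase $\bm{z}_j$ is chosen as the UCB maximizer over $\Ycal_j$, and since the optimal feasible target $\bm{z}^\star$ lies in $\Ycal_j$ (guaranteed with high probability by the safe exploration analysis, following \citet{sui2015safe}), we get
\begin{equation*}
  J_\diamond(\pi_{\bm{z}^\star})
  \le \mu_{\diamond, j-1}(\bm{z}^\star) + \alpha_{\diamond, j}\sigma_{\diamond, j-1}(\bm{z}^\star)
  \le \mu_{\diamond, j-1}(\bm{z}_j) + \alpha_{\diamond, j}\sigma_{\diamond, j-1}(\bm{z}_j)
  \le J_\diamond(\pi_{\bm{z}_j}) + 2\alpha_{\diamond, j}\sigma_{\diamond, j-1}(\bm{z}_j),
\end{equation*}
so that $r_j^2 \le 4\alpha_{\diamond, j}^2 \sigma_{\diamond, j-1}^2(\bm{z}_j) \le 4\alpha_{\diamond, N}^2 \sigma_{\diamond, j-1}^2(\bm{z}_j)$, using that $\alpha_{\diamond, j}$ is monotonically non-decreasing in $j$.

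The second step converts the sum of posterior variances into the maximum information gain. Using the elementary inequality $s^2 \le \frac{\nu_\diamond^2}{\log(1+\nu_\diamond^{-2})}\log(1+\nu_\diamond^{-2}s^2)$ valid for $s^2 \in [0,1]$ (which applies because the kernel is normalized so $\sigma_{\diamond, j-1}^2(\bm{z}_j) \le k_\diamond(\bm{z}_j,\bm{z}_j) \le 1$), we obtain
\begin{equation*}
  \sum_{j=1}^{N} \sigma_{\diamond, j-1}^2(\bm{z}_j)
  \le \frac{1}{\log(1+\nu_\diamond^{-2})}\sum_{j=1}^{N}\log\bigl(1+\nu_\diamond^{-2}\sigma_{\diamond, j-1}^2(\bm{z}_j)\bigr)
  \le \frac{2\,\xi_{\diamond, N}}{\log(1+\nu_\diamond^{-2})},
\end{equation*}
where the last inequality uses the standard identity $\tfrac{1}{2}\sum_{j=1}^{N}\log(1+\nu_\diamond^{-2}\sigma_{\diamond,j-1}^2(\bm{z}_j)) = I(\bm{y}_{\diamond,N};\bm{J}_N) \le \xi_{\diamond, N}$ relating the sequence of conditional variances to the mutual information, which is in turn dominated by the maximum information gain. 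Combining the two displays yields the claimed bound.

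The main obstacle is conceptual rather than computational: we must ensure that the reasoning still applies in our two-phase algorithm, namely that $\bm{z}^\star$ is contained in the pessimistic safe set $\Ycal_j$ throughout the reward maximization phase. This piggybacks on the safe exploration expansion argument of \citet{sui2015safe}, which shows that after $N_\dagger$ exploration steps the safe set reaches its maximal reachable extent, and then the standard GP-UCB analysis transfers verbatim with the supremum taken over $\Ycal_j$ instead of $\Zcal$. Everything else is routine once the kernel is normalized so that $k_\diamond(\bm{z},\bm{z}) \le 1$.
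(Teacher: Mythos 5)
Your proof is correct and takes essentially the same route as the paper: the paper's proof is a one-line citation of Lemma 5.4 in \citet{srinivas2009gaussian}, and you have simply unpacked that argument (instantaneous regret bounded by $2\alpha_{\diamond,j}\sigma_{\diamond,j-1}(\bm{z}_j)$ via the UCB selection rule and the confidence event, then the variance-to-information-gain conversion), landing on the same constant $8/\log(1+\nu_\diamond^{-2})$. One small nit: your prose statement of the elementary inequality should be $s^2 \le \frac{1}{\log(1+\nu_\diamond^{-2})}\log\bigl(1+\nu_\diamond^{-2}s^2\bigr)$ (the extra $\nu_\diamond^2$ factor makes it false for $\nu_\diamond<1$), which is in fact the form your subsequent display correctly uses.
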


\begin{proof}
    This lemma directly follows from Lemma 5.4 in \citet{srinivas2009gaussian}.
\end{proof}

\subsection{Proof of \cref{theorem:safety}}

\begin{proof}
    \algo~chooses the next target returns $\bm{z}$ such that
    \begin{equation}
        u_{g, j}(\bm{z}) + L \cdot d(\bm{z}, \bm{z}') \le b.
    \end{equation}
    By \cref{lemma:srinvas_gp} and the Lipschitz continuity, we have
    \begin{align}
        u_{g, j}(\bm{z}) + L \cdot d(\bm{z}, \bm{z}') 
        & \ge J_{g}(\pi_{\bm{z}}) + L \cdot d(\bm{z}, \bm{z}') \\
        & \ge J_{g}(\pi_{\bm{z}'}).
    \end{align}
    Therefore, we obtained the desired theorem.
\end{proof}

\subsection{Proof of Theorem~\ref{theorem:optimality_safeopt}}
\begin{proof}
    We first define an one-step reachability operator with a certain margin $\zeta \in \doubleR_+$ as 
    \begin{alignat}{2}
        \widehat{Z}_\zeta(Y) \coloneqq Y \cup \bigl\{\bm{z} \in \Zcal \mid \exists \bm{z}' \in Y, J_g(\bm{z}') + \zeta + L d(\bm{z}', \bm{z}) \leq b \bigr\}.
    \end{alignat}
    Then, we can obtain the following reachable set after $N$ iterations:
    \begin{alignat}{2}
        \widehat{Z}_\zeta^N(\Zcal_0) \coloneqq \underbrace{\widehat{Z}_\zeta(\widehat{Z}_\zeta\ldots(\widehat{Z}_\zeta}_{N\ \mathrm{times}}(\Zcal_0))\ldots).
    \end{alignat}
    Here, the optimal target return $\bm{z}^\star$ in this paper can now be defined as
    \begin{alignat}{2}
        \bm{z}^\star \coloneqq \argmax_{\bm{z} \in \widehat{Z}_\zeta^\infty(\Zcal_0)} J_r(\pi_{\bm{z}}).
    \end{alignat}
    
    Based on Theorem 1 in \citet{sui2018stagewise}, it is guaranteed that 1) the safe exploration phase in \algo~fully expands the predicted safe set (with some margin $\zeta$) and 2) $\zeta$-optimal target return vector $\bm{z}^\star$ exists within the safe set, after at most $N_\dagger$ GP samples.
    Note that $N_\dagger$ is defined as the smallest positive integer satisfying 
    \begin{align}
        \frac{N_\dagger}{\alpha^2_{g, N_\dagger} \xi_{g, N_\dagger}} \ge \frac{C_\dagger (|\widehat{Z}_0^\infty(\Zcal_0)| + 1)}{\zeta^2},
    \end{align}
    where $C_\dagger \in \doubleR_+$ is a positive constant.
    
    The following proof mostly follows from that of Theorem 2 in \citet{sui2018stagewise}, but there are differences in how to construct the confidence intervals.
    Specifically, for the compatibility with \cref{thm:main_main}, we cannot assume that the functions are endowed with reproducing kernel Hilbert space (RKHS), which leads to a different bound in terms of optimality.
    
    The reward maximization phase in \algo~chooses the next sample using the upper confidence bound in terms of reward within the fully expanded safe region.
    Thus, by the Cauchy-Schwarz inequality, we have
    \begin{align}
        \left(\sum_{j=1}^{N} \left(J_{r}(\pi_{\bm{z}^\star}) - J_{r}(\pi_{\bm{z}_j})\right)\right)^2 \le N \cdot \sum_{j=1}^{N} \left(J_{r}(\pi_{\bm{z}^\star}) - J_{r}(\pi_{\bm{z}_j})\right)^2
    \end{align}
    By combining the above inequality with \cref{lemma:r_t_square}, we have
    \begin{align}
        \left(\sum_{j=1}^{N} \left(J_{r}(\pi_{\bm{z}^\star}) - J_{r}(\pi_{\bm{z}_j})\right)\right)^2 
        &\le N \cdot \frac{8}{\log(1 + \nu_{\diamond}^{-2})} \cdot \alpha^2_{r, {N}} \xi_{r, {N}} \\
        &= \frac{16 N \xi_{r, {N}}}{\log(1 + \nu_r^{-2})} \log \left(\frac{  |\Zcal| \pi^2 {N}^2}{6 \Delta}\right).
    \end{align}
    Given $N_\sharp$ be the smallest positive integer $N$ such that
    \begin{align}
    4 \sqrt{\frac{\xi_{r, {N}}}{N \log(1 + \nu_r^{-2})} \log \left(\frac{  |\Zcal| \pi^2 {N}^2}{6 \Delta}\right)} \le \Ecal,
    \end{align}
    we then have
    \begin{equation}
        \label{eq:average_regret}
        \frac{1}{{N_\sharp}}\sum_{j=1}^{N_\sharp} \bigl(J_{r}(\pi_{\bm{z}^\star}) - J_{r}(\pi_{\bm{z}_j})\bigr) \le \Ecal.
    \end{equation}
    The LHS of \cref{eq:average_regret} represents the average regret.
    Thus, there exists $\hat{\bm{z}} \in \Zcal$ in the samples such that $J_r(\pi_{\hat{\bm{z}}}) \ge J_r(\pi_{\bm{z}^\star}) - \Ecal$.
\end{proof}

\section{Experiment Details and Additional Results}
\label{appendix:experiment_details}

\subsection{Computational Resources}
\label{subsec:compute_resources}

Our experiments were conducted in a workstation with Intel(R) Xeon(R) Silver 4316 CPUs@2.30GHz and 1 NVIDIA A100-SXM4-80GB GPUs.

\subsection{Hyperparameters}
\label{app:exp-hyperparameters}

We use the OSRL library\footnote{\url{https://github.com/liuzuxin/OSRL}} for implementing most of the baseline algorithm.
We leverage the default hyperparameters used in the OSRL library for the baselines.
For CCAC, we use the authors' implementation\footnote{\url{https://github.com/BU-DEPEND-Lab/CCAC}}.
For baselines, we use Gaussian policies with mean vectors given as the outputs of neural networks, and with variances that are separate learnable parameters. 
The policy networks and Q networks for all experiments have two hidden layers with ReLU activation functions. 
The $K_P, K_I$ and $K_D$ are the PID parameters \cite{stooke2020responsive} that control the Lagrangian multiplier for the Lagrangian-based algorithms (i.e., BCQ-Lag and BEAR-Lag).
We use the same $10^5$ gradient steps and rollout length which is the maximum episode length for CDT and baselines for fair comparison.
Specifically, we set the rollout length to 500 for Ant-Circle, 200 for Ant-Run, 300 for Car-Circle and Drone-Circle, 200 for Drone-Run, and 1000 for Velocity.
The safe cost thresholds for baselines are 20 and 40 across all the tasks.
The hyperparameters used in the experiments are shown in Table \ref{tab:exp-parameters}. 

\begin{table}[ht]
\begin{center}
\caption{Hyperparameters for BCQ-Lag, BEAR-Lag, CPQ, COptiDICE, and CCAC.}
\label{tab:exp-parameters}
\renewcommand{\arraystretch}{1.1}
\resizebox{1.\linewidth}{!}{
\begin{tabular}{cccccc}
\toprule
Parameter & BCQ-Lag & BEAR-Lag & CPQ & COptiDICE & CCAC \\
\midrule
Actor hidden size & \multicolumn{5}{c}{[256, 256]}                       \\
Critic hidden size & \multicolumn{5}{c}{[256, 256]}                        \\
\midrule
VAE hidden size & [400, 400] & [400, 400] & [400, 400] & -- & \makecell{ [512, 512, \\64, 512, 512]} \\
\midrule
$[K_P, K_I, K_D]$                 & [0.1, 0.003, 0.001] &  [0.1, 0.003, 0.001] & -- & -- & -- \\
\midrule
Batch size                         & 512 & 512 & 512 & 512 & \makecell{512,\\ 2048 (Velocity)}  \\
\midrule
Actor learning rate                & 1.0e-3 & 1.0e-3 & 1.0e-4 & 1.0e-4 & 1.0e-4   \\
Critic learning rate               & 1.0e-3 & 1.0e-3 & 1.0e-3 & 1.0e-4 & 1.0e-3    \\
\bottomrule
\end{tabular}
}
\end{center}
\end{table}

Moreover, we will present hyperparameters specifically used for the CDT and \algo~that are based on return-conditioned supervised learning, in \cref{tab:exp-parameters_cdt}.
The experimental settings are same as the original authors' implementation of CDT.

\begin{table}[ht]
\begin{center}
\caption{Hyperparameters common for CDT and \algo.}
\label{tab:exp-parameters_cdt}
\begin{tabular}{cc}
\toprule
Parameter                 & All tasks \\
\midrule
Number of layers          & 3  \\
Number of attention heads & 8 \\
Embedding dimension       & 128 \\
Batch size                & 2048 \\
Context length $K$        & 10  \\
Learning rate             & 0.0001  \\
Droupout                  & 0.1  \\
Adam betas                & (0.9, 0.999)  \\
Grad norm clip            & 0.25 \\ 
\bottomrule
\end{tabular}
\end{center}
\end{table}

\newpage

We now summarize the hyperparameters related to GPs in safe exploration and reward maximization phases in \algo.
We set the number of episodes for each policy evaluation as $\varpi = 20$ for all tasks.
We use GPs with radial basis function (RBF) kernels: one for the reward and one for the safety cost.
We set the lengthscales of the reward as $50$ for Bullet-Safety-Gym tasks and $100$ for Safety-Gymnasium Velocity tasks. 
The length-scales for the safety cost is set to be $5.0$ for all tasks.
While variances for the reward are $1.0$ for Bullet-Safety-Gym tasks and $100$ for Safety-Gymnasium Velocity tasks, those for the safety cost are $1.0$ for all tasks.
Finally, following \citet{turchetta2016safe} or \citet{sui2018stagewise}, we set the Lipschitz constant $L=0$.

Other important experimental settings include how to set a initial safe set $\Zcal_0$ associated with \cref{asm:initial_safe}.
\cref{tab:target_return_pls_bullet,tab:target_return_pls_velocity} summarize our experimental settings regarding the initial safe set of target returns.

\begin{table}[ht]
\begin{center}
\caption{Safe target return range ($\Zcal_0$) for \algo~(Bullet-Safety-Gym).}
\label{tab:target_return_pls_bullet}
\begin{tabular}{cccccc}
\toprule
Parameter & Ant-Circle & Ant-Run & Car-Circle & Drone-Circle & Drone-Run  \\
\midrule
Reward & [250, 300] & [700, 750] & [400, 475] & [700, 720] & [400, 450]  \\
Safety & [0, 5] & [0, 5] & [0, 5] & [0, 5] & [0, 5]  \\
\bottomrule
\end{tabular}
\end{center}
\end{table}

\begin{table}[ht]
\begin{center}
\caption{Safe target return range ($\Zcal_0$) for \algo~(Safety-Gymnasimum Velocity).}
\label{tab:target_return_pls_velocity}
\begin{tabular}{ccccc}
\toprule
Parameter & Ant & HalfCheetah & Hopper & Walker2d  \\
\midrule
Reward & [2000, 2300] & [200, 2300] & [1200, 1500] & [2000, 2400] \\
Safety & [0, 5] & [0, 5] & [0, 5] & [0, 5]   \\
\bottomrule
\end{tabular}
\end{center}
\end{table}

\subsection{Additional Experimental Results}

We present additional experimental results for a different threshold $b = 40$ in \cref{tab:exp-results_40}.
Note that, as for \algo~and CDT, the return-conditioned policy in \cref{tab:exp-results_40} is same as that in \cref{tab:exp-results_20}.
The only difference regarding \algo~between \cref{tab:exp-results_20,tab:exp-results_40} is the target returns as a result of our target returns optimization algorithm.

Observe that the experimental results in \cref{tab:exp-results_40} exhibit similar tendency to those in \cref{tab:exp-results_20}.
More specifically, in both cases of $b=20$ and $b=40$, \algo~is the only method that satisfies the safety constraint in all tasks, while every baseline algorithm violates the safety constraint in at least one task.
Moreover, \algo~obtains the highest reward return in most tasks, which demonstrates its higher performance in terms of reward and safety.

\begin{table*}[t]
\centering
\caption{Evaluation results for the case with the safety cost threshold $40$. We computed the mean and standard deviation by running each algorithm five times. Reward and cost are normalized; thus, the normalized cost limit is $1.0$.
\textbf{Bold}: Safe agents whose normalized cost is smaller than 1.
\red{Red}: Unsafe agents.
\blue{Blue}: Safe agent with the highest reward.
}
\label{tab:exp-results_40}
\renewcommand{\arraystretch}{1.1}
\resizebox{1.\linewidth}{!}{
\begin{tabular}{cccccccccc}
\toprule
Task & Metric &  BCQ-Lag & BEAR-Lag & CPQ & COptiDICE & CDT & CCAC & \algo \\
\midrule
\multirow{2}{*}{Ant-Run} & Reward $\uparrow$ & \red{0.76 $\pm$ 0.14} & \textbf{0.02 $\pm$ 0.02} & \textbf{0.02 $\pm$ 0.01} & \textbf{0.63 $\pm$ 0.05} & \red{0.72 $\pm$ 0.03} & \textbf{0.02 $\pm$ 0.01} & \blue{0.70 $\pm$ 0.02} \\
& Safety cost $\downarrow$ & \red{2.34 $\pm$ 0.61} & \textbf{0.05 $\pm$ 0.03} & \textbf{0.00 $\pm$ 0.00} & \textbf{0.56 $\pm$ 0.34} & \red{1.10 $\pm$ 0.00} & \textbf{0.00 $\pm$ 0.00} & \blue{0.54 $\pm$ 0.09} \\
\midrule
\multirow{2}{*}{Ant-Circle} & Reward $\uparrow$ & \red{0.78 $\pm$ 0.16} & \red{0.63 $\pm$ 0.25} & \textbf{0.00 $\pm$ 0.00} & \red{0.17 $\pm$ 0.14} & \textbf{0.53 $\pm$ 0.00} & \red{0.62 $\pm$ 0.14} & \blue{0.55 $\pm$ 0.00} \\
& Safety cost $\downarrow$ & \red{2.54 $\pm$ 0.87} & \red{2.15 $\pm$ 1.38} & \textbf{0.00 $\pm$ 0.00} & \red{2.50 $\pm$ 2.81} & \textbf{0.79 $\pm$ 0.00} & \red{1.13 $\pm$ 0.44} & \blue{0.82 $\pm$ 0.00} \\
\midrule
\multirow{2}{*}{Car-Circle} & Reward $\uparrow$ & \red{0.79 $\pm$ 0.10} & \red{0.84 $\pm$ 0.09} & \textbf{0.73 $\pm$ 0.03} & \red{0.49 $\pm$ 0.04} & \blue{0.80 $\pm$ 0.00} & \textbf{0.77 $\pm$ 0.02} & \textbf{0.80 $\pm$ 0.02} \\
& Safety cost $\downarrow$ & \red{1.58 $\pm$ 0.38} & \red{1.75 $\pm$ 0.37} & \textbf{0.86 $\pm$ 0.04} & \red{1.44 $\pm$ 0.72} & \blue{0.99 $\pm$ 0.05} & \textbf{0.86 $\pm$ 0.04} & \textbf{0.93 $\pm$ 0.06} \\
\midrule
\multirow{2}{*}{Drone-Run} & Reward $\uparrow$ & \red{0.68 $\pm$ 0.12} & \red{0.87 $\pm$ 0.09} & \red{0.19 $\pm$ 0.10} & \red{0.69 $\pm$ 0.02} & \textbf{0.60 $\pm$ 0.03} & \red{0.57 $\pm$ 0.00} & \blue{0.62 $\pm$ 0.04} \\
& Safety cost $\downarrow$ & \red{2.34 $\pm$ 0.64} & \red{3.04 $\pm$ 0.61} & \red{2.41 $\pm$ 0.34} & \red{1.64 $\pm$ 0.10} & \textbf{0.89 $\pm$ 0.11} & \red{1.73 $\pm$ 0.01} & \blue{0.91 $\pm$ 0.09} \\
\midrule
\multirow{2}{*}{Drone-Circle} & Reward $\uparrow$ & \red{0.92 $\pm$ 0.05} & \red{0.78 $\pm$ 0.06} & \textbf{-0.27 $\pm$ 0.01} & \textbf{0.28 $\pm$ 0.03} & \blue{0.69 $\pm$ 0.00} & \textbf{0.16 $\pm$ 0.27} & \textbf{0.68 $\pm$ 0.01} \\
& Safety cost $\downarrow$ & \red{2.31 $\pm$ 0.24} & \red{1.69 $\pm$ 0.31} & \textbf{0.20 $\pm$ 0.67} & \textbf{0.29 $\pm$ 0.24} & \blue{1.00 $\pm$ 0.00} & \textbf{0.71 $\pm$ 0.49} & \textbf{0.96 $\pm$ 0.03} \\
\midrule
\multirow{2}{*}{Ant-Velocity} & Reward $\uparrow$ & \red{1.01 $\pm$ 0.01} & \textbf{-1.01 $\pm$ 0.00} & \textbf{-1.01 $\pm$ 0.00} & \red{1.00 $\pm$ 0.01} & \textbf{0.97 $\pm$ 0.01} & \textbf{0.60 $\pm$ 0.39} & \blue{0.99 $\pm$ 0.00} \\
& Safety cost $\downarrow$ & \red{2.25 $\pm$ 0.29} & \textbf{0.00 $\pm$ 0.00} & \textbf{0.00 $\pm$ 0.00} & \red{3.35 $\pm$ 0.74} & \textbf{0.81 $\pm$ 0.44} & \textbf{0.68 $\pm$ 0.29} & \blue{0.49 $\pm$ 0.05} \\
\midrule
Walker2d & Reward $\uparrow$ & \textbf{0.78 $\pm$ 0.00} & \red{0.91 $\pm$ 0.03} & \textbf{-0.01 $\pm$ 0.00} & \textbf{0.13 $\pm$ 0.01} & \textbf{0.79 $\pm$ 0.00} & \red{0.84 $\pm$ 0.02} & \blue{0.83 $\pm$ 0.00} \\
-Velocity & Safety cost $\downarrow$ & \textbf{0.30 $\pm$ 0.13} & \red{4.05 $\pm$ 1.31} & \textbf{0.00 $\pm$ 0.00} & \textbf{0.90 $\pm$ 0.10} & \textbf{0.00 $\pm$ 0.00} & \red{3.49 $\pm$ 0.43} & \blue{0.00 $\pm$ 0.00} \\
\midrule
HalfCheetah & Reward $\uparrow$ & \red{1.04 $\pm$ 0.02} & \red{0.98 $\pm$ 0.04} & \textbf{0.01 $\pm$ 0.22} & \textbf{0.63 $\pm$ 0.01} & \textbf{0.97 $\pm$ 0.03} & \red{0.85 $\pm$ 0.01} & \blue{1.00 $\pm$ 0.01} \\
-Velocity & Safety cost $\downarrow$ & \red{14.10 $\pm$ 3.46} & \red{6.34 $\pm$ 5.46} & \textbf{0.10 $\pm$ 0.11} & \textbf{0.00 $\pm$ 0.00} & \textbf{0.05 $\pm$ 0.11} & \red{1.22 $\pm$ 0.09} & \blue{0.01 $\pm$ 0.00} \\
\midrule
Hopper & Reward $\uparrow$ & \red{0.85 $\pm$ 0.19} & \red{0.40 $\pm$ 0.21} & \red{0.23 $\pm$ 0.00} & \textbf{0.05 $\pm$ 0.07} & \textbf{0.67 $\pm$ 0.03} & \textbf{0.60 $\pm$ 0.17} & \blue{0.84 $\pm$ 0.00} \\
-Velocity & Safety cost $\downarrow$ & \red{5.30 $\pm$ 3.85} & \red{6.08 $\pm$ 3.09} & \red{2.75 $\pm$ 0.04} & \textbf{0.46 $\pm$ 0.17} & \textbf{0.56 $\pm$ 0.56} & \textbf{0.60 $\pm$ 0.63} & \blue{0.20 $\pm$ 0.03} \\
\bottomrule 
\end{tabular}
}
\end{table*}

In addition, we provide \cref{fig:safe_exploraiton} to show how our \algo~explores target returns $\bm{z}$. 
Please observe that \algo~guarantees safety in most of policy deployment. 
Moreover, even if safety constraint is violated, \algo~quickly recovers to meet the safety requirement.
\begin{figure*}[ht]
    \begin{subfigure}[b]{0.32\textwidth}
        \centering
        \includegraphics[width=\textwidth]{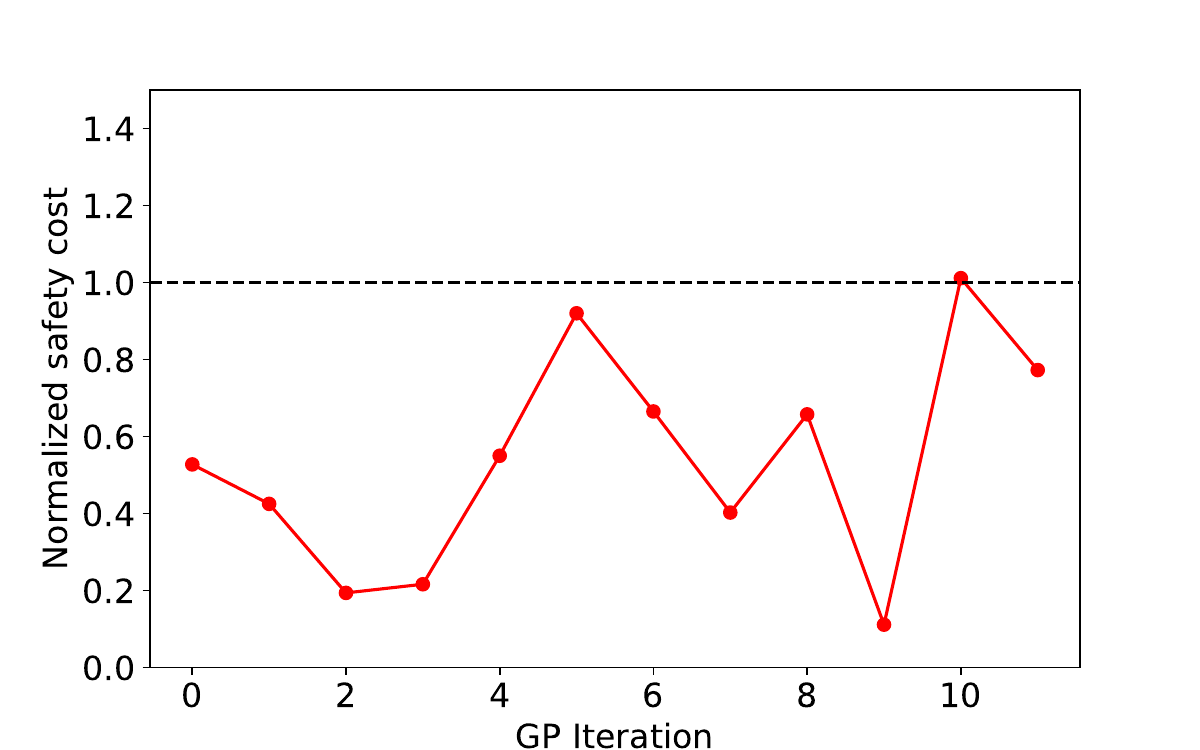}
        \caption{HopperVelocity}
        \label{fig:se_hopper}
    \end{subfigure}
    \hfill
    \begin{subfigure}[b]{0.32\textwidth}
        \centering
        \includegraphics[width=\textwidth]{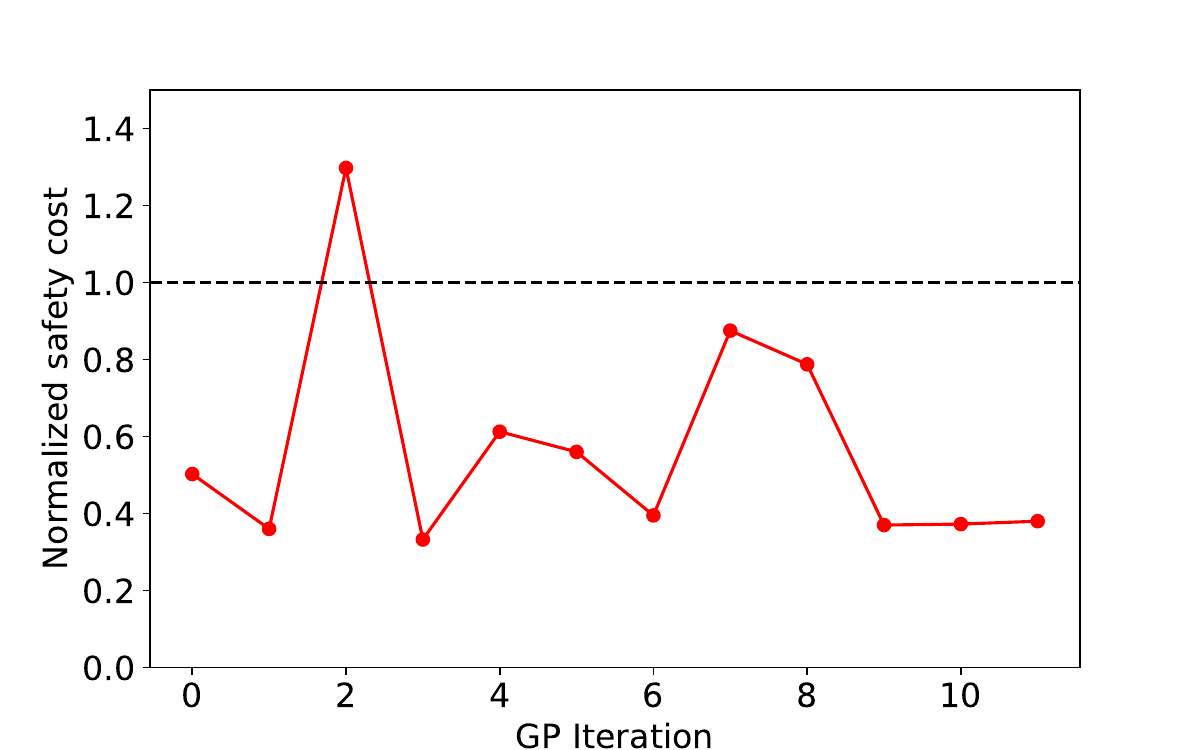}
        \caption{AntRun}
        \label{fig:se_antrun}
    \end{subfigure}
    \hfill
    \begin{subfigure}[b]{0.32\textwidth}
        \centering
        \includegraphics[width=\textwidth]{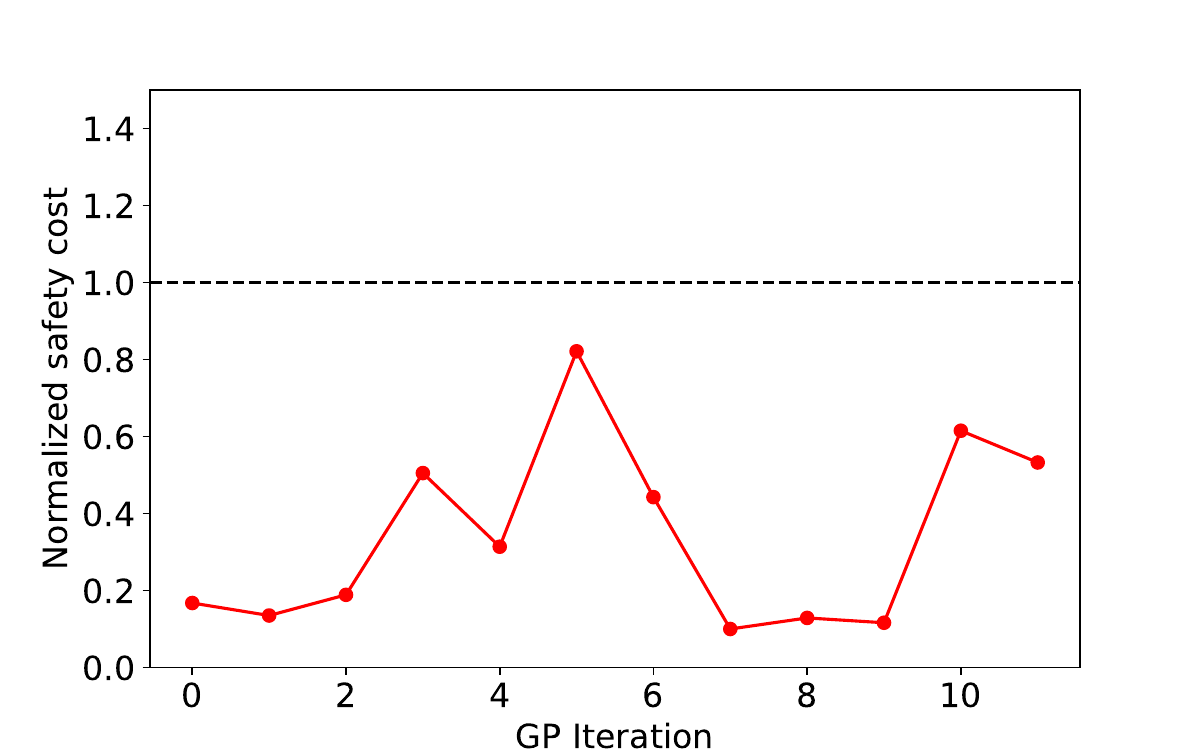}
        \caption{AntVelocity}
        \label{fig:se_antvelocity}
    \end{subfigure}
        \begin{subfigure}[b]{0.32\textwidth}
        \centering
        \includegraphics[width=\textwidth]{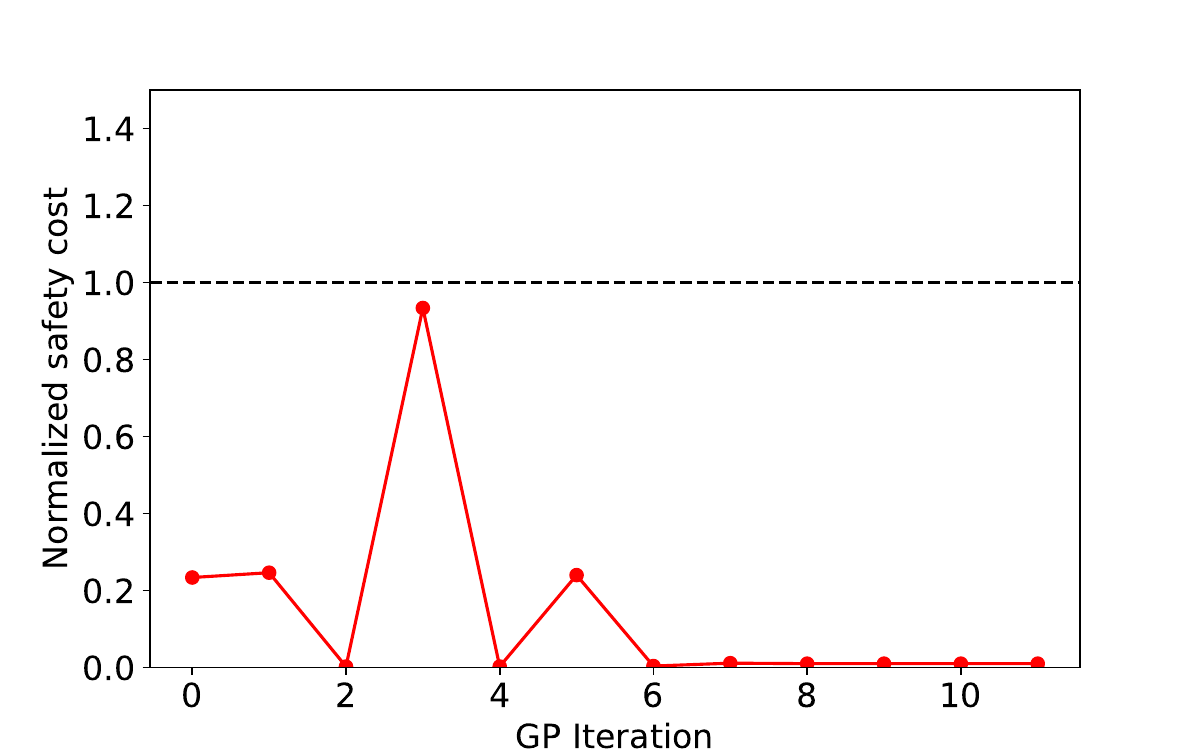}
        \caption{HalfCheetahVelocity}
        \label{fig:se_cheetah}
    \end{subfigure}
    \hfill
    \begin{subfigure}[b]{0.32\textwidth}
        \centering
        \includegraphics[width=\textwidth]{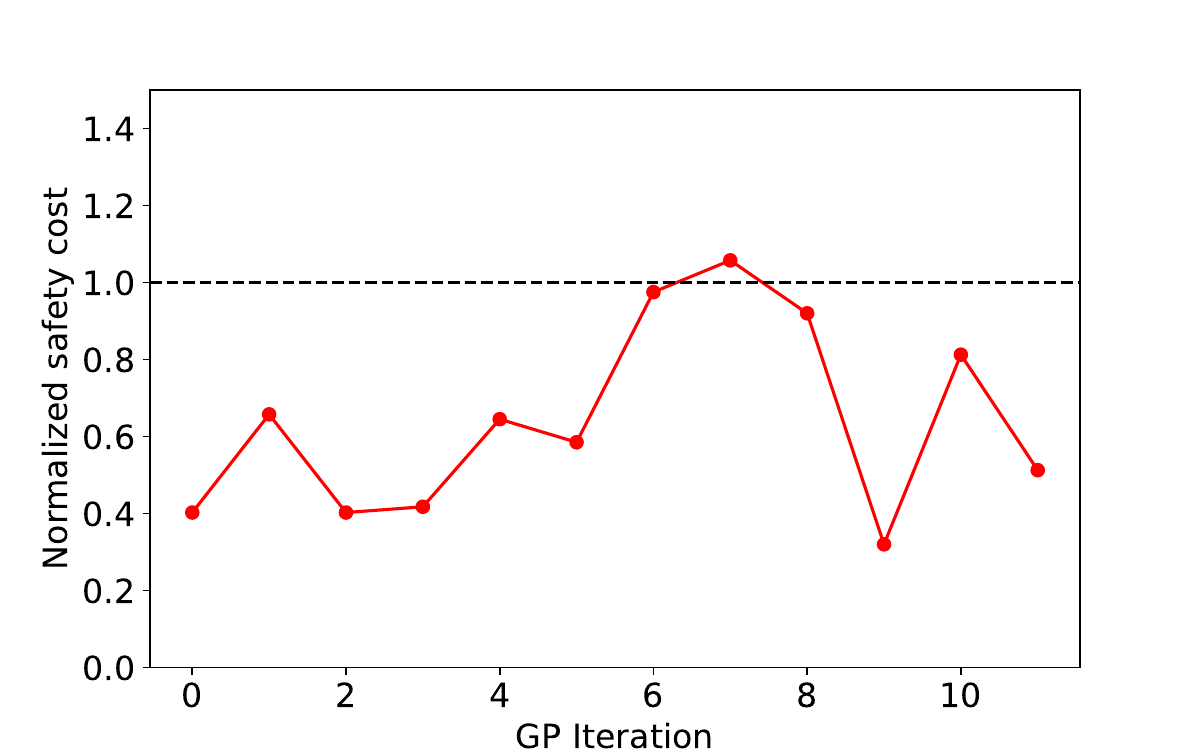}
        \caption{CarCircle}
        \label{fig:se_carcircle}
    \end{subfigure}
    \hfill
    \begin{subfigure}[b]{0.32\textwidth}
        \centering
        \includegraphics[width=\textwidth]{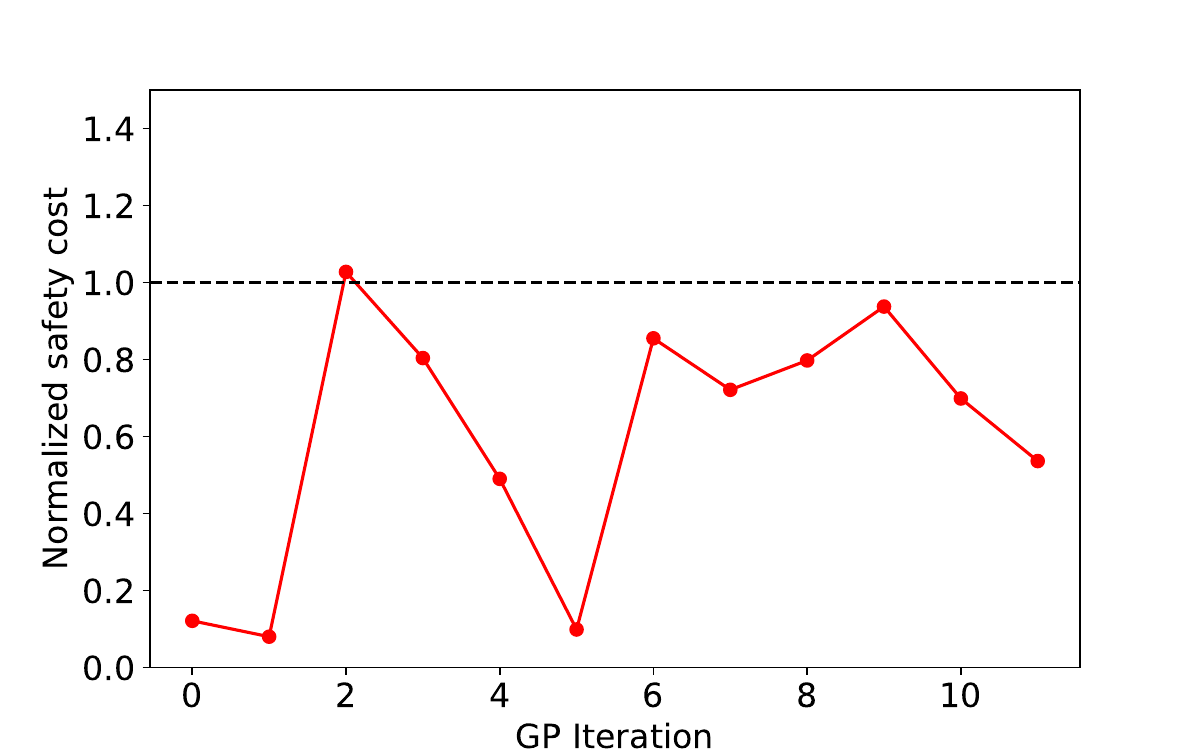}
        \caption{DroneCircle}
        \label{fig:se_dronecircle}
    \end{subfigure}
    \caption{Experimental results on how our \algo~ensures the satisfaction of the safety constraint while obtaining new GP observations. Black dotted lines represent the normalized safety threshold.}
    \label{fig:safe_exploraiton}
\end{figure*}

\end{document}